\sffamily\fontsize{10}{13}\selectfont\centering 
\let\cite\citep
\useunder{\uline}{\ul}{}
\newtheorem{theorem}{Theorem} % 定理
\newtheorem{definition}{Definition} % 定义
\newtheorem{lemma}{Lemma} % 引理
\newtheorem{proposition}{Proposition}
\newtheorem{hypothesis}{Hypothesis}
\Crefname{section}{Sec.}{Secs.}
\Crefname{table}{Tab.}{Tabs.}
\Crefname{equation}{Eq.}{Eqs.}
\Crefname{figure}{Fig.}{Figs.}
\Crefname{lemma}{Lemma}{Lemmas}
\Crefname{theorem}{Theorem}{Theorems}
\Crefname{definition}{Definition}{Definitions}
\Crefname{hypothesis}{Hypothesis}{Hypothesises}
\def\wrt{\textit{w.r.t.} }
\def\ie{\textit{i.e.}}
\def\eg{\textit{e.g.}}
\def\etal{\textit{et al.}}
\def\CO{\textit{CO}}
\def\CM{\textit{CM}}
\def\MU{\textit{MU}}
\def\LS{\textit{LS}}
\def\SAM{\textit{SAM}}
\def\IR{\textit{IR}}
\def\JR{\textit{JR}}
\def\ER{\textit{ER}}
\def\AT{\textit{AT}}
\def\ST{\textit{ST}}
\def\security19{Demontis \textit{et al}. \cite{security19}}
\def\SP22{Mao \textit{et al}. \cite{maoP2022TransferAttacks2022}}
\def\nips21{Yang  \etal ~\cite{yangNeurIPS2021TRS2021}}
\newcommand{\black}[1]{\textcolor{black}{#1}}
\def\BibTeX{{\rm B\kern-.05em{\sc i\kern-.025em b}\kern-.08em
    T\kern-.1667em\lower.7ex\hbox{E}\kern-.125emX}}
\begin{document}
\pagenumbering{arabic} 

\title{Why Does Little Robustness Help?  A Further Step Towards Understanding Adversarial Transferability}

\author{
\IEEEauthorblockN{ 
Yechao Zhang\IEEEauthorrefmark{1}\textsuperscript{1,2,3,4},
Shengshan Hu\IEEEauthorrefmark{1}\textsuperscript{1,2,3,4}, 
Leo Yu Zhang\IEEEauthorrefmark{2},
Junyu Shi\IEEEauthorrefmark{1}\textsuperscript{1,2,3,4}\\
Minghui Li\IEEEauthorrefmark{3},
Xiaogeng Liu\IEEEauthorrefmark{1}\textsuperscript{1,2,3,4}, 
Wei Wan\IEEEauthorrefmark{1}\textsuperscript{1,2,3,4}, 
and Hai Jin\IEEEauthorrefmark{4}\textsuperscript{1,2,5}
}
\IEEEauthorblockA{
\IEEEauthorrefmark{1}
 School of Cyber Science and Engineering,
 Huazhong University of Science and Technology\\
}

\IEEEauthorblockA{
\IEEEauthorrefmark{3}
School of Software Engineering, 
Huazhong University of Science and Technology \
}

\IEEEauthorblockA{
\IEEEauthorrefmark{2}
School of Information and Communication Technology,
Griffith University\
}

\IEEEauthorblockA{
\IEEEauthorrefmark{4}
School of Computer Science and Technology, 
Huazhong University of Science and Technology \\
}
\textsuperscript{1}National Engineering Research Center for Big Data Technology and System\\
\textsuperscript{2}Services Computing Technology and System Lab \
\textsuperscript{3}Hubei Key Laboratory of Distributed System Security \\
\textsuperscript{4}Hubei Engineering Research Center on Big Data Security \ \textsuperscript{5}Cluster and Grid Computing Lab\\
{\tt \footnotesize \{ycz,hushengshan,shijunyu220,minghuili,liuxiaogeng,weiwan\_0303,hjin\}@hust.edu.cn, leo.zhang@griffith.edu.au
}

}

\maketitle
\thispagestyle{fancy}
\pagestyle{plain}

\begin{abstract}
Adversarial examples for \textit{deep neural networks} (DNNs) have been shown to be transferable: examples that successfully fool one white-box surrogate model can also deceive other black-box models with different architectures.
Although a bunch of empirical studies have provided guidance on generating highly transferable adversarial examples, many of these findings fail to be well explained and even lead to confusing or inconsistent advice for practical use.  

In this paper, we take a further step towards understanding adversarial transferability, with a particular focus on surrogate aspects. 
Starting from the intriguing ``little robustness'' phenomenon, where models adversarially trained with mildly perturbed adversarial samples can serve as better surrogates for transfer attacks, we attribute it to a trade-off between two dominant factors: model smoothness and gradient similarity. 
Our research focuses on their joint effects on transferability, rather than demonstrating the separate relationships alone.
Through a combination of theoretical and empirical analyses, we hypothesize that the data distribution shift induced by off-manifold samples in adversarial training  is the reason that impairs gradient similarity. 

Building on these  insights, we further explore the impacts of prevalent data augmentation and gradient regularization  on transferability and analyze how the  trade-off  manifest  in  various training methods, thus building a  comprehensive blueprint for the regulation mechanisms behind  transferability. 
Finally, we provide a general route for constructing superior surrogates to boost transferability, which optimizes both model smoothness and gradient similarity simultaneously, \eg, the combination of input gradient regularization and \textit{sharpness-aware minimization} (\textit{SAM}), validated by extensive experiments.  
In summary, we call for attention to the united impacts of these two factors for launching effective transfer attacks,  rather than optimizing one while ignoring the other, and emphasize the crucial role of  manipulating surrogate models.

\end{abstract}

%%%%%%%%% BODY TEXT
\section{Introduction} \label{sec:intro}

Adversarial transferability is an intriguing property of adversarial examples (AEs), where examples crafted against a surrogate  DNN could fool other DNNs as well.
Various techniques~\cite{Naseer_2021_ICCV,Dong_2019_CVPR,NEURIPS2020_00e26af6,Dong_2018_CVPR,Li_2020_CVPR,zhao2021on,xie2019improving,zhou2023advencoder,hu2022protecting} 
have been proposed for the generation process of AEs to increase transferability\footnote{\black{We refer to ``adversarial transferability'' as ``transferability''}.}, such as integrating momentum to stabilize the update direction~\cite{Dong_2018_CVPR} or applying transformations at each iteration to create diverse input patterns~\cite{xie2019improving}.
At a high level, all  these research endeavors to find transferable AEs under given surrogate models by proposing complex tricks to be incorporated into the generation pipeline one after another. However, this approach could result in non-trivial computational expenses and low scalability~\cite{qin2022boosting}.

With a different perspective, another line of works~\cite{pmlr-v180-gubri22a,gubriLGV2022,Wu2020Skip,li2020learning} start to examine the role of surrogate models. 
One useful observation is that attacking an ensemble of surrogate models with different architectures can  obtain a  more general update direction for AEs \cite{liu2017delving,chen2023rethinking}.
A recent study \cite{gubriLGV2022} also proposed fine-tuning a well-trained surrogate to obtain a set of intermediate models that can be used for  an ensemble. 
However, it remains unclear which type of surrogate performs the best and should be included in the ensemble. 
Our work taps into this line and tries to manipulate surrogates to launch stronger transfer attacks.

Meanwhile, \security19 started to explain the transferability property and showed that model complexity and gradient alignment  negatively and positively correlate with transferability, respectively. 
Lately, Yang  \etal ~\cite{yangNeurIPS2021TRS2021} established a  transferability lower bound and   theoretically connect transferability to two key factors: model smoothness  and gradient similarity. 
Model smoothness captures the general invariance of the gradient \wrt input features for a given model, while gradient similarity refers to the alignment of the gradient direction between surrogate and target models. 
However, there is still much to be understood and explored regarding these factors. For instance, it is unclear which factor plays a more important role in regulating transferability, how existing empirical findings for improving transferability affect these factors, and how to generally optimize them both simultaneously.
Such a complex situation makes it challenging to completely understand the role of surrogates.
Consequently, there is currently no consensus on how to construct better surrogates to achieve higher adversarial transferability. 
In support of this, a recent work~\cite{maoP2022TransferAttacks2022}, after performing a large-scale empirical study in real-world settings, concluded that surrogate-level factors that affect the transferability highly chaotically interact and there is no effective solution to obtain a good surrogate except through trial and error.

\begin{table*}[htbp!]
\caption{\black{The overview of our interactions with literature in the field. }}
\resizebox{\textwidth}{!}
{
\begin{tabular}{p{8.2cm}p{9.2cm}p{1.6cm}}
\toprule
\textbf{Existing conclusions and viewpoints}     & \textbf{Our observations and inferences} & \textbf{Relation}                                                                                                                                                             \\ \midrule
Stronger regularized (smoother)  models provide  better surrogates on average \cite{security19}.                                    & (1)   \AT ~with large budget yields smoother models that  degrade transferability. In \cref{sec:adv-transfer}.  (2) Stronger regularizations cannot always outperform  less smooth solutions like \SAM. In \cref{trade-off-GR}.  &  Partly \ conflicting \\
\hline
\AT ~and data augmentation do not show strong correlations to  transfer attacks in the ``real-world" environment \cite{maoP2022TransferAttacks2022}.                            & (1)   \AT ~with small budget benefits transfer attack while large budget hinders it.   (2) Data augmentation generally impairs transfer attacks, especially for   stronger augmentations.    In \cref{analysis}, \textbf{Q6}.            &  Conflicting    \\
\hline
Surrogate models with better generalization performance could result in more transferable AEs  \cite{wang2022role}.                                                 & 
Data augmentations that yield surrogates with the best generalization perform the  worst in transfer attacks. In \cref{analysis}, \textbf{Q4}.    &  Conflicting                                                                                      \\
\hline
Attacking multiple surrogates from a sufficiently large geometry vicinity (LGV)  benefits transferability \cite{gubriLGV2022}.                            & 
Attacking  multiple surrogates from arbitrary LGV of a single  superior surrogate may degrade transferability. 
% A dense subspace obtained by a superior training  mechanism could be more desirable. 
In \cref{surrogate-independent}.   & Partly \ conflicting       \\
\hline
% Flatness in the weight space agrees with the smoothness in the input space, and 
Regularizing pressure transfers from the weight space to the input space.   \cite{Dherin2022why}.   & 
% Gradient  regularization in the weight space indeed yields smoother models in the input space, 
This transfer effect exists, yet is  marginal and  unstable. In \cref{sec:GR-smoothness}.                                                      & Partly \ conflicting              \\
\hline
The poor transferability of ViT is because existing attacks are not strong enough to fully exploit its potential \cite{naseer2022on}.                                             & 
The transferability of ViT may have been restrained by its default training paradigm.   In \cref{analysis}, \textbf{Q5} & Parallel \\                                                                                                                   
\hline
Model complexity (the number of local optima in loss surface) correlates with transferability \cite{security19}. & 
% Model smoothness correlates with the local optima, 
A smoother model is expected to have less and  wider local optima in a finite space. In \cref{analysis}, \textbf{Q3}  & Causal \\
\hline
AEs lie off the underlying manifold of clean data \cite{gilmer2018adversarial}. & Adversarial training causes data distribution shift induced by off-manifold AEs, thus impairing gradient similarity.  & Dependent \\
\hline
(1) Attacking an ensemble of surrogates in the distribution found by \textit{Bayes} learning  improves the transferability \cite{li2023making}. 
(2) \SAM  ~can be seen as a relaxation of Bayes  \cite{mollenhoff2023sam}. 
& 
\SAM ~yields general input gradient alignment towards every training solution. Attacking \SAM ~solution significantly improves transferability.
% Attacking the single surrogate obtained by \SAM ~is expectantly equivalent to attacking an ensemble of surrogates found by Bayes because \SAM ~solution can be seen as the ``expectation'' of the distribution.
In \cref{analysis}, \textbf{Q7}. & Matching  \\ 
\bottomrule
\end{tabular}
}
\label{tab:contrary-conclusions}
% \vspace{-3mm}
\end{table*}

% \vspace{2mm}
In light of this, we aim to deepen our understanding of adversarial transferability and provide concrete guidelines for constructing better surrogates for improving transferability. 
Specifically, we offer the following contributions:

\textbf{Gaining a deeper understanding of the ``little robustness" phenomenon.} 
We begin by exploring the intriguing ``little robustness'' phenomenon~\cite{springer2021little}, where 
models produced by small-budget adversarial training (\AT)
%small-budget adversarially trained models 
serve as better surrogates than standard ones (see \cref{fig:adv-transfer}). 
Through tailored definitions of gradient similarity and model smoothness, we recognize this phenomenon  comes along with an inherent trade-off effect between these two factors on transferability.
Thus we attribute the “little robustness” appearance to the persistent improvement of model smoothness and deterioration of gradient similarity.
Further, we identify the observed gradient dissimilarity in \AT ~as the result of the data distribution shift caused by off-manifold samples.

%Moreover, we \black{identify \textit{(can we use this word?)}} that data distribution shift \sout{(DDS)} is a that \sout{regulates} \black{impairs} gradient similarity.
%\black{Moreover, we identify the data distribution shift induced by off-manifold samples as the rationale of the gradient dissimilarity in \AT.}
% Furthermore, we attribute the observed gradient dissimilarity in \AT ~to the data distribution shift caused by off-manifold samples.

\textbf{Investigating the impact of data augmentation on transferability.}
\black{
Beyond \AT, we explore more general distribution shift cases to confirm our hypothesis on data distribution shift impairing gradient similarity.}
Specifically, we exploit four popular data augmentation methods, Mixup (\MU), Cutmix (\CM), Cutout (\CO), and Label smoothing (\LS) to investigate how these different kinds of distribution shifts affect the gradient similarity.
Extensive experiments reveal they unanimously impair gradient  similarity in different levels, and the degradation generally aligns with the augmentation magnitude. 
Moreover, we find different augmentations affect smoothness differently. 
While \LS ~benefits both datasets,  other methods mostly downgrade smoothness in CIFAR-10  and show chaotic performance in ImageNette. 
Despite their different complex trade-off effects, they consistently yield worse transferability.
%Extensive experiments show that data augmentations unanimously impair gradient similarity since they all cause a \sout{conspicuous} data distribution shift, leading to the degradation of transferability. 
% \sout{However, we also find that data augmentations do not degrade transferability in a monotone way. 
% We owe it to the fact that data augmentations will also damage model smoothness, but with different levels. 
% It clearly shows that data augmentations result in different levels of the trade-off between gradient similarity and model smoothness. }

\textbf{Investigating the impact of gradient regularization on transferability.} 
As the concept of gradient similarity is inherently tangled up with unknown target models, it is difficult to directly optimize it in the real scenario. Therefore, we explore solutions that presumably feature better smoothness without explicitly changing the data distribution, \ie, gradient regularization.
Concretely, we explore the impact of four gradient regularizations  on model smoothness:  input gradient regularization (\textit{IR}) and  input Jacobian regularization (\textit{JR}) in the input space, and    explicit gradient regularization (\textit{ER}) and  sharpness-aware minimization (\textit{SAM}) in the weight space.
Our extensive experiments show that gradient regularizations  universally improve model smoothness, with regularizations in the input space leading to faster and more stable improvement than  regularizations in the weight space. 
However, while input space regularizations (\JR, \IR) produce  better smoothness, they do not necessarily outperform  weight space regularizations like \SAM ~in terms of transferability. 
We \black{also find  this aligns with the fact that} they weigh differently on  gradient similarity, which  plays another crucial role in the overall trade-off.
% The underlying reason lies in the fact that \SAM ~has a distinctive benefit on gradient similarity,  
% Based on the above conclusions, we build a complete blueprint for the factors that regulate transferability, as depicted in \cref{fig:factors}.

% To this end, we explore how  gradient regularization  methods, input gradient regularization (\textit{IR}), input Jacobian regularization (\textit{JR}), explicit gradient regularization (\textit{ER}), and sharpness-aware minimization (\textit{SAM}),  to demonstrate their regulation path on transferability. 
% Besides, we find that regularization in the input space (\eg, \textit{IR}), through features better smoothness, cannot exhibit absolute superiority  over regularization in the weight space (\eg, \textit{SAM}), which is contrary to the conclusion in \security19 that stronger regularization thus smoother model  should exhibit better transferability. 

\textbf{Proposing a general route for generating surrogates.} 
Considering the practical black-box scenario, we propose a general route for the adversary to obtain surrogates with both superior smoothness and  similarity. 
\black{This involves first assessing the smoothness of different training mechanisms and comparing their similarities, then developing generally effective strategies by combining the best elements of each.}
Our experimental results show that input regularization and \textit{SAM} excel at promoting  smoothness and  similarity,  respectively, as well as other complementary properties. 
Thus, we propose \textit{SAM}\&\textit{JR} and \textit{SAM}\&\textit{IR} to obtain better surrogates.
Extensive results show our methods significantly outperform existing solutions in boosting transferability. 
Moreover, we  showcase that the design is a plug-and-play method that can be directly integrated  with surrogate-independent methods to further improve transferability. 
The transfer attack results on  three commercial Machine-Learning-as-a-Service (MLaaS) platforms also demonstrate the effectiveness of our method against real-world deployed DNNs.
% \sout{strong data augmentations generally exhibit lower ASRs than baselines and \AT ~can indeed impact real-world deployed DNNs, which is contrary to the conclusion in \SP22 claiming that adversarial training and data augmentation has no strong correlations to transfer attacks.} \textit{already in table}.

% {\color{red}
% \textbf{A comprehensive interaction connecting recent studies in the field.} 
% Throughout the paper, we examine a range of conclusions and viewpoints on the study of transferability by connecting them to the empirical observations and theoretical or intuitive analysis provided. 
% Resultantly, our experimental results disaccord most of them respectively with different levels.
% In addition, we also make our inferences built upon the others and provide corresponding  experimental support.
%  provides a glance of them.  
% Our investigation exemplifies the conflicting aspects of current research and further provides plausible explanations for some intriguing puzzles while revealing more unsolved issues in the study of transferability.
% }

In summary, we study  the complex trade-off  between model smoothness and gradient similarity (see \cref{fig:factors}) under various circumstances and identify a general alignment between  the trade-off and transferability.
We call for attention that, for effective surrogates, one  should handle these two factors well.
 \black{Throughout our study, we  also discover a range of observations correlating existing conclusions and viewpoints in the field, \cref{tab:contrary-conclusions} provides a glance at them and report the relations. 
 }

\begin{figure}[!htbp]
 \setlength{\belowcaptionskip}{-0.05cm}  
\centering
\includegraphics[width=0.45\textwidth]{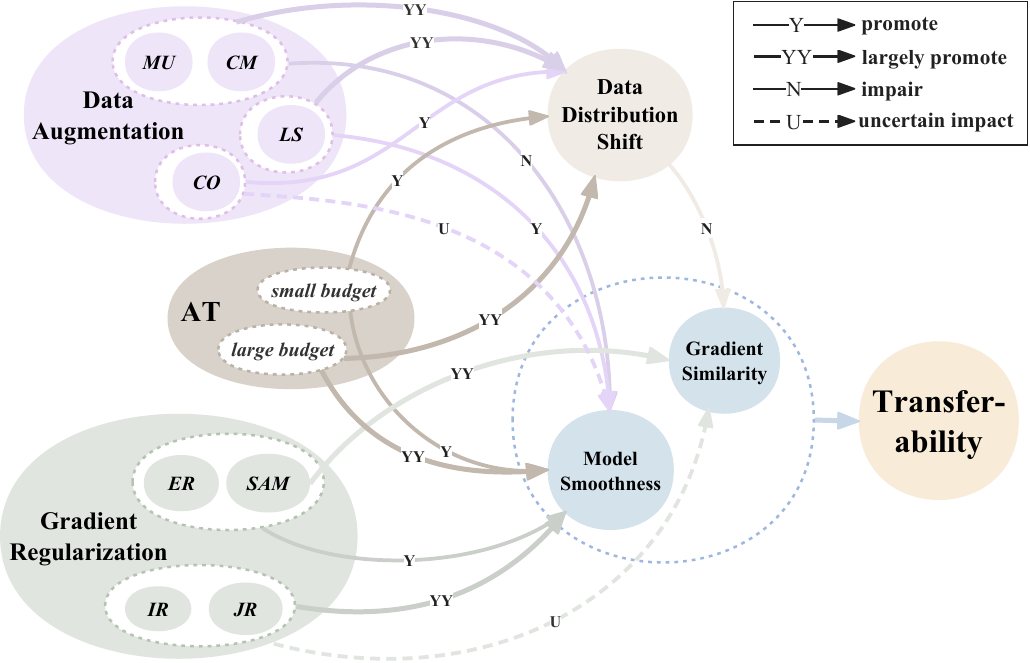}
\caption{A complete overview of the relationship between factors regulating adversarial transferability of our study.}
\label{fig:factors}
\end{figure}

% 在一个给定的模型下 使用很多tricks，
% 如果一个模型本身就具有很好的性质, where finding  highly transferable AEs intrinsically easy.

\begin{figure*}[!htbp] 
\centering
  \centering
    \subcaptionbox{CIFAR-10}{\includegraphics[width=0.475 \textwidth]{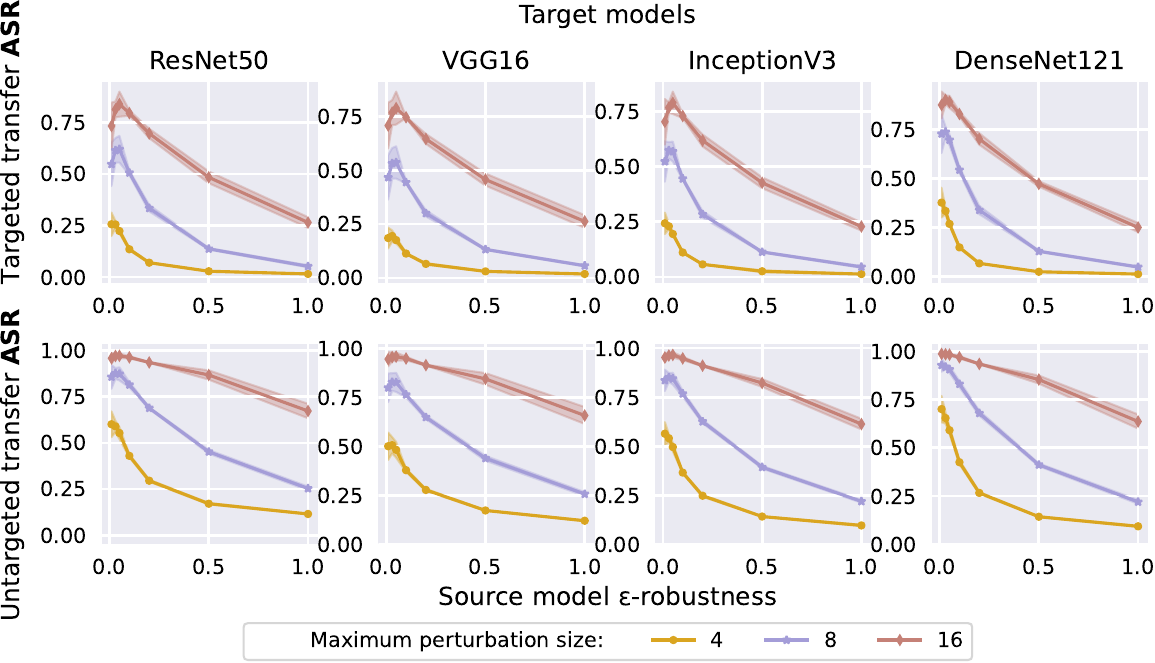}}
    \hfill
      \subcaptionbox{ImageNette}{\includegraphics[width=0.475\textwidth]{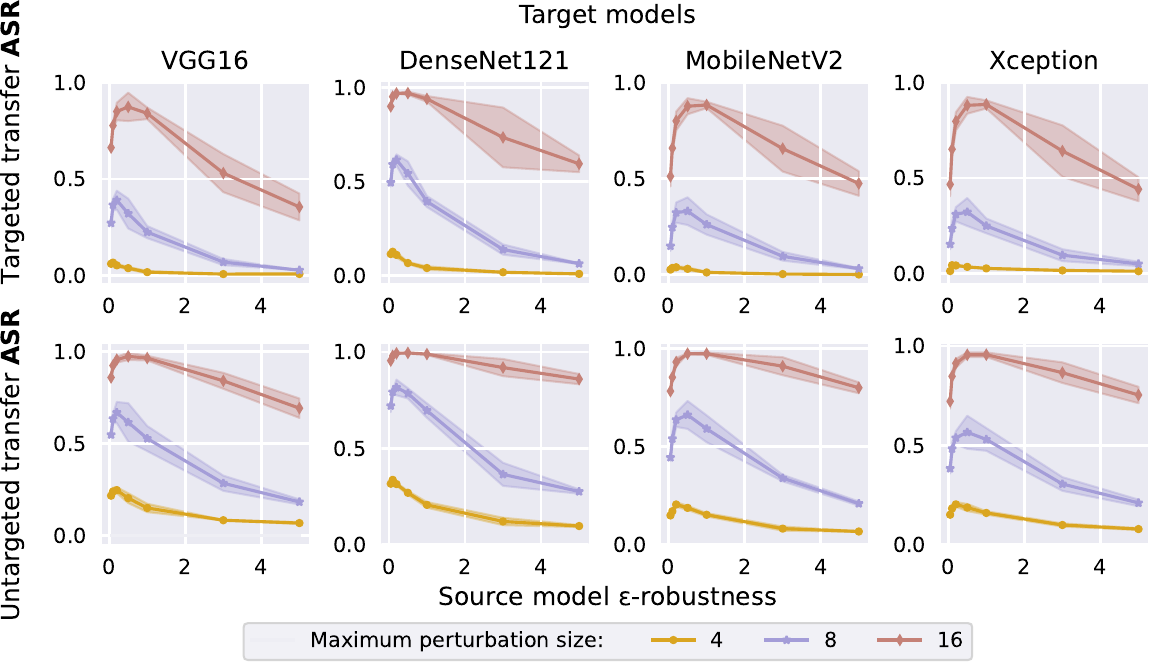}}
      \caption{Transfer \textit{attack success rates} (\textbf{ASRs}) against adversarially trained CIFAR-10   and ImageNette classifiers. \black{We plot the average results of different surrogates obtained by 3 random seeds and the corresponding error bars for each $\epsilon$.}}
\label{fig:adv-transfer}
\vspace{-2mm}
\end{figure*}

\section{Explaining Little Robustness}\label{sec:littlerob}
\black{Understanding adversarial transferability is still a challenging task, and many empirical observations in the literature are rather perplexing.}
As a typical case, \textit{``little robustness''} indicates that a model adversarially trained with small perturbation budgets improves the transferability~\cite{springer2021little,springer2021uncovering}. 
It remains unclear why small-budget adversarially trained models can serve as better surrogates, whereas large-budget ones do not exhibit this benefit.
In this section, we tailor the definitions of gradient similarity and model smoothness, two recently proposed concepts that are believed to be important for  transferability~\cite{yangNeurIPS2021TRS2021}, to analyze this phenomenon. 
%Specifically, we redefine the model smoothness to quantify the average flatness of a given model's loss landscape and then illustrate how adversarial training affects these two factors along with the increase in the adversarial budget. We conclude that little robustness is caused by the joint regulation of gradient similarity and model smoothness. Furthermore, we propose a new concept named data distribution shift which is the main factor that regulates gradient similarity. 

%In this section, we revisit the key factors that affect the transferability demonstrated by the recently proposed lower-bound~\cite{yangNeurIPS2021TRS2021}. Then we illustrate how adversarial training affects these factors along with the increment of adversarial budget, providing sufficient theoretical and experimental analysis. 

\noindent\textbf{Notations.} We denote the input space as $\mathcal{X}= \mathbb{R}^d$, and the output space as $\mathcal{Y}=\mathbb{R}^m$.
We also consider a standard classification training dataset $\mathbf{S}=\left\{\left(x_1, y_1\right), \cdots,\left(x_n, y_n\right)\right\}$, where $x_i \in \mathcal{M} \subset \mathcal{X} $ and $y_i \in \mathcal{L}=\{(0,1)^m\} \subset \mathcal{Y}$ are  \textit{identically and independently distributed} (i.i.d.) drawn from the normal data distribution $\mathcal{D} \in \mathcal{P}_{\mathcal{X \times Y}}$. $\mathcal{M}$ denotes the normal  (image)  feature manifold,  $\mathcal{D}$ is supported on $(\mathcal{M}, \mathcal{L})$ and $\mathcal{P}_{\mathcal{X \times Y}}$ is the set of distributions on $\mathcal{X \times Y}$. 
We further denote the marginal distribution on $\mathcal{X}$ and $\mathcal{Y}$ as $\mathcal{P}_{\mathcal{X}}$ and $\mathcal{P}_{\mathcal{Y}}$, respectively. 
The classification model can be viewed as a mapping function $ \mathcal{F}: \mathcal{X} \rightarrow \mathcal{L}$. Specifically,  given any input $x \in \mathcal{X}$, $\mathcal{F}$ will find an optimal match $\mathcal{F}(x)=\arg \min _{y \in \mathcal{L}} \ell_{\mathcal{F}}(x, y)$ with the hard labels $\mathcal{L}$, where $\ell_{\mathcal{F}}: \mathcal{X} \times \mathcal{Y} \rightarrow \mathbb{R}_{+}$ can be decomposed to a training loss $\ell$\footnote{We use the cross-entropy loss by default in the paper.} and the network's logits output $f(\cdot)$: $\ell_{\mathcal{F}}(x, y):=\ell(f(x), y),(x, y) \in(\mathcal{X}, \mathcal{Y}).$

In this paper, we denote the  surrogate model and target model as $\mathcal{F}$ and $\mathcal{G}$, respectively. 
Generally, both $\mathcal{F}$ and $\mathcal{G}$ can be trained within the data distribution $\mathcal{D}$ using $\mathbf{S}$, or a close yet different distribution $\mathcal{D}'$ obtained by a specific augmentation mechanism  on $\mathbf{S}$. 
Throughout this paper, we use $\mathcal{D}'$  to denote a  data distribution different from $\mathcal{D}$. 
Since this paper focuses on studying the transferability from the surrogate perspective, we assume $\mathcal{G}$ is also well-trained on $\mathcal{D}$ by default. 
We defer the analysis and  results when  $\mathcal{G}$ is trained on  
various $\mathcal{D}'$ to \cref{analysis}, \textbf{Q2}, where the results also support the conclusions drawn from the case of $\mathcal{D}$.

% \vspace{-0.1cm}
\subsection{Transferability Circuit of Adversarial Training}
\label{sec:adv-transfer}
Adversarial training~\cite{madry2018towards} (\AT) \black{uses adversarial examples generated on $\mathbf{S}$ as augmented data and minimizes} the following adversarial loss: 
\begin{equation} \label{loss_adv}
	L_{adv} = \frac{1}{\left\|\mathbf{S}\right\|}\sum\limits_{i=1}^{\left\|\mathbf{S}\right\|} \max\limits_{\left\|\delta\right\|_{2} < \epsilon} \ell(f(x_i+\delta),y_i),
\end{equation}
where $\delta \in \mathbb{R}^d$ denotes the adversarial perturbation and $\epsilon$ is the adversarial budget.
As evidenced by our experiments in~\cref{fig:adv-transfer}, the attack success rates (ASRs) rise with \black{relatively} small-budget adversarial trained models and start to decrease with the high-budget ones. 
This ``transferability circuit'' is particularly intriguing.
% \textbf{Q2:}\textit{ Why does the downgrade starts at different magnitude for different dataset?}
% \black{Moreover, the downgrade of ASRs generally starts earlier in CIFAR-10 than ImageNette, namely adversarial budget around  $\epsilon=0.03$ for CIFAR-10 whereas $\epsilon=1$ (see \cref{fig:adv-transfer}).}
We thus revisit the recently proposed transferability lower bound~\cite{yangNeurIPS2021TRS2021} and  utilize it as a tool  to  analyze  the underlying reason.

% \vspace{-0.2cm}
\subsection{Lower Bound of Transferability}\label{Transfer-Analysis}
We first re-define model smoothness, gradient similarity, and the transferability between two models, based on which a lower bound of transferability can be obtained. 

\begin{definition}[\textbf{Model smoothness}]
Given a model $\mathcal{F}$ and a data distribution $\mathcal{D}$, the smoothness of $\mathcal{F}$ on $\mathcal{D}$ is defined as $\sigma_{\mathcal{F}, \mathcal{D}} = \mathbb{E}_{(x,y) \sim \mathcal{D}}  [\sigma (\nabla_{x}^2 \ell_{\mathcal{F}}(x, y))]$, where $\sigma (\cdot)$ denotes the \textbf{dominant eigenvalue}, and $\nabla_{x}^2 \ell_{\mathcal{F}}(x, y) $ is the Hessian matrix \wrt $x$. 
We abbreviate $\sigma_{\mathcal{F}, \mathcal{D}}$ as $\sigma_{\mathcal{F}}$ for simplicity and  define the upper smoothness as $\overline{\sigma}_{\mathcal{F}} = \sup\limits_{(x, y) \sim \mathcal{D}} \sigma (\nabla_{x}^2 \ell_{\mathcal{F}}(x, y))$, where $\sup$ is the supremum function.
\end{definition}  

A smoother model on $\mathcal{D}$ is featured with smaller $\sigma_{\mathcal{F}}$ and   $\overline{\sigma}_{\mathcal{F}}$, 
\black{indicating more invariance of loss gradient}.
Different from its original definition  where the global Lipschitz constant $\beta_{\mathcal{F}}$ is used~\cite{yangNeurIPS2021TRS2021}, we use this curvature metric (\ie, the dominant eigenvalue of the Hessian) to define the model smoothness in a local manner. 
With this modified definition, we can  theoretically and empirically quantify the smoothness of  loss surface on $\mathcal{F}$ \black{for a given data distribution} $\mathcal{D}$. 
Moreover, this  provides  a tighter bound for  the model’s loss function gradient, namely, an explicit relation $\overline{\sigma}_{\mathcal{F}} \black{\leq} \beta_{\mathcal{F}}$ holds. 
Although the true distribution $\mathcal{D}$ is unknown, one can still sample a set of data from $\mathbf{S}$ and use the empirical mean as an  approximation to evaluate the model smoothness. 
Note that our definition is correlated to the model complexity concept introduced in \security19 (see \cref{analysis}, \textbf{Q3}).

\begin{definition}[\textbf{Gradient similarity}] \label{Definition-GS}
%Given an input-output pair $(x, y)$ and two models $\mathcal{F}$ and $\mathcal{G}$ with the loss functions $\ell_\mathcal{F}$ and $\ell_\mathcal{G}$, we define the sample-wise gradient similarity as
For two models $\mathcal{F}$ and $\mathcal{G}$ with the loss functions $\ell_\mathcal{F}$ and $\ell_\mathcal{G}$, the gradient similarity over  $(x, y)$ is defined as
$\mathcal{S}\left(\ell_{\mathcal{F}}, \ell_{\mathcal{G}}, x, y\right)= \frac{{\nabla_x \ell_{\mathcal{F}}(x, y)} \cdot \nabla_x \ell_{\mathcal{G}}(x, y) }{\left\|\nabla_x \ell_{\mathcal{F}}(x, y)\right\|_2 \cdot\left\|\nabla_x \ell_{\mathcal{G}}(x, y)\right\|_2}.$
% \vspace{-1mm}
\end{definition}

Given a distribution $\mathcal{D}$, we can further define the infimum of loss gradient similarity on $\mathcal{D}$ as $\underline{\mathcal{S}}\black{_\mathcal{D}}\left( \ell_\mathcal{F}, \ell_{\mathcal{G}}\right)=\inf \limits_{(x, y)\sim \mathcal{D}} \mathcal{S}\left(\ell_{\mathcal{F}}, \ell_{\mathcal{G}}, x, y\right) $. 
Similar to \security19, we  define the expected loss gradient similarity as $\tilde{\mathcal{S}}\black{_\mathcal{D}}(\ell_\mathcal{F}, \ell_\mathcal{G}) = \mathbb{E}_{(x, y) \sim \mathcal{D}}[\mathcal{S}\left(\ell_{\mathcal{F}}, \ell_{\mathcal{G}}, x, y\right)]$, which can capture the similarity between two models on $\mathcal{D}$.
\black{Naturally, a larger gradient similarity indicates a more general alignment between the adversarial directions of two given models.} 
We can sample a set of data from $\mathbf{S}$ to evaluate this alignment between two models on $\mathcal{D}$.

% \vspace{-0.1cm}
\begin{definition}[\textbf{Transferability}] 
%\vspace{-1mm}
\label{def:transferability}
Given a normal sample $(x,y)\in\mathcal{D}$, and a perturbed version $x+ \delta$   crafted against a surrogate model $\mathcal{F}$. 
The transferability $T_r$ between $\mathcal{F}$ and a target model $\mathcal{G}$ is defined as 
$T_{r}=\mathbb{I}\left[\mathcal{F}(x)=\mathcal{G}(x)=y \wedge \mathcal{F}\left(x+ \delta\right) \neq y \wedge \mathcal{G}\left(x+ \delta\right) \neq y\right]$, where $\mathbb{I}$ denotes the indicator function.
% \vspace{-1mm}
\end{definition}

Here we define transferability  the same way as~\cite{yangNeurIPS2021TRS2021} for untargeted attacks at the instance level. 
A successful untargeted transfer attack requires  both the surrogate  and the target models to give  correct predictions for the unperturbed input and  incorrect predictions for the perturbed one. 
The targeted version is similar. 
\black{Note that we abuse the notations $\delta$ and $\epsilon$ in \cref{def:transferability,theorem1}, while the notations in \cref{loss_adv} have similar meanings under different contexts.}

% \vspace{-0.1cm}

\begin{theorem}[\textbf{Lower bound of transferability}]
\label{theorem1}
Given any sample $(x, y) \in \mathcal{D}$, let $x+\delta$ denote a perturbed version of $x$ with fooling probability $\operatorname{Pr}(\mathcal{F}(x+\delta) \neq y ) \geq (1-\alpha)$ and perturbation budget $\|\delta\|_{2} \leq \epsilon$.  
Then the transferability $\operatorname{Pr}\left(T_{r}(\mathcal{F}, \mathcal{G}, x, y)=1\right)$ between surrogate model $\mathcal{F}$ and target model $\mathcal{G}$ can be lower bounded by \\
$\operatorname{Pr}\left(T_{r}(\mathcal{F}, \mathcal{G}, x, y)=1\right) \geq(1-\alpha)-\left(\gamma_{\mathcal{F}}+\gamma_{\mathcal{G}}\right)-\frac{\epsilon(1+\alpha)-c_{\mathcal{F}}(1-\alpha)}{\epsilon - c_{\mathcal{G}}}-\frac{\epsilon(1-\alpha)}{\epsilon-c_{\mathcal{G}}} \sqrt{2-2 \underline{\mathcal{S}}_\mathcal{D}\left( \ell_\mathcal{F}, \ell_{\mathcal{G}}\right)}$,
where
\begin{equation}\label{bound-eq}
\begin{aligned}
c_{\mathcal{F}}&=\min_{(x,y) \sim \mathcal{D}} \frac{\min\limits_{y' \in \mathcal{L}:y' \neq y } \ell_{\mathcal{F}}\left(x+\delta, y'\right)-\ell_{\mathcal{F}}(x, y)-\overline{\sigma}_\mathcal{F} \epsilon^{2} / 2}{\left\|\nabla_{x} \ell_{\mathcal{F}}(x, y)\right\|_{2}},\\
c_{\mathcal{G}}&=\max _{(x,y)  \sim  \mathcal{D}} \frac{\min\limits_{y' \in \mathcal{L}:y' \neq y} \ell_{\mathcal{G}}\left(x+\delta, y'\right)-\ell_{\mathcal{G}}(x, y)+\overline{\sigma}_\mathcal{G}\epsilon^{2} / 2}{\left\|\nabla_{x} \ell_{\mathcal{G}}(x, y)\right\|_{2}},
\end{aligned}
\end{equation}
and $\epsilon$ be sufficiently large such that $\epsilon > c_{\mathcal{G}}$ and $\epsilon > c_{\mathcal{F}}$.  $\gamma_{\mathcal{F}}$ and $\gamma_{\mathcal{G}}$ are the natural inaccuracies of $\mathcal{F}$ and $\mathcal{G}$ on $\mathcal{D}$, $\overline{\sigma}_\mathcal{F}$ and $\overline{\sigma}_\mathcal{G}$ are the upper-bound of smoothness w.r.t. $\mathcal{F}$ and $\mathcal{G}$.
\end{theorem}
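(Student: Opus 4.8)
The plan is to adapt the transferability lower bound of \cite{yangNeurIPS2021TRS2021} to the local curvature smoothness $\overline{\sigma}$ in place of the global Lipschitz constant. First I would reduce the event $T_r=1$ to a conjunction of loss-level conditions. Since $\mathcal{F}(x)=\arg\min_{y'}\ell_{\mathcal{F}}(x,y')$, the clean-correctness events $\mathcal{F}(x)=y$ and $\mathcal{G}(x)=y$ fail with probability at most the natural risks $\gamma_{\mathcal{F}}$ and $\gamma_{\mathcal{G}}$, so a union bound contributes the $-(\gamma_{\mathcal{F}}+\gamma_{\mathcal{G}})$ term. The surrogate-fooling event $\mathcal{F}(x+\delta)\neq y$ holds with probability at least $1-\alpha$ by assumption, seeding the leading $(1-\alpha)$ term. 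What remains is to show that, conditioned on the surrogate being fooled, the target is fooled except on a set whose probability is captured by the last two terms.

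Second, I would control the loss change under perturbation by a second-order Taylor expansion. For either model, $\ell(x+\delta,y)\le \ell(x,y)+\nabla_{x}\ell(x,y)\cdot\delta+\tfrac{1}{2}\delta^{\top}\nabla_{x}^{2}\ell(\xi,y)\,\delta$ for some $\xi$ on the segment, and the quadratic remainder is bounded by $\tfrac{1}{2}\overline{\sigma}\,\|\delta\|_{2}^{2}\le \tfrac{1}{2}\overline{\sigma}\epsilon^{2}$ precisely because $\overline{\sigma}$ is the supremum of the dominant Hessian eigenvalue. This is the step where the modified smoothness definition enters and replaces the Lipschitz term of \cite{yangNeurIPS2021TRS2021}; the constants $c_{\mathcal{F}}$ and $c_{\mathcal{G}}$ are exactly the worst-case margins obtained after absorbing this $\tfrac{1}{2}\overline{\sigma}\epsilon^{2}$ remainder and normalizing by the gradient norm. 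Translating ``prediction flips'' into ``the correct-label loss exceeds the minimum wrong-label loss'' then rewrites the fooling events as linear conditions on $\nabla_{x}\ell\cdot\delta$ governed by $c_{\mathcal{F}},c_{\mathcal{G}}$.

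Third, I would transfer the loss increase from surrogate to target using gradient similarity. Writing $\hat{g}_{\mathcal{F}},\hat{g}_{\mathcal{G}}$ for the unit loss gradients, the identity $\|\hat{g}_{\mathcal{F}}-\hat{g}_{\mathcal{G}}\|_{2}=\sqrt{2-2\,\mathcal{S}(\ell_{\mathcal{F}},\ell_{\mathcal{G}},x,y)}$ together with Cauchy--Schwarz yields $\hat{g}_{\mathcal{G}}\cdot\delta\ge \hat{g}_{\mathcal{F}}\cdot\delta-\epsilon\sqrt{2-2\underline{\mathcal{S}}_{\mathcal{D}}}$, so a perturbation that raises $\ell_{\mathcal{F}}$ also raises $\ell_{\mathcal{G}}$ up to this misalignment penalty (using $\mathcal{S}\ge\underline{\mathcal{S}}_{\mathcal{D}}$ to pass to the worst case). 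This is the origin of the $\frac{\epsilon(1-\alpha)}{\epsilon-c_{\mathcal{G}}}\sqrt{2-2\underline{\mathcal{S}}_{\mathcal{D}}}$ term; the factor $\frac{1}{\epsilon-c_{\mathcal{G}}}$ and the remaining margin term arise from rescaling the target's linear fooling condition by its gradient norm and margin $c_{\mathcal{G}}$. Finally I would assemble the pieces with a union bound over the complementary (bad) events and simplify to the stated right-hand side.

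The main obstacle I expect is the careful probabilistic bookkeeping together with the handling of the minimum over wrong labels $y'$: the constants $c_{\mathcal{F}},c_{\mathcal{G}}$ are defined as extrema ($\min$ over $\mathcal{D}$ for the surrogate, $\max$ for the target) of quantities that already contain the Taylor remainder, and one must ensure the sign conventions and the direction of each inequality survive when passing from a single $(x,y)$ to the distribution-level bound and when combining the surrogate's assumed fooling probability with the target's conditional fooling guarantee. Guaranteeing $\epsilon-c_{\mathcal{G}}>0$ so the normalization is valid, and checking that no double-counting occurs across the natural-risk, margin, and similarity terms, is the delicate part; the rest is routine Taylor estimation and Cauchy--Schwarz.
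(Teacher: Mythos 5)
Your overall architecture matches the paper's proof: the four-event union bound contributing $(1-\alpha)-(\gamma_{\mathcal{F}}+\gamma_{\mathcal{G}})$, the second-order Taylor expansion whose Lagrange remainder is absorbed by $\tfrac{1}{2}\overline{\sigma}\epsilon^{2}$ (this is exactly where the curvature definition replaces the Lipschitz constant, and it is how $c_{\mathcal{F}},c_{\mathcal{G}}$ arise), and the transfer of the fooling condition from $\hat{g}_{\mathcal{F}}$ to $\hat{g}_{\mathcal{G}}$ via $\|\hat{g}_{\mathcal{F}}-\hat{g}_{\mathcal{G}}\|_{2}=\sqrt{2-2\,\mathcal{S}}$, which is precisely the paper's Lemma~1. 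Up to that point you have established, as the paper does, that
$\operatorname{Pr}\bigl(\delta\cdot\hat{g}_{\mathcal{G}}<c_{\mathcal{F}}-\epsilon\sqrt{2-2\underline{\mathcal{S}}_{\mathcal{D}}}\bigr)\leq\alpha$,
and what remains is to convert this into an upper bound on $\operatorname{Pr}\bigl(\delta\cdot\hat{g}_{\mathcal{G}}<c_{\mathcal{G}}\bigr)$, which dominates $\operatorname{Pr}(\mathcal{G}(x+\delta)=y)$.

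Here is the genuine gap: your plan closes with ``assemble the pieces with a union bound over the complementary (bad) events,'' but no union bound or conditioning argument can produce the terms $\frac{\epsilon(1+\alpha)-c_{\mathcal{F}}(1-\alpha)}{\epsilon-c_{\mathcal{G}}}$ and $\frac{\epsilon(1-\alpha)}{\epsilon-c_{\mathcal{G}}}\sqrt{2-2\underline{\mathcal{S}}_{\mathcal{D}}}$, because the threshold $c_{\mathcal{G}}$ may exceed the threshold $c_{\mathcal{F}}-\epsilon\sqrt{2-2\underline{\mathcal{S}}_{\mathcal{D}}}$, in which case the event you can control does not contain the event you need to control. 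The paper bridges this with a two-step moment argument that your proposal never identifies: first, writing $Z=\delta\cdot\hat{g}_{\mathcal{G}}\in[-\epsilon,\epsilon]$, the probability bound above yields the expectation bound $\mathbb{E}[Z]\geq-\epsilon\alpha+\bigl(c_{\mathcal{F}}-\epsilon\sqrt{2-2\underline{\mathcal{S}}_{\mathcal{D}}}\bigr)(1-\alpha)$ (this is where the $(1+\alpha)$ and $(1-\alpha)$ weights in the final bound originate); second, a Markov-type linear majorization of the indicator, $\mathbb{I}_{Z<c_{\mathcal{G}}}\leq\frac{\epsilon-Z}{\epsilon-c_{\mathcal{G}}}$ on $[-\epsilon,\epsilon]$, gives $\operatorname{Pr}(Z<c_{\mathcal{G}})\leq\frac{\epsilon-\mathbb{E}[Z]}{\epsilon-c_{\mathcal{G}}}$, which is the sole source of the $\epsilon-c_{\mathcal{G}}$ denominator. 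Your phrase ``rescaling the target's linear fooling condition by its gradient norm and margin $c_{\mathcal{G}}$'' gestures at the right quantities but is not a probabilistic mechanism; without the expectation bound and the indicator majorization, the proof cannot be completed in the stated form. Your concern about $\epsilon-c_{\mathcal{G}}>0$ is well placed, as the majorization step indeed requires it.
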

Note that we deliver the lower bound of transferability for untargeted attacks as an example, and the targeted attack follows a similar form. The conclusions for the targeted attack agree with the untargeted case, as suggested in~\cite{yangNeurIPS2021TRS2021},  so we omit it for space. 
The proof is provided in the Appendix. Although the theorem is initially for $l_2$-norm perturbations, it can be generalized to $l_\infty$-norm as demonstrated in~\cite{yangNeurIPS2021TRS2021}.
% \black{
%\sout{Additionally, this transferability analysis is independent of the adversary generation strategy, thus we use a standard projected gradient descent (PGD) attacks to construct adversarial examples by default in the paper.}
%Since the focus of our paper is to examine the role of different $\mathcal{F}$ on transferability, by default we use the standard projected gradient descent (PGD), for gradient-based attacks preferably \textit{fit} the surrogates.
%We defer the results of AutoAttack (an ensemble of gradient-based and gradient-free attacks) to Appendix (xx).
% }

\noindent\textbf{Implications.} 
% Intuitively, we expect to enlarge the lower bound of  transferability as much as possible.  
Different from~\cite{yangNeurIPS2021TRS2021}, which seeks to suppress the transferability from both surrogate and target sides, this paper focuses on the surrogate aspect alone to increase transferability. 
Therefore, we next briefly analyze how \black{all the terms} relating to $\mathcal{F}$ influence the lower bound. 

\begin{itemize}[leftmargin=11pt]
\item \textbf{fooling probability $1-\alpha$}: \black{this term captures the likelihood of the  surrogate $\mathcal{F}$ being fooled by the adversarial instance $x + \delta$.
%the requirement that the adversarial instance $x + \delta$ is able to fool the surrogate $\mathcal{F}$ with certain probability $1-\alpha$. 
Different AEs generation strategies  result in  different fooling probabilities. 
Intuitively, if $x + \delta$ is unable to fool $\mathcal{F}$, we would expect it cannot fool $\mathcal{G}$ as well. 
The theorem also suggests effective attack strategies  to increase the lower bound. 
Therefore we default to using a strong baseline attack, projected gradient descent (PGD) (\cref{tab:cifar10-ASR,tab:imagenette-ASR}). 
We also report the results of AutoAttack (a gradient-based and gradient-free ensemble attack) in the Appendix (\cref{tab:autoattack}) to ensure the consistency of conclusions.}
\item \textbf{natural inaccuracy $\gamma_{\mathcal{F}}$}: the negative effect of natural inaccuracy on the lower bound of transferability is obvious. A model with low accuracy on $\mathcal{D}$  is certainly undesirable for adversarial attacks. 
Consequently, we minimize  this term by training models with sufficient epochs for CIFAR-10 and fine-tuning pre-trained ImageNet classifiers for  ImageNette.

\begin{figure}[!t] 
\setlength{\belowcaptionskip}{-0.1cm}   %调整图片标题与下文距离
  \centering
    \subcaptionbox{ResNet18 on CIFAR-10}{\includegraphics[width=0.225\textwidth]{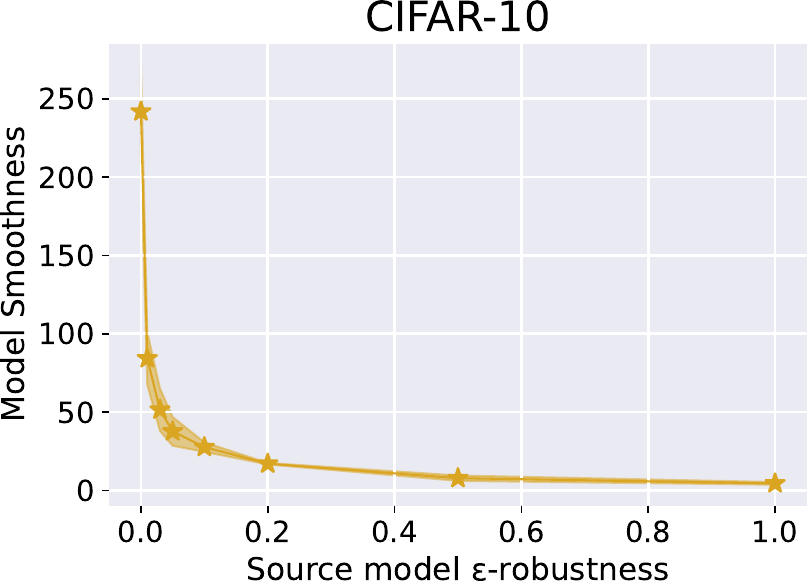}}
    \hfill
      \subcaptionbox{ResNet50 on ImageNette}{\includegraphics[width=0.225\textwidth]{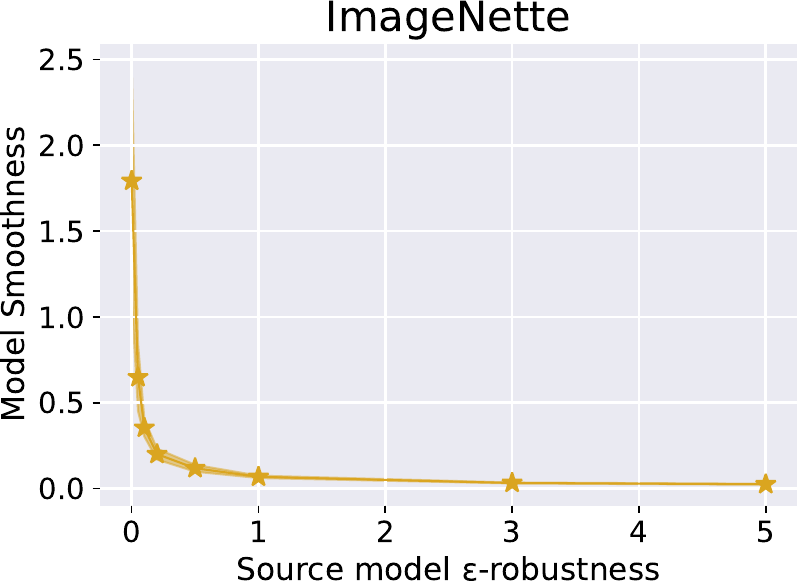}}
      \caption{\black{Average model smoothness  of $\epsilon$-robust models trained over 3 different random seeds  with corresponding error bars at each $\epsilon$.
       Note that the variances are very small.}}
\label{fig:adv-ms}
\vspace{-3mm}
\end{figure}

\item \textbf{gradient similarity $\underline{\mathcal{S}}_\mathcal{D} \left(\ell_\mathcal{F}, \ell_{\mathcal{G}}\right)$}: 
%Similar to the analysis in~\cite{yangNeurIPS2021TRS2021}, 
$\overline{\sigma}_{\mathcal{G}}$ is small compared with the perturbation radius $\epsilon$, and the gradient magnitude $\left\|\nabla_{x} \ell_{\mathcal{G}}\right\|_{2}$ is relatively large,  leading to a small $c_{\mathcal{G}}$. 
Additionally, $1-\alpha$ is large since the attack is generally effective against $\mathcal{F}$. 
Thus, the  right side of the inequality has a form of $C- k\sqrt{1- \underline{\mathcal{S}}_\mathcal{D}\left( \ell_\mathcal{F}, \ell_{\mathcal{G}}\right)}$. Since $C$ and $k$ are both positive, there is a positive relationship between gradient  similarity and the lower bound. 

\item \textbf{model smoothness $\overline{\sigma}_{\mathcal{F}}$}:  It's obvious  that a smaller $\overline{\sigma}_{\mathcal{F}}$ generates a larger $c_{\mathcal{F}}$, resulting in a larger lower bound. This indicates smoother models in the input space might serve as better surrogates for transfer attacks. 
\end{itemize}
It is worth noting that  a sensible adversary naturally desires to  use  more precise surrogates and stronger attacks against unknown targets. 
However, smoothness and similarity are  more complex to understand and optimize compared to fooling probability and natural risk.
Therefore, we focus on investigating  the connection between smoothness, similarity, and transferability as well as how the various training mechanisms regulate them under the restriction that the other two factors above  are fairly acceptable.
Prior study \cite{security19} has shown their link with transferability separately, while we explore their joint effect on transferability.

% Note that  similarity is a term that correlates with both $\mathcal{F}$ and $\mathcal{G}$.
% Different from the intuition on explicitly manipulating the similarity between $\mathcal{F}$ and $\mathcal{G}$ in ensemble training~\cite{yangNeurIPS2021TRS2021}, we study  how a training mechanism generally influences gradient similarity in the absence of a determined  $\mathcal{G}$.
% Thus, when we depict how a training mechanism influences the similarity, by default we characterize the alignment of its resulting $\mathcal{F}$ towards various $\mathcal{G}$ on average.
% }
%we characterize the average similarity towards multiple target models by default in the paper. 

% 相似度是和target model 有关系，target model 有很多。surrogate 没有办法保证和每一个target model变得更相似或不相似，只能是朝着一个平均意义上和各个模型更相似或者不相似的方向。我们是通过实验来验证的。这个平均，在我们的analysis里也很强调过。

\subsection{Trade-off Under Adversarial Training}
\label{sec:TC-AT}

\begin{figure}[!t] 
  \centering
\subcaptionbox{CIFAR-10}{\includegraphics[width=0.235 \textwidth]{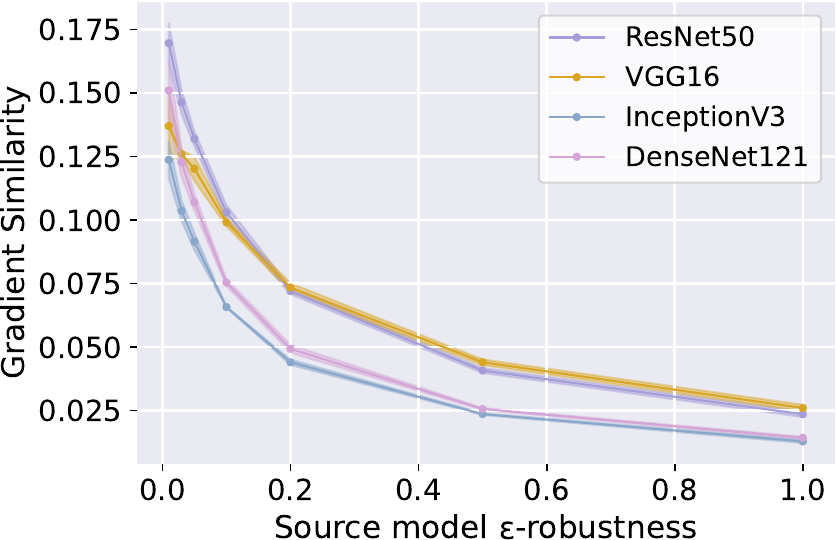}}\label{sub:gs-cifar}
\hfill
\subcaptionbox{ImageNette}{\includegraphics[width=0.235 \textwidth]{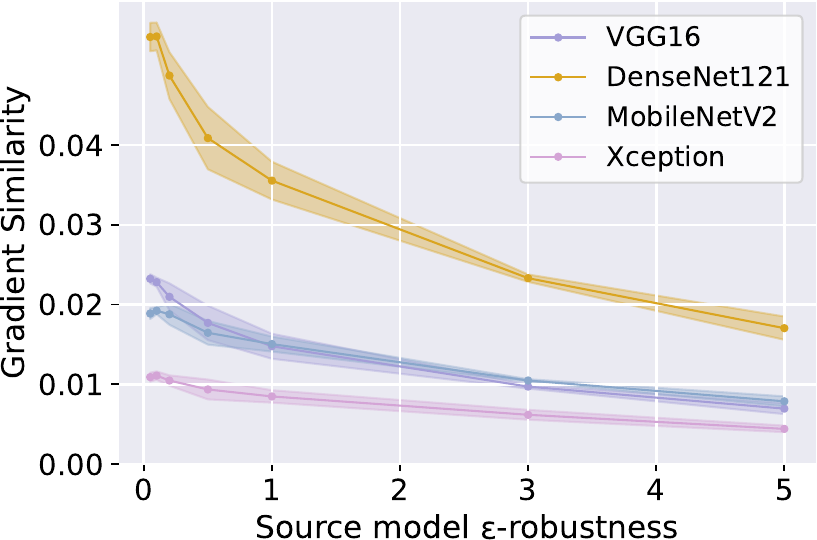}}\label{sub:gs-imagenette}
\caption{\black{Average gradient similarities between $\epsilon$-robust models and different target models with corresponding error bars at each $\epsilon$ over  3  different random seeds.}}
\label{fig:adv-gs}
\vspace{-3mm}
\end{figure}

Through a combination of theoretical analysis and experimental measurements, here we provide our insights into this ``transferability circuit'' in  adversarial training.
Arguably, we make the following two conjectures.

\noindent\textbf{From surrogate's perspective, the trade-off between model smoothness  and gradient similarity  can significantly  indicate adversarial transferability.}
First, through mathematical derivation and empirical results, we demonstrate that \AT ~implicitly improves model smoothness.
We rewrite the adversarial loss in \cref{loss_adv} as:
$\ell(f(x_i), y_i) + [\max\limits_{\left\|\delta\right\|_{2} < \epsilon} \ell(f(x_i+\delta),y_i) - \ell(f(x_i), y_i)]$.
The latter term  reflects the non-smoothness around point $(x_i, y_i)$.
Assuming $x_i$ is a local minimum of $\ell(f(\cdot),y_i)$ and applying a Taylor expansion, we get
$\max\limits_{\left\|\delta\right\|_2 \leq \epsilon} \ell(f(x_i+\delta),y_i) - \ell(f(x_i), y_i) = \frac{1}{2} \sigma(\nabla{x}^2 \ell(f(x_i), y_i)) \cdot \|\delta\|_2^2 + O(\|\delta\|_2^3)$
since the first-order term is 0 at a local minimum.
Therefore, \AT ~implicitly penalizes the curvature of $\ell(f(x_i), y_i)$ with a penalty proportional to the norm of adversarial noise $\|\delta\|_2^2$.

Accordingly, a larger $\epsilon$ strengthens the effect of producing models with greater smoothness. \cref{fig:adv-ms} shows the mean dominant eigenvalues of the Hessian for \AT ~models under various perturbation budgets. It reveals that \AT ~consistently suppresses the dominant eigenvalue, stably producing smoother models.
On the other hand, we empirically find that \AT~impairs gradient similarity. \cref{fig:adv-gs} shows the decay of  similarity between surrogate and target models as  the  budget increases.
Notably, for each dataset, smoothness rapidly improves for small $\epsilon$ (as shown by the steep slope in \cref{fig:adv-ms}), while gradient similarity gradually decreases. 
For larger $\epsilon$, improvements in smoothness become marginal since the models are already very smooth.
As a reminder, we have controlled the other two terms  to be reasonably acceptable. These suggest that:
\begin{itemize}[leftmargin=11pt]
    \item The quick improvement in transferability for small $\epsilon$ occurs could be the cause of the rapid gains in smoothness and small decays in gradient similarity.
    \item The degradation in transferability for large $\epsilon$ occurs  may because the smoothness gains have approached the limit while gradient similarity continues to decrease.
\end{itemize}

\begin{table*}[!t]
\renewcommand \arraystretch{0.945}
\setlength{\abovecaptionskip}{0.05cm}
  \centering
  \caption{Untargeted and targeted transfer ASRs of AEs crafted against baseline (\ST), data augmentations, adversarial training, and gradient regularizations surrogates under the different $L_\infty$ budgets (4/255, 8/255, 16/255) on \textbf{CIFAR-10} using \textbf{PGD}. Results in \colorbox[rgb]{ .949,  .949,  .949}{gray} are (more than 0.2\%) \uline{below the baseline}, results in \colorbox[rgb]{1.0,1.0,1.0}{white} are \uline{close to the baseline} ($\pm0.2\%$). 
  \black{Results in \colorbox[rgb]{.957, .69, .518}{darkest orange} are \uline{the highest} in each respective column, the  \colorbox[rgb]{.973,  .796,  .678}{second darkest} are \uline{the second highest}, and the \colorbox[rgb]{ .988,  .894,  .839}{least dark} are others more than 0.2\% \uline{above the baseline}. }
  We conduct experiments using surrogate models trained from 3 different random seeds in each setting and report the average results and the according error bars. In \%.}
  \tabcolsep=0.05cm
  \resizebox{\textwidth}{!} {
    \begin{tabular}{c|cccc|cccc|cccc}
    \toprule
    \multicolumn{13}{c}{\textbf{Untargeted}} \\
    \hline
     & \multicolumn{4}{c|}{4/255}    & \multicolumn{4}{c|}{8/255}    & \multicolumn{4}{c}{16/255} \\
      \multicolumn{1}{l|}{}     & ResNet50&VGG16& InceptionV3&DenseNet121& ResNet50&VGG16&InceptionV3&DenseNet121&ResNet50&VGG16&InceptionV3& DenseNet121 \\ \cline{1-13}
    \ST    & \multicolumn{1}{c}{54.3$_{\pm2.9}$} & \multicolumn{1}{c}{42.3$_{\pm1.9}$} & \multicolumn{1}{c}{53.6$_{\pm2.3}$} & 70.7$_{\pm2.0}$ & 79.9$_{\pm3.4}$ & 71.4$_{\pm3.5}$ & 80.1$_{\pm3.2}$ & 93.0$_{\pm1.2}$ & 92.8$_{\pm1.6}$ & 90.6$_{\pm1.9}$ & 93.3$_{\pm2.2}$ & 98.6$_{\pm0.5}$ \\
    \hline
    \MU, $\tau=1$ & \cellcolor[rgb]{ .949,  .949,  .949}34.7$_{\pm2.6}$ & \cellcolor[rgb]{ .949,  .949,  .949}26.7$_{\pm1.1}$ & \cellcolor[rgb]{ .949,  .949,  .949}35.8$_{\pm1.8}$ & \cellcolor[rgb]{ .949,  .949,  .949}47.1$_{\pm3.0}$ & \cellcolor[rgb]{ .949,  .949,  .949}57.1$_{\pm3.6}$ & \cellcolor[rgb]{ .949,  .949,  .949}49.1$_{\pm2.2}$ & \cellcolor[rgb]{ .949,  .949,  .949}59.2$_{\pm2.6}$ & \cellcolor[rgb]{ .949,  .949,  .949}74.0$_{\pm2.8}$ & \cellcolor[rgb]{ .949,  .949,  .949}76.6$_{\pm4.1}$ & \cellcolor[rgb]{ .949,  .949,  .949}75.2$_{\pm3.4}$ & \cellcolor[rgb]{ .949,  .949,  .949}79.9$_{\pm2.5}$ & \cellcolor[rgb]{ .949,  .949,  .949}90.2$_{\pm2.4}$ \\
    \MU, $\tau=5$ & \cellcolor[rgb]{ .949,  .949,  .949}18.5$_{\pm0.5}$ & \cellcolor[rgb]{ .949,  .949,  .949}15.5$_{\pm0.2}$ & \cellcolor[rgb]{ .949,  .949,  .949}19.6$_{\pm0.3}$ & \cellcolor[rgb]{ .949,  .949,  .949}22.8$_{\pm0.5}$ & \cellcolor[rgb]{ .949,  .949,  .949}31.3$_{\pm1.7}$ & \cellcolor[rgb]{ .949,  .949,  .949}27.6$_{\pm0.7}$ & \cellcolor[rgb]{ .949,  .949,  .949}34.5$_{\pm0.7}$ & \cellcolor[rgb]{ .949,  .949,  .949}40.7$_{\pm1.2}$ & \cellcolor[rgb]{ .949,  .949,  .949}49.9$_{\pm1.5}$ & \cellcolor[rgb]{ .949,  .949,  .949}49.1$_{\pm0.9}$ & \cellcolor[rgb]{ .949,  .949,  .949}54.4$_{\pm0.3}$ & \cellcolor[rgb]{ .949,  .949,  .949}62.8$_{\pm1.2}$ \\
    \CM, $\tau=1$ & \cellcolor[rgb]{ .949,  .949,  .949}29.3$_{\pm1.2}$ & \cellcolor[rgb]{ .949,  .949,  .949}21.3$_{\pm0.5}$ & \cellcolor[rgb]{ .949,  .949,  .949}28.4$_{\pm2.1}$ & \cellcolor[rgb]{ .949,  .949,  .949}39.8$_{\pm2.0}$ & \cellcolor[rgb]{ .949,  .949,  .949}46.4$_{\pm2.5}$ & \cellcolor[rgb]{ .949,  .949,  .949}37.6$_{\pm1.4}$ & \cellcolor[rgb]{ .949,  .949,  .949}45.6$_{\pm3.2}$ & \cellcolor[rgb]{ .949,  .949,  .949}62.8$_{\pm2.4}$ & \cellcolor[rgb]{ .949,  .949,  .949}63.3$_{\pm2.6}$ & \cellcolor[rgb]{ .949,  .949,  .949}60.2$_{\pm1.4}$ & \cellcolor[rgb]{ .949,  .949,  .949}63.9$_{\pm3.2}$ & \cellcolor[rgb]{ .949,  .949,  .949}81.1$_{\pm1.2}$ \\
    \CM, $\tau=5$ & \cellcolor[rgb]{ .949,  .949,  .949}13.5$_{\pm0.3}$ & \cellcolor[rgb]{ .949,  .949,  .949}10.8$_{\pm0.1}$ & \cellcolor[rgb]{ .949,  .949,  .949}13.1$_{\pm0.4}$ & \cellcolor[rgb]{ .949,  .949,  .949}15.1$_{\pm0.3}$ & \cellcolor[rgb]{ .949,  .949,  .949}19.5$_{\pm0.4}$ & \cellcolor[rgb]{ .949,  .949,  .949}15.7$_{\pm0.2}$ & \cellcolor[rgb]{ .949,  .949,  .949}18.7$_{\pm0.8}$ & \cellcolor[rgb]{ .949,  .949,  .949}22.6$_{\pm0.7}$ & \cellcolor[rgb]{ .949,  .949,  .949}29.4$_{\pm1.4}$ & \cellcolor[rgb]{ .949,  .949,  .949}28.6$_{\pm1.0}$ & \cellcolor[rgb]{ .949,  .949,  .949}29.3$_{\pm1.1}$ & \cellcolor[rgb]{ .949,  .949,  .949}34.3$_{\pm1.5}$ \\
    \CO, $\tau=1$ & \cellcolor[rgb]{ .949,  .949,  .949}51.5$_{\pm2.3}$ & \cellcolor[rgb]{ .949,  .949,  .949}41.3$_{\pm2.4}$ & \cellcolor[rgb]{ .949,  .949,  .949}50.0$_{\pm1.8}$ & \cellcolor[rgb]{ .949,  .949,  .949}67.7$_{\pm1.4}$ & \cellcolor[rgb]{ .949,  .949,  .949}78.7$_{\pm2.4}$ & \cellcolor[rgb]{  .988,  .894,  .839}71.9$_{\pm3.2}$ & \cellcolor[rgb]{ .949,  .949,  .949}78.2$_{\pm0.9}$ & \cellcolor[rgb]{ .949,  .949,  .949}92.3$_{\pm0.9}$ & 93.0$_{\pm2.1}$ & \cellcolor[rgb]{  .988,  .894,  .839}91.8$_{\pm1.5}$ & \cellcolor[rgb]{  .988,  .894,  .839}93.6$_{\pm0.7}$ & \cellcolor[rgb]{  .988,  .894,  .839}98.9$_{\pm0.2}$ \\
    \CO, $\tau=5$ & \cellcolor[rgb]{ .949,  .949,  .949}37.7$_{\pm2.8}$ & \cellcolor[rgb]{ .949,  .949,  .949}26.9$_{\pm1.7}$ & \cellcolor[rgb]{ .949,  .949,  .949}36.9$_{\pm2.0}$ & \cellcolor[rgb]{ .949,  .949,  .949}49.5$_{\pm1.2}$ & \cellcolor[rgb]{ .949,  .949,  .949}63.9$_{\pm4.5}$ & \cellcolor[rgb]{ .949,  .949,  .949}53.1$_{\pm3.6}$ & \cellcolor[rgb]{ .949,  .949,  .949}64.1$_{\pm3.9}$ & \cellcolor[rgb]{ .949,  .949,  .949}80.4$_{\pm1.1}$ & \cellcolor[rgb]{ .949,  .949,  .949}84.7$_{\pm3.3}$ & \cellcolor[rgb]{ .949,  .949,  .949}79.1$_{\pm3.5}$ & \cellcolor[rgb]{ .949,  .949,  .949}85.5$_{\pm3.2}$ & \cellcolor[rgb]{ .949,  .949,  .949}95.3$_{\pm1.0}$ \\
    \LS, $\tau=1$ & \cellcolor[rgb]{ .949,  .949,  .949}33.4$_{\pm1.7}$ & \cellcolor[rgb]{ .949,  .949,  .949}28.6$_{\pm1.4}$ & \cellcolor[rgb]{ .949,  .949,  .949}33.2$_{\pm0.6}$ & \cellcolor[rgb]{ .949,  .949,  .949}43.4$_{\pm1.5}$ & \cellcolor[rgb]{ .949,  .949,  .949}47.5$_{\pm2.0}$ & \cellcolor[rgb]{ .949,  .949,  .949}44.8$_{\pm1.1}$ & \cellcolor[rgb]{ .949,  .949,  .949}47.6$_{\pm0.8}$ & \cellcolor[rgb]{ .949,  .949,  .949}61.8$_{\pm1.9}$ & \cellcolor[rgb]{ .949,  .949,  .949}59.9$_{\pm1.5}$ & \cellcolor[rgb]{ .949,  .949,  .949}63.0$_{\pm1.1}$ & \cellcolor[rgb]{ .949,  .949,  .949}62.9$_{\pm1.0}$ & \cellcolor[rgb]{ .949,  .949,  .949}75.8$_{\pm1.4}$ \\
    \LS, $\tau=5$ & \cellcolor[rgb]{ .949,  .949,  .949}32.8$_{\pm0.5}$ & \cellcolor[rgb]{ .949,  .949,  .949}28.0$_{\pm1.3}$ & \cellcolor[rgb]{ .949,  .949,  .949}31.9$_{\pm0.7}$ & \cellcolor[rgb]{ .949,  .949,  .949}40.8$_{\pm1.8}$ & \cellcolor[rgb]{ .949,  .949,  .949}50.7$_{\pm1.4}$ & \cellcolor[rgb]{ .949,  .949,  .949}46.8$_{\pm2.4}$ & \cellcolor[rgb]{ .949,  .949,  .949}50.6$_{\pm0.6}$ & \cellcolor[rgb]{ .949,  .949,  .949}63.4$_{\pm0.9}$ & \cellcolor[rgb]{ .949,  .949,  .949}65.3$_{\pm2.1}$ & \cellcolor[rgb]{ .949,  .949,  .949}66.1$_{\pm1.9}$ & \cellcolor[rgb]{ .949,  .949,  .949}66.7$_{\pm0.5}$ & \cellcolor[rgb]{ .949,  .949,  .949}78.0$_{\pm0.7}$ \\
    \hline
    \AT & \cellcolor[rgb]{ .988,  .894,  .839}59.0$_{\pm3.0}$ & \cellcolor[rgb]{ .988,  .894,  .839}50.3$_{\pm4.4}$ & \cellcolor[rgb]{ .929,  .929,  .929}53.3$_{\pm2.9}$ & \cellcolor[rgb]{ .929,  .929,  .929}65.3$_{\pm2.7}$ & \cellcolor[rgb]{ .988,  .894,  .839}87.8$_{\pm2.8}$ & \cellcolor[rgb]{ .988,  .894,  .839}82.7$_{\pm4.5}$ & \cellcolor[rgb]{ .988,  .894,  .839}84.7$_{\pm2.7}$ & \cellcolor[rgb]{ .929,  .929,  .929}92.2$_{\pm0.8}$ & \cellcolor[rgb]{ .988,  .894,  .839}96.9$_{\pm1.3}$ & \cellcolor[rgb]{ .988,  .894,  .839}95.8$_{\pm2.0}$ & \cellcolor[rgb]{ .988,  .894,  .839}96.0$_{\pm1.4}$ & 98.6$_{\pm0.3}$ \\
    \hline
    \IR & \cellcolor[rgb]{ .988,  .894,  .839}62.3$_{\pm1.2}$ & \cellcolor[rgb]{ .988,  .894,  .839}55.6$_{\pm0.5}$ & \cellcolor[rgb]{ .988,  .894,  .839}56.5$_{\pm2.1}$ & \cellcolor[rgb]{ .929,  .929,  .929}63.7$_{\pm1.9}$ & \cellcolor[rgb]{ .988,  .894,  .839}96.0$_{\pm0.4}$ & \cellcolor[rgb]{ .988,  .894,  .839}93.2$_{\pm0.2}$ & \cellcolor[rgb]{ .988,  .894,  .839}93.8$_{\pm0.6}$ & \cellcolor[rgb]{ .988,  .894,  .839}96.9$_{\pm0.6}$ & \cellcolor[rgb]{ .973,  .796,  .678}99.9$_{\pm0.0}$ & \cellcolor[rgb]{ .973,  .796,  .678}99.7$_{\pm0.0}$ & \cellcolor[rgb]{ .988,  .894,  .839}99.8$_{\pm0.1}$ & \cellcolor[rgb]{ .973,  .796,  .678}99.9$_{\pm0.1}$ \\
    \JR & \cellcolor[rgb]{ .988,  .894,  .839}74.2$_{\pm2.0}$ & \cellcolor[rgb]{ .973,  .796,  .678}65.8$_{\pm0.8}$ & \cellcolor[rgb]{ .988,  .894,  .839}72.6$_{\pm0.3}$ & \cellcolor[rgb]{ .988,  .894,  .839}84.1$_{\pm0.5}$ & \cellcolor[rgb]{ .988,  .894,  .839}94.1$_{\pm0.7}$ & \cellcolor[rgb]{ .988,  .894,  .839}91.3$_{\pm0.3}$ & \cellcolor[rgb]{ .988,  .894,  .839}93.4$_{\pm0.6}$ & \cellcolor[rgb]{ .988,  .894,  .839}97.7$_{\pm0.5}$ & \cellcolor[rgb]{ .988,  .894,  .839}98.3$_{\pm0.3}$ & \cellcolor[rgb]{ .988,  .894,  .839}98.3$_{\pm0.2}$ & \cellcolor[rgb]{ .988,  .894,  .839}98.5$_{\pm0.4}$ & \cellcolor[rgb]{ .988,  .894,  .839}99.7$_{\pm0.5}$ \\
    \ER & \cellcolor[rgb]{ .988,  .894,  .839}64.9$_{\pm3.3}$ & \cellcolor[rgb]{ .988,  .894,  .839}52.5$_{\pm1.9}$ & \cellcolor[rgb]{ .988,  .894,  .839}54.2$_{\pm6.7}$ & \cellcolor[rgb]{ .929,  .929,  .929}62.8$_{\pm10.9}$ & \cellcolor[rgb]{ .988,  .894,  .839}94.2$_{\pm1.4}$ & \cellcolor[rgb]{ .988,  .894,  .839}87.1$_{\pm5.3}$ & \cellcolor[rgb]{ .988,  .894,  .839}87.9$_{\pm4.7}$ & 92.8$_{\pm5.3}$ & \cellcolor[rgb]{ .988,  .894,  .839}99.6$_{\pm0.2}$ & \cellcolor[rgb]{ .988,  .894,  .839}98.7$_{\pm4.0}$ & \cellcolor[rgb]{ .988,  .894,  .839}98.6$_{\pm0.8}$ & \cellcolor[rgb]{ .988,  .894,  .839}99.5$_{\pm0.6}$ \\
    \SAM & \cellcolor[rgb]{ .973,  .796,  .678}76.1$_{\pm1.5}$ & \cellcolor[rgb]{ .988,  .894,  .839}64.3$_{\pm2.1}$ & \cellcolor[rgb]{ .973,  .796,  .678}74.5$_{\pm0.7}$ & \cellcolor[rgb]{ .973,  .796,  .678}88.5$_{\pm0.4}$ & \cellcolor[rgb]{ .988,  .894,  .839}97.5$_{\pm0.5}$ & \cellcolor[rgb]{ .988,  .894,  .839}94.1$_{\pm0.8}$ & \cellcolor[rgb]{ .973,  .796,  .678}97.1$_{\pm0.1}$ & \cellcolor[rgb]{ .973,  .796,  .678}99.6$_{\pm0.1}$ & \cellcolor[rgb]{ .988,  .894,  .839}99.8$_{\pm0.0}$ & \cellcolor[rgb]{ .973,  .796,  .678}99.7$_{\pm0.1}$ & \cellcolor[rgb]{ .973,  .796,  .678}99.9$_{\pm0.0}$ & \cellcolor[rgb]{ .957,  .69,  .518}100.0$_{\pm0.0}$ \\ \hline
    \SAM\&\IR & \cellcolor[rgb]{ .988,  .894,  .839}64.7$_{\pm0.3}$ & \cellcolor[rgb]{ .988,  .894,  .839}58.2$_{\pm0.4}$ & \cellcolor[rgb]{ .988,  .894,  .839}58.4$_{\pm1.6}$ & \cellcolor[rgb]{ .929,  .929,  .929}65.4$_{\pm1.0}$ & \cellcolor[rgb]{ .973,  .796,  .678}97.6$_{\pm0.0}$ & \cellcolor[rgb]{ .973,  .796,  .678}95.5$_{\pm0.1}$ & \cellcolor[rgb]{ .988,  .894,  .839}95.9$_{\pm0.4}$ & \cellcolor[rgb]{ .988,  .894,  .839}97.9$_{\pm0.3}$ & \cellcolor[rgb]{ .957,  .69,  .518}100.0$_{\pm0.0}$ & \cellcolor[rgb]{ .957,  .69,  .518}99.9$_{\pm0.0}$ & \cellcolor[rgb]{ .957,  .69,  .518}100.0$_{\pm0.0}$ & \cellcolor[rgb]{ .957,  .69,  .518}100.0$_{\pm0.0}$ \\
    \SAM\&\JR & \cellcolor[rgb]{ .957,  .69,  .518}83.1$_{\pm0.6}$ & \cellcolor[rgb]{ .957,  .69,  .518}70.1$_{\pm0.9}$ & \cellcolor[rgb]{ .957,  .69,  .518}79.4$_{\pm0.6}$ & \cellcolor[rgb]{ .957,  .69,  .518}91.5$_{\pm0.3}$ & \cellcolor[rgb]{ .957,  .69,  .518}98.6$_{\pm0.2}$ & \cellcolor[rgb]{ .957,  .69,  .518}96.6$_{\pm0.2}$ & \cellcolor[rgb]{ .957,  .69,  .518}98.3$_{\pm0.1}$ & \cellcolor[rgb]{ .957,  .69,  .518}99.9$_{\pm0.0}$ & \cellcolor[rgb]{ .973,  .796,  .678}99.9$_{\pm0.0}$ & \cellcolor[rgb]{ .957,  .69,  .518}99.9$_{\pm0.0}$ & \cellcolor[rgb]{ .973,  .796,  .678}99.9$_{\pm0.0}$ & \cellcolor[rgb]{ .957,  .69,  .518}100.0$_{\pm0.0}$ \\
    \toprule
    \multicolumn{13}{c}{\textbf{Targeted}} \\

    \hline
     % & \multicolumn{4}{c|}{4/255}    & \multicolumn{4}{c|}{8/255}    & \multicolumn{4}{c}{16/255} \\
     %  \multicolumn{1}{l|}{}     & ResNet50&VGG16& InceptionV3&DenseNet121& ResNet50&VGG16&InceptionV3&DenseNet121&ResNet50&VGG16&InceptionV3& DenseNet121 \\ \cline{1-13}
    \ST   & 18.3$_{\pm1.4}$ & 12.7$_{\pm0.6}$ & 20.4$_{\pm1.1}$ & 34.9$_{\pm1.9}$ & 39.4$_{\pm4.9}$ & 32.7$_{\pm3.6}$ & 43.6$_{\pm4.0}$ & 68.8$_{\pm4.0}$ & 55.7$_{\pm6.0}$ & 53.5$_{\pm5.5}$ & 60.0$_{\pm5.3}$ & 85.4$_{\pm4.2}$ \\
    \hline
   \MU, $\tau=1$ & \cellcolor[rgb]{ .949,  .949,  .949}8.2$_{\pm0.8}$ & \cellcolor[rgb]{ .949,  .949,  .949}5.9$_{\pm0.5}$ & \cellcolor[rgb]{ .949,  .949,  .949}10.5$_{\pm0.6}$ & \cellcolor[rgb]{ .949,  .949,  .949}15.9$_{\pm1.0}$ & \cellcolor[rgb]{ .949,  .949,  .949}17.0$_{\pm2.7}$ & \cellcolor[rgb]{ .949,  .949,  .949}13.8$_{\pm1.4}$ & \cellcolor[rgb]{ .949,  .949,  .949}20.7$_{\pm2.1}$ & \cellcolor[rgb]{ .949,  .949,  .949}33.3$_{\pm3.5}$ & \cellcolor[rgb]{ .949,  .949,  .949}28.0$_{\pm5.1}$ & \cellcolor[rgb]{ .949,  .949,  .949}26.8$_{\pm2.9}$ & \cellcolor[rgb]{ .949,  .949,  .949}33.1$_{\pm3.5}$ & \cellcolor[rgb]{ .949,  .949,  .949}50.9$_{\pm6.2}$ \\
    \MU, $\tau=5$ & \cellcolor[rgb]{ .949,  .949,  .949}3.0$_{\pm0.1}$ & \cellcolor[rgb]{ .949,  .949,  .949}2.4$_{\pm0.1}$ & \cellcolor[rgb]{ .949,  .949,  .949}4.1$_{\pm0.1}$ & \cellcolor[rgb]{ .949,  .949,  .949}5.0$_{\pm0.2}$ & \cellcolor[rgb]{ .949,  .949,  .949}6.1$_{\pm0.5}$ & \cellcolor[rgb]{ .949,  .949,  .949}5.0$_{\pm0.4}$ & \cellcolor[rgb]{ .949,  .949,  .949}7.9$_{\pm0.5}$ & \cellcolor[rgb]{ .949,  .949,  .949}10.4$_{\pm0.5}$ & \cellcolor[rgb]{ .949,  .949,  .949}10.8$_{\pm0.5}$ & \cellcolor[rgb]{ .949,  .949,  .949}10.7$_{\pm0.6}$ & \cellcolor[rgb]{ .949,  .949,  .949}13.5$_{\pm0.6}$ & \cellcolor[rgb]{ .949,  .949,  .949}16.9$_{\pm0.9}$ \\
    \CM, $\tau=1$  & \cellcolor[rgb]{ .949,  .949,  .949}6.0$_{\pm0.4}$ & \cellcolor[rgb]{ .949,  .949,  .949}4.2$_{\pm0.2}$ & \cellcolor[rgb]{ .949,  .949,  .949}6.8$_{\pm1.0}$ & \cellcolor[rgb]{ .949,  .949,  .949}11.4$_{\pm1.1}$ & \cellcolor[rgb]{ .949,  .949,  .949}10.7$_{\pm0.7}$ & \cellcolor[rgb]{ .949,  .949,  .949}8.3$_{\pm0.5}$ & \cellcolor[rgb]{ .949,  .949,  .949}11.6$_{\pm1.8}$ & \cellcolor[rgb]{ .949,  .949,  .949}20.8$_{\pm1.9}$ & \cellcolor[rgb]{ .949,  .949,  .949}16.4$_{\pm0.9}$ & \cellcolor[rgb]{ .949,  .949,  .949}14.9$_{\pm0.8}$ & \cellcolor[rgb]{ .949,  .949,  .949}17.3$_{\pm2.5}$ & \cellcolor[rgb]{ .949,  .949,  .949}29.6$_{\pm1.7}$ \\
    \CM, $\tau = 5$ & \cellcolor[rgb]{ .949,  .949,  .949}2.0$_{\pm0.1}$ & \cellcolor[rgb]{ .949,  .949,  .949}1.6$_{\pm0.1}$ & \cellcolor[rgb]{ .949,  .949,  .949}2.4$_{\pm0.2}$ & \cellcolor[rgb]{ .949,  .949,  .949}2.8$_{\pm0.2}$ & \cellcolor[rgb]{ .949,  .949,  .949}2.8$_{\pm0.1}$ & \cellcolor[rgb]{ .949,  .949,  .949}2.3$_{\pm0.0}$ & \cellcolor[rgb]{ .949,  .949,  .949}3.2$_{\pm0.1}$ & \cellcolor[rgb]{ .949,  .949,  .949}3.7$_{\pm0.1}$ & \cellcolor[rgb]{ .949,  .949,  .949}4.2$_{\pm0.2}$ & \cellcolor[rgb]{ .949,  .949,  .949}3.9$_{\pm0.4}$ & \cellcolor[rgb]{ .949,  .949,  .949}4.4$_{\pm0.2}$ & \cellcolor[rgb]{ .949,  .949,  .949}4.7$_{\pm0.3}$ \\
    \CO, $\tau=1$ & \cellcolor[rgb]{ .949,  .949,  .949}16.0$_{\pm1.3}$ & \cellcolor[rgb]{ .949,  .949,  .949}11.3$_{\pm1.2}$ & \cellcolor[rgb]{ .949,  .949,  .949}17.6$_{\pm1.1}$ & \cellcolor[rgb]{ .949,  .949,  .949}30.6$_{\pm1.7}$ & \cellcolor[rgb]{ .949,  .949,  .949}35.8$_{\pm3.0}$ & \cellcolor[rgb]{ .949,  .949,  .949}30.7$_{\pm2.5}$ & \cellcolor[rgb]{ .949,  .949,  .949}38.6$_{\pm2.7}$ & \cellcolor[rgb]{ .949,  .949,  .949}63.4$_{\pm2.2}$ & \cellcolor[rgb]{ .949,  .949,  .949}52.8$_{\pm4.3}$ & \cellcolor[rgb]{ .949,  .949,  .949}51.5$_{\pm3.0}$ & \cellcolor[rgb]{ .949,  .949,  .949}55.9$_{\pm3.1}$ & \cellcolor[rgb]{ .949,  .949,  .949}82.1$_{\pm1.6}$ \\
    \CO, $\tau=5$ & \cellcolor[rgb]{ .949,  .949,  .949}7.9$_{\pm1.1}$ & \cellcolor[rgb]{ .949,  .949,  .949}5.4$_{\pm0.5}$ & \cellcolor[rgb]{ .949,  .949,  .949}9.5$_{\pm0.7}$ & \cellcolor[rgb]{ .949,  .949,  .949}14.2$_{\pm0.7}$ & \cellcolor[rgb]{ .949,  .949,  .949}17.4$_{\pm2.9}$ & \cellcolor[rgb]{ .949,  .949,  .949}13.6$_{\pm1.3}$ & \cellcolor[rgb]{ .949,  .949,  .949}20.5$_{\pm2.1}$ & \cellcolor[rgb]{ .949,  .949,  .949}31.1$_{\pm0.9}$ & \cellcolor[rgb]{ .949,  .949,  .949}29.1$_{\pm5.0}$ & \cellcolor[rgb]{ .949,  .949,  .949}24.9$_{\pm2.8}$ & \cellcolor[rgb]{ .949,  .949,  .949}31.9$_{\pm4.1}$ & \cellcolor[rgb]{ .949,  .949,  .949}48.5$_{\pm1.8}$ \\
    \LS, $\tau=1$  & \cellcolor[rgb]{ .949,  .949,  .949}6.8$_{\pm0.6}$ & \cellcolor[rgb]{ .949,  .949,  .949}5.4$_{\pm0.4}$ & \cellcolor[rgb]{ .949,  .949,  .949}7.0$_{\pm0.3}$ & \cellcolor[rgb]{ .949,  .949,  .949}10.7$_{\pm1.3}$ & \cellcolor[rgb]{ .949,  .949,  .949}9.9$_{\pm3.8}$ & \cellcolor[rgb]{ .949,  .949,  .949}9.0$_{\pm4.0}$ & \cellcolor[rgb]{ .949,  .949,  .949}10.0$_{\pm3.3}$ & \cellcolor[rgb]{ .949,  .949,  .949}15.6$_{\pm6.2}$ & \cellcolor[rgb]{ .949,  .949,  .949}15.4$_{\pm1.9}$ & \cellcolor[rgb]{ .949,  .949,  .949}16.7$_{\pm1.8}$ & \cellcolor[rgb]{ .949,  .949,  .949}16.3$_{\pm1.2}$ & \cellcolor[rgb]{ .929,  .929,  .929}23.9$_{\pm2.9}$ \\
    \LS, $\tau=5$  & \cellcolor[rgb]{ .949,  .949,  .949}6.7$_{\pm0.4}$ & \cellcolor[rgb]{ .949,  .949,  .949}5.7$_{\pm0.1}$ & \cellcolor[rgb]{ .949,  .949,  .949}7.1$_{\pm0.8}$ & \cellcolor[rgb]{ .949,  .949,  .949}10.5$_{\pm1.3}$ & \cellcolor[rgb]{ .949,  .949,  .949}13.5$_{\pm1.1}$ & \cellcolor[rgb]{ .949,  .949,  .949}12.8$_{\pm0.7}$ & \cellcolor[rgb]{ .949,  .949,  .949}14.0$_{\pm1.5}$ & \cellcolor[rgb]{ .949,  .949,  .949}20.1$_{\pm2.0}$ & \cellcolor[rgb]{ .949,  .949,  .949}19.4$_{\pm0.9}$ & \cellcolor[rgb]{ .949,  .949,  .949}20.0$_{\pm0.3}$ & \cellcolor[rgb]{ .949,  .949,  .949}20.0$_{\pm1.4}$ & \cellcolor[rgb]{ .949,  .949,  .949}27.0$_{\pm1.8}$ \\
    \hline 
    \AT  & \cellcolor[rgb]{ .988,  .894,  .839}24.9$_{\pm2.2}$ & \cellcolor[rgb]{ .988,  .894,  .839}18.6$_{\pm3.4}$ & \cellcolor[rgb]{ .988,  .894,  .839}21.9$_{\pm2.0}$ & \cellcolor[rgb]{ .949,  .949,  .949}32.5$_{\pm2.5}$ & \cellcolor[rgb]{ .988,  .894,  .839}60.7$_{\pm6.7}$ & \cellcolor[rgb]{ .988,  .894,  .839}52.4$_{\pm7.9}$ & \cellcolor[rgb]{ .988,  .894,  .839}55.6$_{\pm5.5}$ & \cellcolor[rgb]{ .988,  .894,  .839}72.6$_{\pm4.0}$ & \cellcolor[rgb]{ .988,  .894,  .839}77.4$_{\pm7.7}$ & \cellcolor[rgb]{ .988,  .894,  .839}74.6$_{\pm7.9}$ & \cellcolor[rgb]{ .988,  .894,  .839}72.4$_{\pm6.7}$ & \cellcolor[rgb]{ .988,  .894,  .839}88.4$_{\pm4.1}$ \\
    \hline
    \IR  & \cellcolor[rgb]{ .988,  .894,  .839}25.7$_{\pm1.1}$ & \cellcolor[rgb]{ .988,  .894,  .839}20.6$_{\pm0.2}$ & \cellcolor[rgb]{ .988,  .894,  .839}22.5$_{\pm1.1}$ & \cellcolor[rgb]{ .949,  .949,  .949}28.8$_{\pm1.3}$ & \cellcolor[rgb]{ .988,  .894,  .839}70.8$_{\pm0.5}$ & \cellcolor[rgb]{ .988,  .894,  .839}62.8$_{\pm0.7}$ & \cellcolor[rgb]{ .988,  .894,  .839}65.0$_{\pm1.5}$ & \cellcolor[rgb]{ .988,  .894,  .839}75.2$_{\pm0.6}$ & \cellcolor[rgb]{ .988,  .894,  .839}89.9$_{\pm0.7}$ & \cellcolor[rgb]{ .988,  .894,  .839}86.7$_{\pm0.8}$ & \cellcolor[rgb]{ .988,  .894,  .839}86.3$_{\pm1.3}$ & \cellcolor[rgb]{ .988,  .894,  .839}92.7$_{\pm0.8}$ \\
    \JR & \cellcolor[rgb]{ .973,  .796,  .678}38.6$_{\pm2.6}$ & \cellcolor[rgb]{ .973,  .796,  .678}30.3$_{\pm1.2}$ & \cellcolor[rgb]{ .973,  .796,  .678}39.1$_{\pm0.1}$ & \cellcolor[rgb]{ .988,  .894,  .839}54.0$_{\pm1.2}$ & \cellcolor[rgb]{ .988,  .894,  .839}75.2$_{\pm3.9}$ & \cellcolor[rgb]{ .988,  .894,  .839}69.6$_{\pm2.5}$ & \cellcolor[rgb]{ .988,  .894,  .839}76.0$_{\pm0.9}$ & \cellcolor[rgb]{ .988,  .894,  .839}90.0$_{\pm0.8}$ & \cellcolor[rgb]{ .988,  .894,  .839}87.3$_{\pm2.8}$ & \cellcolor[rgb]{ .988,  .894,  .839}88.9$_{\pm1.8}$ & \cellcolor[rgb]{ .988,  .894,  .839}88.4$_{\pm1.0}$ & \cellcolor[rgb]{ .988,  .894,  .839}97.6$_{\pm0.3}$ \\
    \ER &\cellcolor[rgb]{ .988,  .894,  .839}22.7$_{\pm1.6}$ & \cellcolor[rgb]{ .988,  .894,  .839}15.5$_{\pm2.8}$ & \cellcolor[rgb]{ .949,  .949,  .949}17.8$_{\pm3.4}$ & \cellcolor[rgb]{ .949,  .949,  .949}24.1$_{\pm8.1}$ & \cellcolor[rgb]{ .988,  .894,  .839}54.6$_{\pm3.3}$ & \cellcolor[rgb]{ .988,  .894,  .839}42.7$_{\pm6.9}$ & \cellcolor[rgb]{ .988,  .894,  .839}44.1$_{\pm8.0}$ & \cellcolor[rgb]{ .949,  .949,  .949}55.0$_{\pm15.9}$ & \cellcolor[rgb]{ .988,  .894,  .839}77.2$_{\pm3.0}$ & \cellcolor[rgb]{ .988,  .894,  .839}67.2$_{\pm8.3}$ & \cellcolor[rgb]{ .988,  .894,  .839}66.5$_{\pm9.6}$ & \cellcolor[rgb]{ .929,  .929,  .929}76.6$_{\pm16.7}$ \\
     \SAM & \cellcolor[rgb]{ .988,  .894,  .839}35.0$_{\pm1.9}$ & \cellcolor[rgb]{ .988,  .894,  .839}25.2$_{\pm1.8}$ & \cellcolor[rgb]{ .988,  .894,  .839}36.9$_{\pm1.3}$ & \cellcolor[rgb]{ .973,  .796,  .678}56.7$_{\pm0.2}$ & \cellcolor[rgb]{ .988,  .894,  .839}74.7$_{\pm3.1}$ & \cellcolor[rgb]{ .988,  .894,  .839}65.7$_{\pm3.5}$ & \cellcolor[rgb]{ .988,  .894,  .839}76.5$_{\pm1.9}$ & \cellcolor[rgb]{ .973,  .796,  .678}94.0$_{\pm0.4}$ & \cellcolor[rgb]{ .988,  .894,  .839}92.3$_{\pm1.5}$ & \cellcolor[rgb]{ .988,  .894,  .839}90.9$_{\pm2.2}$ & \cellcolor[rgb]{ .973,  .796,  .678}93.4$_{\pm0.3}$ & \cellcolor[rgb]{ .988,  .894,  .839}99.5$_{\pm0.2}$ \\ \hline
     \SAM\&\IR & \cellcolor[rgb]{ .988,  .894,  .839}28.6$_{\pm0.2}$ & \cellcolor[rgb]{ .988,  .894,  .839}23.5$_{\pm0.2}$ & \cellcolor[rgb]{ .988,  .894,  .839}25.0$_{\pm1.2}$ & \cellcolor[rgb]{ .949,  .949,  .949}31.4$_{\pm0.7}$ & \cellcolor[rgb]{ .973,  .796,  .678}81.8$_{\pm0.0}$ & \cellcolor[rgb]{ .973,  .796,  .678}75.7$_{\pm0.2}$ & \cellcolor[rgb]{ .973,  .796,  .678}77.6$_{\pm1.1}$ & \cellcolor[rgb]{ .988,  .894,  .839}84.8$_{\pm0.6}$ & \cellcolor[rgb]{ .957,  .69,  .518}99.2$_{\pm0.1}$ & \cellcolor[rgb]{ .957,  .69,  .518}98.7$_{\pm0.1}$ & \cellcolor[rgb]{ .957,  .69,  .518}98.6$_{\pm0.4}$ & \cellcolor[rgb]{ .973,  .796,  .678}99.6$_{\pm0.1}$ \\     
     \SAM\&\JR & \cellcolor[rgb]{ .957,  .69,  .518}44.2$_{\pm1.5}$ & \cellcolor[rgb]{ .957,  .69,  .518}32.9$_{\pm1.1}$ & \cellcolor[rgb]{ .957,  .69,  .518}46.1$_{\pm1.3}$ & \cellcolor[rgb]{ .957,  .69,  .518}65.7$_{\pm0.3}$ & \cellcolor[rgb]{ .957,  .69,  .518}85.5$_{\pm1.4}$ & \cellcolor[rgb]{ .957,  .69,  .518}77.6$_{\pm1.7}$ & \cellcolor[rgb]{ .957,  .69,  .518}87.3$_{\pm1.4}$ & \cellcolor[rgb]{ .957,  .69,  .518}97.6$_{\pm0.1}$ & \cellcolor[rgb]{ .973,  .796,  .678}98.3$_{\pm0.5}$ & \cellcolor[rgb]{ .973,  .796,  .678}97.5$_{\pm0.4}$ & \cellcolor[rgb]{ .957,  .69,  .518}98.6$_{\pm0.4}$ & \cellcolor[rgb]{ .957,  .69,  .518}99.9$_{\pm0.0}$ \\
    \bottomrule
    \end{tabular}%
    }
\label{tab:cifar10-ASR}
\vspace{-1mm}
\end{table*}%

Based on the above analyses,  we can infer that the ``transferability circuit'' of \AT ~may primarily arise from  the trade-off effect between smoothness and  similarity, given the restriction of  surrogates with  relatively good  accuracies  and  a highly effective attack strategy.
This inference is intuitively inspired by the implications regarding all the surrogate-related factors in \cref{theorem1}.

Meanwhile, we observe that the  ``transferability circuit'' of \AT ~generally exists, though it varies under different non-surrogate circumstances. For example, the optimal $\epsilon$ AT to achieve the best ASRs varies among datasets and perturbation budgets in \cref{fig:adv-transfer}. 
This suggests that for different non-surrogate factors, the extent to which model smoothness and gradient similarity contribute to transferability may vary.

Consequently, we make the conjecture that when generating  AEs against accurate surrogate models using effective strategies, the trade-off between  model smoothness and gradient similarity is strongly correlated with the transferability of those examples from the surrogate models.
To test this conjecture, we will investigate changes in model smoothness and gradient similarity alongside the corresponding transfer ASRs under various surrogate training mechanisms in \cref{sec:trade-off:da,sec:trade-off:gr}.
Furthermore, in \cref{analysis}, \textbf{Q1},  we provide a statistical analysis of the correlations between model smoothness, gradient similarity and transferability.
Moving forward, the primary concern at hand is to ascertain the rationale of this similarity deterioration in \AT, as well as how to generally avoid this and reach a better balance of smoothness and similarity for stronger transfer attacks.

\begin{table*}[!tp]
\setlength{\abovecaptionskip}{0cm}
\setlength{\belowcaptionskip}{-0cm}
\renewcommand\arraystretch{0.64}
  \centering
  \caption{Transfer ASRs of AEs on \textbf{ImageNette} using \textbf{PGD}.
  We plot this table in the same way as \cref{tab:cifar10-ASR}. In \%.
  }
  \tabcolsep=0.03cm
  \resizebox{\textwidth}{!}
  {
    \begin{tabular}{c|cccc|cccc|cccc}
    \toprule    \multicolumn{13}{c}{\textbf{\scriptsize Untargeted}} \\
\hline
\multicolumn{1}{l|}{} & \multicolumn{4}{c|}{\scriptsize 4/255}                                                                                                                & \multicolumn{4}{c|}{\scriptsize 8/255}                                                                                                                  & \multicolumn{4}{c}{\scriptsize 16/255}                                                                                                                 \\
\multicolumn{1}{l|}{} & \scriptsize VGG16                            & \scriptsize DenseNet121                      & \scriptsize MobileNetV2                      & \scriptsize Xception                         & \scriptsize VGG16                             & \scriptsize DenseNet121                      & \scriptsize MobileNetV2                       & \scriptsize Xception                         & \scriptsize VGG16                             & \scriptsize DenseNet121                      & \scriptsize MobileNetV2                      & \scriptsize Xception                         \\ \hline
\scriptsize \textit{ST}          & \scriptsize \cellcolor[HTML]{FFFFFF}$\textrm{11.2}_{\pm1.0}$& \scriptsize \cellcolor[HTML]{FFFFFF}$\textrm{17.5}_{\pm1.1}$& \scriptsize \cellcolor[HTML]{FFFFFF}$\textrm{7.6}_{\pm0.4}$ & \scriptsize \cellcolor[HTML]{FFFFFF}$\textrm{8.7}_{\pm0.5}$ & \scriptsize \cellcolor[HTML]{FFFFFF}$\textrm{28.7}_{\pm1.1}$ & \scriptsize \cellcolor[HTML]{FFFFFF}$\textrm{42.7}_{\pm3.5}$& \scriptsize \cellcolor[HTML]{FFFFFF}$\textrm{18.3}_{\pm0.7}$ & \scriptsize \cellcolor[HTML]{FFFFFF}$\textrm{18.5}_{\pm1.3}$& \scriptsize \cellcolor[HTML]{FFFFFF}$\textrm{61.7}_{\pm4.7}$ & \scriptsize \cellcolor[HTML]{FFFFFF}$\textrm{81.4}_{\pm4.6}$& \scriptsize \cellcolor[HTML]{FFFFFF}$\textrm{49.3}_{\pm2.7}$& \scriptsize \cellcolor[HTML]{FFFFFF}$\textrm{44.1}_{\pm5.1}$ \\ \hline
\scriptsize \textit{MU}, $\tau=1$     & \scriptsize \cellcolor[HTML]{F2F2F2}$\textrm{9.2}_{\pm1.6}$ & \scriptsize \cellcolor[HTML]{F2F2F2}$\textrm{12.7}_{\pm0.9}$& \scriptsize \cellcolor[HTML]{F2F2F2}$\textrm{7.2}_{\pm0.6}$ & \scriptsize \cellcolor[HTML]{F2F2F2}$\textrm{7.2}_{\pm0.2}$ & \scriptsize \cellcolor[HTML]{F2F2F2}$\textrm{20.7}_{\pm1.3}$ & \scriptsize \cellcolor[HTML]{F2F2F2}$\textrm{22.7}_{\pm1.3}$& \scriptsize \cellcolor[HTML]{F2F2F2}$\textrm{13.7}_{\pm1.5}$ & \scriptsize \cellcolor[HTML]{F2F2F2}$\textrm{12.7}_{\pm1.1}$& \scriptsize \cellcolor[HTML]{F2F2F2}$\textrm{43.9}_{\pm2.3}$ & \scriptsize \cellcolor[HTML]{F2F2F2}$\textrm{53.3}_{\pm4.1}$& \scriptsize \cellcolor[HTML]{F2F2F2}$\textrm{38.3}_{\pm3.1}$& \scriptsize \cellcolor[HTML]{F2F2F2}$\textrm{29.3}_{\pm2.9}$ \\
\scriptsize \textit{MU}, $\tau=5$     & \scriptsize \cellcolor[HTML]{F2F2F2}$\textrm{6.7}_{\pm1.1}$ & \scriptsize \cellcolor[HTML]{F2F2F2}$\textrm{6.7}_{\pm0.9}$ & \scriptsize \cellcolor[HTML]{F2F2F2}$\textrm{5.1}_{\pm0.5}$ & \scriptsize \cellcolor[HTML]{F2F2F2}$\textrm{5.7}_{\pm0.5}$ & \scriptsize \cellcolor[HTML]{F2F2F2}$\textrm{11.5}_{\pm0.5}$ & \scriptsize \cellcolor[HTML]{F2F2F2}$\textrm{12.0}_{\pm0.6}$& \scriptsize \cellcolor[HTML]{F2F2F2}$\textrm{8.1}_{\pm0.7}$  & \scriptsize \cellcolor[HTML]{F2F2F2}$\textrm{8.1}_{\pm0.1}$ & \scriptsize \cellcolor[HTML]{F2F2F2}$\textrm{29.3}_{\pm0.7}$ & \scriptsize \cellcolor[HTML]{F2F2F2}$\textrm{29.2}_{\pm2.4}$& \scriptsize \cellcolor[HTML]{F2F2F2}$\textrm{24.1}_{\pm1.3}$& \scriptsize \cellcolor[HTML]{F2F2F2}$\textrm{18.8}_{\pm0.2}$ \\
\scriptsize \textit{CM}, $\tau=1$     & \scriptsize \cellcolor[HTML]{F2F2F2}$\textrm{7.8}_{\pm0.6}$ & \scriptsize \cellcolor[HTML]{F2F2F2}$\textrm{10.6}_{\pm1.0}$  & \scriptsize \cellcolor[HTML]{F2F2F2}$\textrm{6.1}_{\pm0.7}$ & \scriptsize \cellcolor[HTML]{F2F2F2}$\textrm{7.1}_{\pm0.5}$ & \scriptsize \cellcolor[HTML]{F2F2F2}$\textrm{16.5}_{\pm1.3}$ & \scriptsize \cellcolor[HTML]{F2F2F2}$\textrm{18.1}_{\pm1.7}$& \scriptsize \cellcolor[HTML]{F2F2F2}$\textrm{10.3}_{\pm0.3}$ & \scriptsize \cellcolor[HTML]{F2F2F2}$\textrm{10.3}_{\pm0.1}$& \scriptsize \cellcolor[HTML]{F2F2F2}$\textrm{35.8}_{\pm1.0}$  & \scriptsize \cellcolor[HTML]{F2F2F2}$\textrm{42.6}_{\pm2.4}$& \scriptsize \cellcolor[HTML]{F2F2F2}$\textrm{29.6}_{\pm2.0}$  & \scriptsize \cellcolor[HTML]{F2F2F2}$\textrm{21.5}_{\pm0.5}$ \\
\scriptsize \textit{CM}, $\tau=5$     & \scriptsize \cellcolor[HTML]{F2F2F2}$\textrm{5.5}_{\pm0.5}$ & \scriptsize \cellcolor[HTML]{F2F2F2}$\textrm{6.3}_{\pm0.1}$ & \scriptsize \cellcolor[HTML]{F2F2F2}$\textrm{4.2}_{\pm0.4}$ & \scriptsize \cellcolor[HTML]{F2F2F2}$\textrm{4.6}_{\pm0.4}$ & \scriptsize \cellcolor[HTML]{F2F2F2}$\textrm{8.9}_{\pm0.9}$  & \scriptsize \cellcolor[HTML]{F2F2F2}$\textrm{9.3}_{\pm0.3}$ & \scriptsize \cellcolor[HTML]{F2F2F2}$\textrm{6.7}_{\pm0.3}$  & \scriptsize \cellcolor[HTML]{F2F2F2}$\textrm{6.7}_{\pm0.5}$ & \scriptsize \cellcolor[HTML]{F2F2F2}$\textrm{19.3}_{\pm0.5}$ & \scriptsize \cellcolor[HTML]{F2F2F2}$\textrm{18.3}_{\pm1.1}$& \scriptsize \cellcolor[HTML]{F2F2F2}$\textrm{17.0}_{\pm0.6}$  & \scriptsize \cellcolor[HTML]{F2F2F2}$\textrm{11.7}_{\pm0.5}$ \\
\scriptsize \textit{CO}, $\tau=1$     & \scriptsize \cellcolor[HTML]{FCE4D6}$\textrm{11.8}_{\pm0.4}$& \scriptsize \cellcolor[HTML]{F2F2F2}$\textrm{16.7}_{\pm1.1}$& \scriptsize \cellcolor[HTML]{FCE4D6}$\textrm{8.1}_{\pm0.5}$ & \scriptsize $\textrm{8.7}_{\pm0.3}$                         & \scriptsize \cellcolor[HTML]{F2F2F2}$\textrm{28.3}_{\pm1.1}$                         & \scriptsize \cellcolor[HTML]{F2F2F2}$\textrm{40.5}_{\pm2.9}$& \scriptsize $\textrm{18.5}_{\pm0.7}$ & \scriptsize $\textrm{18.4}_{\pm1.4}$                        & \scriptsize $\textrm{61.8}_{\pm1.4}$ & \scriptsize \cellcolor[HTML]{F2F2F2}$\textrm{79.8}_{\pm2.4}$& \scriptsize \cellcolor[HTML]{FCE4D6}$\textrm{50.9}_{\pm1.9}$& \scriptsize \cellcolor[HTML]{F2F2F2}$\textrm{43.8}_{\pm0.6}$ \\
\scriptsize \textit{CO}, $\tau=5$     & \scriptsize \cellcolor[HTML]{F2F2F2}$\textrm{10.3}_{\pm0.7}$& \scriptsize \cellcolor[HTML]{F2F2F2}$\textrm{14.3}_{\pm0.3}$& \scriptsize \cellcolor[HTML]{F2F2F2}$\textrm{7.3}_{\pm0.9}$ & \scriptsize \cellcolor[HTML]{F2F2F2}$\textrm{7.7}_{\pm0.7}$ & \scriptsize \cellcolor[HTML]{F2F2F2}$\textrm{25.1}_{\pm1.7}$ & \scriptsize \cellcolor[HTML]{F2F2F2}$\textrm{34.5}_{\pm2.3}$& \scriptsize \cellcolor[HTML]{F2F2F2}$\textrm{17.3}_{\pm1.9}$ & \scriptsize \cellcolor[HTML]{F2F2F2}$\textrm{15.9}_{\pm1.1}$& \scriptsize \cellcolor[HTML]{F2F2F2}$\textrm{56.4}_{\pm4.2}$ & \scriptsize \cellcolor[HTML]{F2F2F2}$\textrm{75.0}_{\pm3.6}$  & \scriptsize \cellcolor[HTML]{F2F2F2}$\textrm{46.3}_{\pm0.9}$& \scriptsize \cellcolor[HTML]{F2F2F2}$\textrm{40.1}_{\pm2.1}$ \\
\scriptsize \textit{LS}, $\tau=1$     & \scriptsize \cellcolor[HTML]{F2F2F2}$\textrm{6.8}_{\pm0.6}$ & \scriptsize \cellcolor[HTML]{F2F2F2}$\textrm{8.7}_{\pm0.3}$ & \scriptsize \cellcolor[HTML]{F2F2F2}$\textrm{5.2}_{\pm0.4}$ & \scriptsize \cellcolor[HTML]{F2F2F2}$\textrm{5.9}_{\pm0.9}$ & \scriptsize \cellcolor[HTML]{F2F2F2}$\textrm{13.2}_{\pm0.6}$ & \scriptsize \cellcolor[HTML]{F2F2F2}$\textrm{14.7}_{\pm0.1}$& \scriptsize \cellcolor[HTML]{F2F2F2}$\textrm{9.5}_{\pm0.1}$  & \scriptsize \cellcolor[HTML]{F2F2F2}$\textrm{8.5}_{\pm0.5}$ & \scriptsize \cellcolor[HTML]{F2F2F2}$\textrm{27.6}_{\pm0.2}$ & \scriptsize \cellcolor[HTML]{F2F2F2}$\textrm{35.5}_{\pm0.7}$& \scriptsize \cellcolor[HTML]{F2F2F2}$\textrm{24.8}_{\pm0.8}$& \scriptsize \cellcolor[HTML]{F2F2F2}$\textrm{18.0}_{\pm0.6}$ \\
\scriptsize \textit{LS}, $\tau=5$     & \scriptsize \cellcolor[HTML]{F2F2F2}$\textrm{6.1}_{\pm0.3}$ & \scriptsize \cellcolor[HTML]{F2F2F2}$\textrm{7.1}_{\pm0.5}$ & \scriptsize \cellcolor[HTML]{F2F2F2}$\textrm{4.7}_{\pm0.5}$ & \scriptsize \cellcolor[HTML]{F2F2F2}$\textrm{5.5}_{\pm0.5}$ & \scriptsize \cellcolor[HTML]{F2F2F2}$\textrm{9.1}_{\pm1.1}$  & \scriptsize \cellcolor[HTML]{F2F2F2}$\textrm{10.2}_{\pm0.8}$& \scriptsize \cellcolor[HTML]{F2F2F2}$\textrm{7.2}_{\pm0.2}$  & \scriptsize \cellcolor[HTML]{F2F2F2}$\textrm{7.8}_{\pm0.2}$ & \scriptsize \cellcolor[HTML]{F2F2F2}$\textrm{20.0}_{\pm1.2}$   & \scriptsize \cellcolor[HTML]{F2F2F2}$\textrm{23.1}_{\pm1.3}$& \scriptsize \cellcolor[HTML]{F2F2F2}$\textrm{18.6}_{\pm1.6}$& \scriptsize \cellcolor[HTML]{F2F2F2}$\textrm{14.4}_{\pm0.8}$ \\ \hline
\scriptsize \textit{AT}          & \scriptsize \cellcolor[HTML]{FCE4D6}$\textrm{14.7}_{\pm2.1}$& \scriptsize \cellcolor[HTML]{FCE4D6}$\textrm{20.5}_{\pm1.5}$& \scriptsize \cellcolor[HTML]{FCE4D6}$\textrm{15.3}_{\pm1.3}$& \scriptsize \cellcolor[HTML]{FCE4D6}$\textrm{15.7}_{\pm0.9}$& \scriptsize \cellcolor[HTML]{FCE4D6}$\textrm{52.1}_{\pm6.3}$ & \scriptsize \cellcolor[HTML]{FCE4D6}$\textrm{68.7}_{\pm2.9}$& \scriptsize \cellcolor[HTML]{FCE4D6}$\textrm{59.3}_{\pm7.1}$ & \scriptsize \cellcolor[HTML]{FCE4D6}$\textrm{52.3}_{\pm5.1}$& \scriptsize \cellcolor[HTML]{FCE4D6}$\textrm{96.5}_{\pm1.5}$ & \scriptsize \cellcolor[HTML]{FCE4D6}$\textrm{98.8}_{\pm0.4}$& \scriptsize \cellcolor[HTML]{F8CBAD}$\textrm{97.5}_{\pm0.5}$& \scriptsize \cellcolor[HTML]{F8CBAD}$\textrm{95.7}_{\pm0.9}$ \\ \hline
\scriptsize \textit{IR}          & \scriptsize \cellcolor[HTML]{FCE4D6}$\textrm{21.0}_{\pm6.6}$& \scriptsize \cellcolor[HTML]{FCE4D6}$\textrm{31.7}_{\pm6.9}$& \scriptsize \cellcolor[HTML]{FCE4D6}$\textrm{18.5}_{\pm5.1}$& \scriptsize \cellcolor[HTML]{FCE4D6}$\textrm{19.5}_{\pm4.3}$& \scriptsize \cellcolor[HTML]{FCE4D6}$\textrm{62.5}_{\pm14.3}$& \scriptsize \cellcolor[HTML]{FCE4D6}$\textrm{85.6}_{\pm9.0}$& \scriptsize \cellcolor[HTML]{FCE4D6}$\textrm{65.2}_{\pm16.4}$& \scriptsize \cellcolor[HTML]{FCE4D6}$\textrm{58.4}_{\pm14.0}$ & \scriptsize \cellcolor[HTML]{FCE4D6}$\textrm{94.7}_{\pm6.9}$ & \scriptsize \cellcolor[HTML]{F8CBAD}$\textrm{99.5}_{\pm0.3}$& \scriptsize \cellcolor[HTML]{FCE4D6}$\textrm{95.7}_{\pm5.3}$& \scriptsize \cellcolor[HTML]{FCE4D6}$\textrm{94.6}_{\pm5.0}$   \\
\scriptsize \textit{JR}          & \scriptsize \cellcolor[HTML]{FCE4D6}$\textrm{26.3}_{\pm2.1}$& \scriptsize \cellcolor[HTML]{FCE4D6}$\textrm{37.3}_{\pm0.7}$& \scriptsize \cellcolor[HTML]{FCE4D6}$\textrm{21.7}_{\pm1.3}$& \scriptsize \cellcolor[HTML]{FCE4D6}$\textrm{20.5}_{\pm1.3}$& \scriptsize \cellcolor[HTML]{FCE4D6}$\textrm{68.5}_{\pm2.1}$ & \scriptsize \cellcolor[HTML]{FCE4D6}$\textrm{87.0}_{\pm0.2}$& \scriptsize \cellcolor[HTML]{FCE4D6}$\textrm{67.6}_{\pm3.6}$ & \scriptsize \cellcolor[HTML]{FCE4D6}$\textrm{56.0}_{\pm5.6}$& \scriptsize \cellcolor[HTML]{FCE4D6}$\textrm{95.0}_{\pm1.4}$ & \scriptsize \cellcolor[HTML]{F8CBAD}$\textrm{99.5}_{\pm0.1}$& \scriptsize \cellcolor[HTML]{FCE4D6}$\textrm{94.7}_{\pm1.1}$& \scriptsize \cellcolor[HTML]{FCE4D6}$\textrm{89.8}_{\pm3.0}$   \\
\scriptsize \textit{ER}          & \scriptsize \cellcolor[HTML]{FCE4D6}$\textrm{13.2}_{\pm2.8}$& \scriptsize \cellcolor[HTML]{FCE4D6}$\textrm{20.9}_{\pm1.7}$& \scriptsize \cellcolor[HTML]{FCE4D6}$\textrm{8.9}_{\pm1.5}$ & \scriptsize \cellcolor[HTML]{FCE4D6}$\textrm{9.6}_{\pm1.0}$ & \scriptsize \cellcolor[HTML]{FCE4D6}$\textrm{33.3}_{\pm9.9}$ & \scriptsize \cellcolor[HTML]{FCE4D6}$\textrm{51.1}_{\pm8.7}$& \scriptsize \cellcolor[HTML]{FCE4D6}$\textrm{20.7}_{\pm3.7}$ & \scriptsize \cellcolor[HTML]{FCE4D6}$\textrm{20.5}_{\pm3.3}$& \scriptsize \cellcolor[HTML]{FCE4D6}$\textrm{67.5}_{\pm15.9}$& \scriptsize \cellcolor[HTML]{FCE4D6}$\textrm{89.3}_{\pm6.7}$& \scriptsize \cellcolor[HTML]{FCE4D6}$\textrm{54.5}_{\pm9.1}$& \scriptsize \cellcolor[HTML]{FCE4D6}$\textrm{50.1}_{\pm8.7}$ \\
\scriptsize \textit{SAM}         & \scriptsize \cellcolor[HTML]{FCE4D6}$\textrm{22.8}_{\pm1.4}$& \scriptsize \cellcolor[HTML]{FCE4D6}$\textrm{29.7}_{\pm1.5}$& \scriptsize \cellcolor[HTML]{FCE4D6}$\textrm{12.8}_{\pm0.8}$& \scriptsize \cellcolor[HTML]{FCE4D6}$\textrm{12.8}_{\pm0.0}$& \scriptsize \cellcolor[HTML]{FCE4D6}$\textrm{58.7}_{\pm2.7}$ & \scriptsize \cellcolor[HTML]{FCE4D6}$\textrm{72.8}_{\pm2.0}$  & \scriptsize \cellcolor[HTML]{FCE4D6}$\textrm{42.3}_{\pm1.1}$ & \scriptsize \cellcolor[HTML]{FCE4D6}$\textrm{34.7}_{\pm1.7}$& \scriptsize \cellcolor[HTML]{FCE4D6}$\textrm{91.4}_{\pm2.0}$   & \scriptsize \cellcolor[HTML]{FCE4D6}$\textrm{97.1}_{\pm0.3}$& \scriptsize \cellcolor[HTML]{FCE4D6}$\textrm{81.0}_{\pm1.8}$& \scriptsize \cellcolor[HTML]{FCE4D6}$\textrm{75.3}_{\pm1.7}$ \\ \hline
\scriptsize \textit{SAM}\&\textit{IR}         & \scriptsize \cellcolor[HTML]{F8CBAD}$\textrm{26.9}_{\pm5.5}$& \scriptsize \cellcolor[HTML]{F8CBAD}$\textrm{39.8}_{\pm4.8}$& \scriptsize \cellcolor[HTML]{F8CBAD}$\textrm{24.3}_{\pm4.7}$& \scriptsize \cellcolor[HTML]{F8CBAD}$\textrm{23.2}_{\pm4.0}$& \scriptsize \cellcolor[HTML]{F8CBAD}$\textrm{72.5}_{\pm10.9}$ & \scriptsize \cellcolor[HTML]{F8CBAD}$\textrm{92.6}_{\pm2.8}$  & \scriptsize \cellcolor[HTML]{F4B084}$\textrm{78.5}_{\pm9.5}$ & \scriptsize \cellcolor[HTML]{F4B084}$\textrm{68.9}_{\pm10.3}$& \scriptsize \cellcolor[HTML]{F4B084}$\textrm{98.1}_{\pm2.1}$   & \scriptsize \cellcolor[HTML]{F4B084}$\textrm{99.8}_{\pm0.2}$& \scriptsize \cellcolor[HTML]{F4B084}$\textrm{98.7}_{\pm1.5}$& \scriptsize \cellcolor[HTML]{F4B084}$\textrm{97.7}_{\pm2.3}$ \\
\scriptsize \textit{SAM}\&\textit{JR}         & \scriptsize \cellcolor[HTML]{F4B084}$\textrm{32.7}_{\pm4.3}$& \scriptsize \cellcolor[HTML]{F4B084}$\textrm{46.3}_{\pm3.7}$& \scriptsize \cellcolor[HTML]{F4B084}$\textrm{28.1}_{\pm4.5}$& \scriptsize \cellcolor[HTML]{F4B084}$\textrm{24.9}_{\pm4.1}$& \scriptsize \cellcolor[HTML]{F4B084}$\textrm{76.6}_{\pm5.6}$ & \scriptsize \cellcolor[HTML]{F4B084}$\textrm{93.6}_{\pm1.2}$  & \scriptsize \cellcolor[HTML]{F8CBAD}$\textrm{73.1}_{\pm7.7}$ & \scriptsize \cellcolor[HTML]{F8CBAD}$\textrm{65.9}_{\pm7.7}$& \scriptsize \cellcolor[HTML]{F8CBAD}$\textrm{96.7}_{\pm7.3}$   & \scriptsize \cellcolor[HTML]{F4B084}$\textrm{99.8}_{\pm0.2}$& \scriptsize \cellcolor[HTML]{FCE4D6}$\textrm{96.6}_{\pm1.6}$& \scriptsize \cellcolor[HTML]{FCE4D6}$\textrm{93.9}_{\pm4.1}$ \\
\toprule   \multicolumn{13}{c}{\textbf{\scriptsize Targeted}} \\
\hline
% \multicolumn{1}{l|}{} & \multicolumn{4}{c|}{4/255}                                                                                                             & \multicolumn{4}{c|}{8/255}                                                                                                                    & \multicolumn{4}{c}{16/255}                                                                                                                    \\
% \multicolumn{1}{l|}{} & VGG16                           & DenseNet121                      & MobileNetV2                     & Xception                        & VGG16                             & DenseNet121                       & MobileNetV2                       & Xception                          & VGG16                             & DenseNet121                       & MobileNetV2                       & Xception                          \\ \hline
\scriptsize \textit{ST}          & \scriptsize \cellcolor[HTML]{FFFFFF}2.1$_{\pm0.3}$& \scriptsize \cellcolor[HTML]{FFFFFF}4.1$_{\pm0.3}$ & \scriptsize \cellcolor[HTML]{FFFFFF}0.6$_{\pm0.4}$& \scriptsize \cellcolor[HTML]{FFFFFF}2.0$_{\pm0.2}$& \scriptsize \cellcolor[HTML]{FFFFFF}10.4$_{\pm0.8}$ & \scriptsize \cellcolor[HTML]{FFFFFF}19.6$_{\pm2.8}$ & \scriptsize \cellcolor[HTML]{FFFFFF}4.2$_{\pm0.4}$  & \scriptsize \cellcolor[HTML]{FFFFFF}5.7$_{\pm1.7}$  & \scriptsize \cellcolor[HTML]{FFFFFF}33.6$_{\pm3.8}$ & \scriptsize \cellcolor[HTML]{FFFFFF}60.3$_{\pm7.5}$ & \scriptsize \cellcolor[HTML]{FFFFFF}19.5$_{\pm2.1}$ & \scriptsize \cellcolor[HTML]{FFFFFF}19.7$_{\pm3.1}$  \\ \hline
\scriptsize \textit{MU}, $\tau=1$     & \scriptsize \cellcolor[HTML]{F2F2F2}1.3$_{\pm0.3}$& \scriptsize \cellcolor[HTML]{F2F2F2}2.3$_{\pm0.3}$ & \scriptsize \cellcolor[HTML]{FFFFFF}0.4$_{\pm0.2}$& \scriptsize \cellcolor[HTML]{F2F2F2}1.0$_{\pm0.2}$& \scriptsize \cellcolor[HTML]{F2F2F2}5.1$_{\pm0.7}$  & \scriptsize \cellcolor[HTML]{F2F2F2}7.3$_{\pm1.1}$  & \scriptsize \cellcolor[HTML]{F2F2F2}1.5$_{\pm0.3}$  & \scriptsize \cellcolor[HTML]{F2F2F2}2.5$_{\pm0.1}$  & \scriptsize \cellcolor[HTML]{F2F2F2}15.7$_{\pm1.9}$ & \scriptsize \cellcolor[HTML]{F2F2F2}25.1$_{\pm4.1}$ & \scriptsize \cellcolor[HTML]{F2F2F2}9.3$_{\pm1.5}$  & \scriptsize \cellcolor[HTML]{F2F2F2}8.8$_{\pm2.4}$   \\
\scriptsize \textit{MU}, $\tau=5$     & \scriptsize \cellcolor[HTML]{F2F2F2}0.9$_{\pm0.1}$& \scriptsize \cellcolor[HTML]{F2F2F2}0.6$_{\pm0.0}$ & \scriptsize \cellcolor[HTML]{F2F2F2}0.1$_{\pm0.1}$& \scriptsize \cellcolor[HTML]{F2F2F2}0.5$_{\pm0.1}$& \scriptsize \cellcolor[HTML]{F2F2F2}1.3$_{\pm0.3}$  & \scriptsize \cellcolor[HTML]{F2F2F2}1.1$_{\pm0.1}$  & \scriptsize \cellcolor[HTML]{F2F2F2}0.4$_{\pm0.4}$  & \scriptsize \cellcolor[HTML]{F2F2F2}1.0$_{\pm0.2}$  & \scriptsize \cellcolor[HTML]{F2F2F2}3.8$_{\pm0.8}$  & \scriptsize \cellcolor[HTML]{F2F2F2}3.7$_{\pm0.7}$  & \scriptsize \cellcolor[HTML]{F2F2F2}1.7$_{\pm0.1}$  & \scriptsize \cellcolor[HTML]{F2F2F2}2.8$_{\pm0.6}$   \\
\scriptsize \textit{CM}, $\tau=1$     & \scriptsize \cellcolor[HTML]{F2F2F2}1.1$_{\pm0.5}$& \scriptsize \cellcolor[HTML]{F2F2F2}1.4$_{\pm0.2}$ & \scriptsize \cellcolor[HTML]{F2F2F2}0.2$_{\pm0.2}$& \scriptsize \cellcolor[HTML]{F2F2F2}0.7$_{\pm0.3}$& \scriptsize \cellcolor[HTML]{F2F2F2}2.3$_{\pm0.3}$  & \scriptsize \cellcolor[HTML]{F2F2F2}3.1$_{\pm0.5}$  & \scriptsize \cellcolor[HTML]{F2F2F2}0.7$_{\pm0.1}$  & \scriptsize \cellcolor[HTML]{F2F2F2}1.7$_{\pm0.5}$ & \scriptsize \cellcolor[HTML]{F2F2F2}6.5$_{\pm0.5}$  & \scriptsize \cellcolor[HTML]{F2F2F2}9.4$_{\pm0.4}$  & \scriptsize \cellcolor[HTML]{F2F2F2}3.7$_{\pm0.3}$  & \scriptsize \cellcolor[HTML]{F2F2F2}4.9$_{\pm0.3}$   \\
\scriptsize \textit{CM}, $\tau=5$     & \scriptsize \cellcolor[HTML]{F2F2F2}0.7$_{\pm0.1}$& \scriptsize \cellcolor[HTML]{F2F2F2}0.8$_{\pm0.0}$ & \scriptsize \cellcolor[HTML]{F2F2F2}0.1$_{\pm0.1}$& \scriptsize \cellcolor[HTML]{F2F2F2}0.4$_{\pm0.2}$& \scriptsize \cellcolor[HTML]{F2F2F2}0.9$_{\pm0.3}$  & \scriptsize \cellcolor[HTML]{F2F2F2}1.0$_{\pm0.2}$  & \scriptsize \cellcolor[HTML]{F2F2F2}0.2$_{\pm0.2}$  & \scriptsize \cellcolor[HTML]{F2F2F2}0.9$_{\pm0.1}$  & \scriptsize \cellcolor[HTML]{F2F2F2}2.3$_{\pm0.3}$  & \scriptsize \cellcolor[HTML]{F2F2F2}2.5$_{\pm0.1}$  & \scriptsize \cellcolor[HTML]{F2F2F2}1.4$_{\pm0.4}$  & \scriptsize \cellcolor[HTML]{F2F2F2}1.6$_{\pm0.2}$   \\
\scriptsize \textit{CO}, $\tau=1$     & \scriptsize \cellcolor[HTML]{FFFFFF}2.3$_{\pm0.5}$& \scriptsize \cellcolor[HTML]{F2F2F2}3.4$_{\pm0.6}$ & \scriptsize \cellcolor[HTML]{FFFFFF}0.4$_{\pm0.2}$& \scriptsize \cellcolor[HTML]{F2F2F2}1.7$_{\pm0.3}$& \scriptsize \cellcolor[HTML]{F2F2F2}9.3$_{\pm1.3}$  & \scriptsize \cellcolor[HTML]{F2F2F2}18.2$_{\pm3.0}$ & \scriptsize \cellcolor[HTML]{F2F2F2}3.2$_{\pm1.0}$  & \scriptsize \cellcolor[HTML]{F2F2F2}4.9$_{\pm0.5}$  & \scriptsize \cellcolor[HTML]{F2F2F2}31.0$_{\pm2.4}$ & \scriptsize \cellcolor[HTML]{FCE4D6}62.3$_{\pm6.9}$ & \scriptsize \cellcolor[HTML]{F2F2F2}18.2$_{\pm1.6}$ & \scriptsize \cellcolor[HTML]{F2F2F2}17.4$_{\pm1.8}$  \\
\scriptsize \textit{CO}, $\tau=5$     & \scriptsize \cellcolor[HTML]{FFFFFF}1.9$_{\pm0.1}$& \scriptsize \cellcolor[HTML]{F2F2F2}3.1$_{\pm0.9}$ & \scriptsize 0.6$_{\pm0.2}$                        & \scriptsize \cellcolor[HTML]{F2F2F2}1.2$_{\pm0.2}$& \scriptsize \cellcolor[HTML]{F2F2F2}8.8$_{\pm1.4}$  & \scriptsize \cellcolor[HTML]{F2F2F2}14.9$_{\pm0.9}$ & \scriptsize \cellcolor[HTML]{F2F2F2}3.8$_{\pm1.0}$  & \scriptsize \cellcolor[HTML]{F2F2F2}4.3$_{\pm1.1}$  & \scriptsize \cellcolor[HTML]{F2F2F2}28.3$_{\pm2.9}$ & \scriptsize \cellcolor[HTML]{F2F2F2}52.5$_{\pm2.1}$ & \scriptsize \cellcolor[HTML]{F2F2F2}16.2$_{\pm3.0}$   & \scriptsize \cellcolor[HTML]{F2F2F2}16.9$_{\pm1.3}$  \\
\scriptsize \textit{LS}, $\tau=1$     & \scriptsize \cellcolor[HTML]{F2F2F2}0.9$_{\pm0.1}$& \scriptsize \cellcolor[HTML]{F2F2F2}1.5$_{\pm0.5}$ & \scriptsize \cellcolor[HTML]{FFFFFF}0.4$_{\pm0.0}$& \scriptsize \cellcolor[HTML]{F2F2F2}0.7$_{\pm0.1}$& \scriptsize \cellcolor[HTML]{F2F2F2}1.6$_{\pm0.0}$  & \scriptsize \cellcolor[HTML]{F2F2F2}2.5$_{\pm0.1}$  & \scriptsize \cellcolor[HTML]{F2F2F2}0.5$_{\pm0.1}$  & \scriptsize \cellcolor[HTML]{F2F2F2}1.4$_{\pm0.2}$  & \scriptsize \cellcolor[HTML]{F2F2F2}4.0$_{\pm0.2}$  & \scriptsize \cellcolor[HTML]{F2F2F2}6.3$_{\pm0.7}$  & \scriptsize \cellcolor[HTML]{F2F2F2}3.2$_{\pm0.6}$  & \scriptsize \cellcolor[HTML]{F2F2F2}2.8$_{\pm0.2}$   \\
\scriptsize \textit{LS}, $\tau=5$     & \scriptsize \cellcolor[HTML]{F2F2F2}0.7$_{\pm0.1}$& \scriptsize \cellcolor[HTML]{F2F2F2}1.2$_{\pm0.0}$ & \scriptsize \cellcolor[HTML]{F2F2F2}0.3$_{\pm0.1}$& \scriptsize \cellcolor[HTML]{F2F2F2}0.5$_{\pm0.3}$& \scriptsize \cellcolor[HTML]{F2F2F2}1.2$_{\pm0.0}$  & \scriptsize \cellcolor[HTML]{F2F2F2}1.9$_{\pm0.5}$  & \scriptsize \cellcolor[HTML]{F2F2F2}0.3$_{\pm0.1}$  & \scriptsize \cellcolor[HTML]{F2F2F2}1.0$_{\pm0.2}$  & \scriptsize \cellcolor[HTML]{F2F2F2}2.7$_{\pm0.9}$  & \scriptsize \cellcolor[HTML]{F2F2F2}3.7$_{\pm0.9}$  & \scriptsize \cellcolor[HTML]{F2F2F2}2.2$_{\pm0.4}$  & \scriptsize \cellcolor[HTML]{F2F2F2}2.3$_{\pm0.7}$   \\ \hline
\scriptsize \textit{AT}          & \scriptsize 2.1$_{\pm0.9}$                        & \scriptsize 4.0$_{\pm0.8}$                         & \scriptsize \cellcolor[HTML]{FCE4D6}1.1$_{\pm0.3}$& \scriptsize \cellcolor[HTML]{FCE4D6}2.6$_{\pm0.2}$& \scriptsize \cellcolor[HTML]{FCE4D6}22.5$_{\pm3.1}$ & \scriptsize \cellcolor[HTML]{FCE4D6}39.3$_{\pm2.9}$ & \scriptsize \cellcolor[HTML]{FCE4D6}26.2$_{\pm5.4}$ & \scriptsize \cellcolor[HTML]{FCE4D6}25.1$_{\pm4.1}$ & \scriptsize \cellcolor[HTML]{FCE4D6}84.7$_{\pm3.7}$ & \scriptsize \cellcolor[HTML]{FCE4D6}94.1$_{\pm1.5}$ & \scriptsize \cellcolor[HTML]{F8CBAD}88.3$_{\pm2.2}$ & \scriptsize \cellcolor[HTML]{F8CBAD}89.3$_{\pm2.5}$  \\ \hline
\scriptsize \textit{IR}          & \scriptsize \cellcolor[HTML]{FCE4D6}4.2$_{\pm2.0}$& \scriptsize \cellcolor[HTML]{FCE4D6}9.9$_{\pm3.7}$ & \scriptsize \cellcolor[HTML]{FCE4D6}3.4$_{\pm1.6}$& \scriptsize \cellcolor[HTML]{FCE4D6}3.3$_{\pm1.1}$& \scriptsize \cellcolor[HTML]{FCE4D6}33.6$_{\pm13.4}$& \scriptsize \cellcolor[HTML]{FCE4D6}65.3$_{\pm14.3}$& \scriptsize \cellcolor[HTML]{FCE4D6}34.1$_{\pm12.7}$& \scriptsize \cellcolor[HTML]{FCE4D6}31.9$_{\pm13.5}$& \scriptsize \cellcolor[HTML]{FCE4D6}84.5$_{\pm14.3}$& \scriptsize \cellcolor[HTML]{FCE4D6}98.2$_{\pm2.2}$ & \scriptsize \cellcolor[HTML]{FCE4D6}86.9$_{\pm12.3}$& \scriptsize \cellcolor[HTML]{FCE4D6}84.6$_{\pm14.6}$ \\
\scriptsize \textit{JR}          & \scriptsize \cellcolor[HTML]{F8CBAD}6.5$_{\pm0.5}$& \scriptsize \cellcolor[HTML]{F8CBAD}14.0$_{\pm0.8}$& \scriptsize \cellcolor[HTML]{FCE4D6}4.6$_{\pm0.4}$& \scriptsize \cellcolor[HTML]{F8CBAD}4.8$_{\pm0.6}$& \scriptsize \cellcolor[HTML]{FCE4D6}41.4$_{\pm5.2}$ & \scriptsize \cellcolor[HTML]{FCE4D6}71.7$_{\pm2.5}$ & \scriptsize \cellcolor[HTML]{FCE4D6}36.5$_{\pm5.7}$ & \scriptsize \cellcolor[HTML]{FCE4D6}33.6$_{\pm5.8}$ & \scriptsize \cellcolor[HTML]{FCE4D6}82.5$_{\pm7.3}$ & \scriptsize \cellcolor[HTML]{FCE4D6}98.1$_{\pm1.3}$ & \scriptsize \cellcolor[HTML]{FCE4D6}79.5$_{\pm6.7}$ & \scriptsize \cellcolor[HTML]{FCE4D6}76.0$_{\pm9.4}$  \\
\scriptsize \textit{ER}          & \scriptsize \cellcolor[HTML]{FCE4D6}2.6$_{\pm0.8}$& \scriptsize \cellcolor[HTML]{FCE4D6}5.9$_{\pm1.5}$ & \scriptsize 0.7$_{\pm0.3}$& \scriptsize \cellcolor[HTML]{FFFFFF}1.9$_{\pm0.3}$& \scriptsize \cellcolor[HTML]{FCE4D6}12.3$_{\pm5.7}$ & \scriptsize \cellcolor[HTML]{FCE4D6}26.7$_{\pm8.1}$ & \scriptsize \cellcolor[HTML]{FCE4D6}4.7$_{\pm1.5}$  & \scriptsize \cellcolor[HTML]{FCE4D6}6.7$_{\pm2.5}$  & \scriptsize \cellcolor[HTML]{FCE4D6}40.4$_{\pm20.8}$& \scriptsize \cellcolor[HTML]{FCE4D6}75.0$_{\pm12.8}$& \scriptsize \cellcolor[HTML]{FCE4D6}23.5$_{\pm9.9}$ & \scriptsize \cellcolor[HTML]{FCE4D6}24.6$_{\pm10.8}$ \\
\scriptsize \textit{SAM}         & \scriptsize \cellcolor[HTML]{FCE4D6}6.1$_{\pm1.1}$& \scriptsize \cellcolor[HTML]{FCE4D6}8.9$_{\pm0.9}$ & \scriptsize \cellcolor[HTML]{FCE4D6}2.3$_{\pm0.5}$& \scriptsize \cellcolor[HTML]{FCE4D6}3.3$_{\pm0.1}$& \scriptsize \cellcolor[HTML]{FCE4D6}28.1$_{\pm1.1}$ & \scriptsize \cellcolor[HTML]{FCE4D6}46.5$_{\pm1.3}$ & \scriptsize \cellcolor[HTML]{FCE4D6}12.6$_{\pm2.0}$ & \scriptsize \cellcolor[HTML]{FCE4D6}11.6$_{\pm0.8}$ & \scriptsize \cellcolor[HTML]{FCE4D6}76.0$_{\pm0.4}$ & \scriptsize \cellcolor[HTML]{FCE4D6}92.1$_{\pm1.1}$ & \scriptsize \cellcolor[HTML]{FCE4D6}52.4$_{\pm1.0}$ & \scriptsize \cellcolor[HTML]{FCE4D6}53.7$_{\pm1.3}$  \\ \hline
\scriptsize \textit{SAM}\&\textit{IR}         & \scriptsize \cellcolor[HTML]{FCE4D6}5.9$_{\pm2.1}$& \scriptsize \cellcolor[HTML]{FCE4D6}13.2$_{\pm2.8}$ & \scriptsize \cellcolor[HTML]{F8CBAD}5.2$_{\pm1.6}$& \scriptsize \cellcolor[HTML]{FCE4D6}4.3$_{\pm0.9}$& \scriptsize \cellcolor[HTML]{F8CBAD}44.2$_{\pm12.8}$ & \scriptsize \cellcolor[HTML]{F8CBAD}76.3$_{\pm4.5}$ & \scriptsize \cellcolor[HTML]{F4B084}46.5$_{\pm12.1}$ & \scriptsize \cellcolor[HTML]{F4B084}43.6$_{\pm11.8}$ & \scriptsize \cellcolor[HTML]{F4B084}92.5$_{\pm6.7}$ & \scriptsize \cellcolor[HTML]{F4B084}99.5$_{\pm0.1}$ & \scriptsize \cellcolor[HTML]{F4B084}93.5$_{\pm4.5}$ & \scriptsize \cellcolor[HTML]{F4B084}92.4$_{\pm4.8}$  \\
\scriptsize \textit{SAM}\&\textit{JR}         & \scriptsize \cellcolor[HTML]{F4B084}9.9$_{\pm1.3}$& \scriptsize \cellcolor[HTML]{F4B084}19.5$_{\pm1.3}$ & \scriptsize \cellcolor[HTML]{F4B084}6.4$_{\pm0.6}$& \scriptsize \cellcolor[HTML]{F4B084}5.9$_{\pm1.1}$& \scriptsize \cellcolor[HTML]{F4B084}50.0$_{\pm6.4}$ & \scriptsize \cellcolor[HTML]{F4B084}78.7$_{\pm1.7}$ & \scriptsize \cellcolor[HTML]{F8CBAD}41.4$_{\pm9.0}$ & \scriptsize \cellcolor[HTML]{F8CBAD}38.4$_{\pm7.2}$ & \scriptsize \cellcolor[HTML]{F8CBAD}88.7$_{\pm7.5}$ & \scriptsize \cellcolor[HTML]{F8CBAD}99.3$_{\pm0.1}$ & \scriptsize \cellcolor[HTML]{FCE4D6}83.1$_{\pm5.7}$ & \scriptsize \cellcolor[HTML]{FCE4D6}84.1$_{\pm7.1}$  \\ 
 \hline
\end{tabular}
}
\label{tab:imagenette-ASR}
\vspace{-1mm}
\end{table*}

\vspace{1mm}
\noindent\textbf{Data distribution shift  impairs gradient similarity.}
Different from model smoothness, it is difficult to understand how exactly \AT ~degrades the gradient similarity. 
In this work, we attribute this  degradation  to the data distribution shift induced by the off-manifold examples in \AT.
Since the emerging of adversarial examples~\cite{szegedy2013intriguing}, there is a long-held belief that:
\textit{Clean data lies in a low-dimensional manifold. 
Even though the adversarial examples are close to the clean data, they lie off the underlying data manifold}~\cite{ma2018characterizing,song2018pixeldefend,khoury2019on,gilmer2018adversarial,liu2022towards}. 
Nevertheless, recent researches demonstrate that  AEs can also be on-manifold \cite{kwon2021improving,patel2021manifold,Stutz_2019_CVPR,NEURIPS2020_23937b42,liu2022towards}, 
and on-manifold and off-manifold AEs may co-exist~\cite{xiao2022understanding}. 
With this in mind, we analyze how  \AT ~enlarges the data distribution shift along with the increment of  adversarial budget $\epsilon$.

Given a regular loss function $\ell(f(x),y)$  on the low-dimensional manifold $\mathcal{M}$, the adversarial perturbation $\delta \in \mathbb{R}^d$ could make the loss $\ell(f(x+\delta),y)$ sufficiently high with $x + \delta \in \mathcal{M}$ or $x + \delta \notin \mathcal{M}$~\cite{xuICML2022Adversarially2022}. 
At a high level, \AT ~augments the dataset by adding adversarial examples during each iteration, obtaining an augmented data distribution on $\mathcal{P}_{\mathcal{X}}$, and the adversarial budget $\epsilon$ can be regarded as a parameter that controls the augmentation magnitude. 
Intuitively, a larger $\epsilon$  induces more space for off-manifold samples, resulting in a larger distribution shift.
On the other hand, it is also widely acknowledged that adversarial training with larger perturbation budgets leads to \textit{robust overfitting} \cite{wu2020adversarial,dong2022exploring,stutz2021relating,pmlr-v119-rice20a}.
Accordingly, a larger $\epsilon$ may cause more disbenefit of underfitting to the normal distribution $\mathcal{D}$.
Formally, we formalize our hypothesis  as follows:
\vspace{-1mm}
\begin{hypothesis}[\textbf{Distribution shift impairs gradient similarity}]
\label{hypothesis1}
Given two data distribution $\mathcal{D}, \mathcal{D}' \in \mathcal{P}_{\mathcal{X}\times\mathcal{Y}}$, and two source models $\mathcal{F}_{\mathcal{D}}$ and $\mathcal{F}_{\mathcal{D}'}$ trained on $\mathcal{D}$ and  $\mathcal{D}'$,  respectively, 
supposing the target model $\mathcal{G}_{\mathcal{D}}$ is also trained on $\mathcal{D}$  and they all share a joint training loss $\ell$,
if the distance between $\mathcal{D}$ and $\mathcal{D}'$ is large enough, then \black{
$\tilde{\mathcal{S}}_\mathcal{D}\left(\ell_{\mathcal{F}_{\mathcal{D}'}}, \ell_{\mathcal{G}_{\mathcal{D}}}\right)
<\tilde{\mathcal{S}}_\mathcal{D}\left(\ell_{\mathcal{F}_{\mathcal{D}}}, \ell_{\mathcal{G}_{\mathcal{D}}}\right)$} is likely to stand.
\end{hypothesis}

The intuition behind this hypothesis is that, on average, the two models trained on the same distribution should align better in the gradient direction than those trained with different distributions. 
Note that, we relax the distribution change of \AT ~on $\mathcal{P}_{\mathcal{X}}$ to $\mathcal{P}_{\mathcal{X}\times\mathcal{Y}}$ here, hypothesizing the general change on $\mathcal{P}_{\mathcal{X}\times\mathcal{Y}}$ impairs gradient similarity.
In \cref{analysis}, we  extend this hypothesis to a general version where target models are also trained over $\mathcal{P}_{\mathcal{X} \times \mathcal{Y}}$ (see \cref{hypothesis2}).

% \vspace{-1mm}
\section{Investigating Data Augmentation}
\label{sec:trade-off:da}

To extend the distribution shifts of \AT ~to more general cases and further verify \cref{hypothesis1}, we investigate how data augmentations, the popular training paradigm that explicitly changes data distribution, influence  gradient similarity. 
Additionally, we investigate the trade-off effect between similarity and smoothness  under data augmentations and observe how they reflect on the  transfer attack  accordingly.

% \vspace{-3mm}
\subsection{Data Augmentation Mechanisms}

We explore 4  popular data augmentation mechanisms and choose proper augmentation magnitude parameters to control the \black{degree of} data distribution shift from $\mathcal{D}$ to $\mathcal{D}'$, aligning the augmentation parameter $\epsilon$ in \AT. 

\vspace{0.5mm}
\noindent\textbf{Mixup (\textit{MU}) \cite{zhang2018mixup}} trains a model on the convex combination of randomly selected sample pairs, \ie, the mixed image $\tilde{x}_{i, j} = b x_i+(1-b )x_j$ and the corresponding label $\tilde{y}_{i, j}=b  y_i+(1-b )y_j$, where $(x_i, y_i), (x_j, y_j) \in \mathbf{S}$. 
Here $b \in [0,1]$ is a random variable drawn from the Beta distribution $\operatorname{Beta}(1,1)$. 
Additionally, we use a probability parameter $\mathbf{p}$ to control how much  $\mathcal{D}'$ shifts away from $\mathcal{D}$. 
Specifically, as $\mathbf{p}\rightarrow 0$, the augmented dataset will be identical to $\mathbf{S}$; 
as $\mathbf{p}\rightarrow 1$, all the samples will be interpolated. 
Thus, at a high level, $\mathbf{p}$ controls how much  $\mathcal{D}'$ shifts away from $\mathcal{D}$.
We train augmented models for \MU~with $\mathbf{p} \in [0.1, 0.3, 0.5, 0.7, 0.9]$.

\begin{figure}[t!]
\centering
\includegraphics[width=0.42\textwidth]{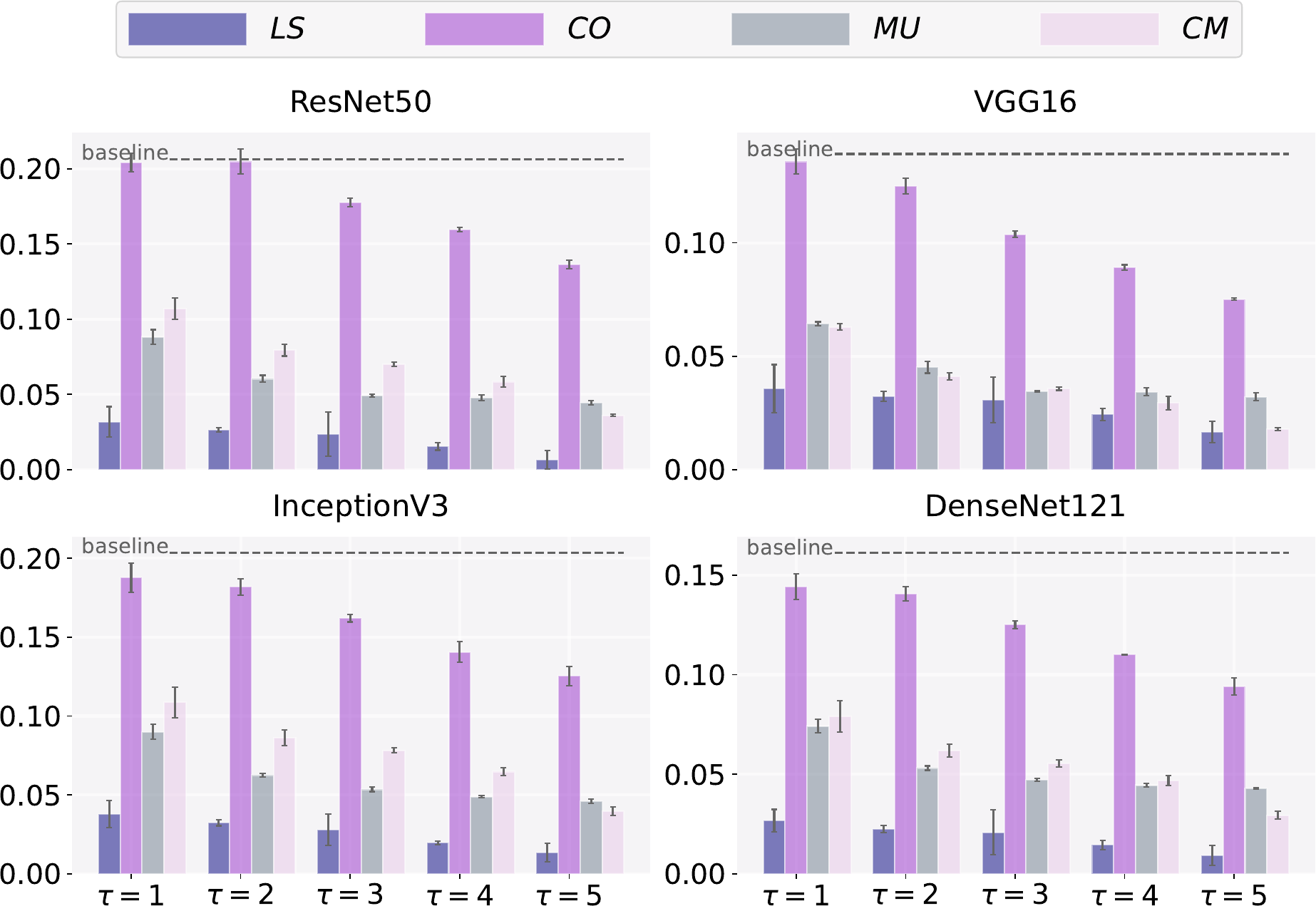}
\caption{Gradient similarities between  augmented surrogate and  \ST ~target  CIFAR-10 models with error bars at each $\tau$.
}
\label{fig:GS-DA-cifar10}
\end{figure}
\vspace{0.5mm}
\noindent\textbf{Cutmix (\textit{CM})~\cite{yun2019cutmix}} also augments both images and labels by mixing samples. 
The difference is patches are cut and pasted among training samples. 
A mixed sample $\tilde{x}_{i,j} = \mathbf{M_b } \odot x_i + (\mathbf{1}-\mathbf{M_b }) \odot x_j$ and its  label $\tilde{y}_{i,j}$ are also obtained from two samples $(x_i, y_i), (x_j, y_j) \in \mathbf{S}$, where $\mathbf{M_b}$ is a  binary matrix with the same size of $x_i$ to indicate the location for cutting and pasting, $\mathbf{1}$ is a matrix filled with  1, and $b$ is drawn from $\operatorname{Beta}(1,1)$.
$\tilde{y}_{i,j}$ is obtained similarly as \MU. 
Similar to in \MU, we use a probability parameter $\mathbf{p}$ to control the augmentation magnitude and train augmented models for \CM~with $\mathbf{p} \in [0.1, 0.3, 0.5, 0.7, 0.9]$.  

\vspace{0.5mm}
\noindent\textbf{Cutout (\textit{CO}) \cite{devries2017improved}} augments the dataset by masking out random regions of images with a size less than $M \times M$.   
In the real implementation, the mask-out region and its size for each image are not deterministic. A bigger $M$ value indicates it is more likely to mask out a bigger region. 
Hence, we utilize the value of $M$ only to regulate the extent of augmentation since it probabilistically governs the augmentation magnitude, similar to the role of $\mathbf{p}$ in \MU ~and \CM.
We select  $M \in [8, 12, 16, 20, 24]$ and $M \in [80, 100, 120, 140, 160]$ for CIFAR-10 and ImageNette, respectively.

\vspace{0.5mm}
\noindent\textbf{{Label smoothing (\textit{LS}) \cite{szegedy2016rethinking}}} augments the dataset by replacing the hard labels $\mathcal{L}$ with soft continuous labels in probability distribution. 
Specifically, the probability of the ground-truth class is $1-p$, and the probability of other classes will be uniformly assigned to $p/(m-1)$, where $p \in [0,1)$ and $m$ denotes the number of classes. 
We train models for \LS ~with $p \in [0.1,0.2,0.3,0,4,0.5]$.

These augmentations along with \AT ~change  data distributions in distinct manners. 
For a normal sample $(x, y)\sim\mathcal{D}$,  \AT ~associates input feature $x$ with ground-truth $y$ ``inaccurately'' under a noise $\delta$, while \CO ~associates part of ``right'' $x$ with ground-truth $y$, thus they explicitly change the distribution on $\mathcal{P}_\mathcal{X}$. 
Contrarily, both  \MU ~and \CM ~correlate parts of $x$ with $y$, causing shift in  $\mathcal{P}_{\mathcal{X}\times\mathcal{Y}}$. 
Whereas \LS ~relates the full $x$ to a ``wrong'' $y'$, causing shift in  $\mathcal{P}_{\mathcal{Y}}$.
Intuitively, these augmentation methods will cause different impacts on the gradient $\nabla\ell_\mathcal{F}(x, y)$.
As we  select 5 augmentation magnitudes for each augmentation, we use a general parameter $\tau \in [1,2,3,4,5] $ to simplify the notations, \eg,  $\tau=1$ means  $\mathbf{p}= 0.1$ for \MU~and $M=8$ for \CO~on CIFAR-10.

\subsection{Data Augmentation Impairs Similarity}
To verify \cref{hypothesis1}, we  explore how    different data distribution shifts influence  gradient similarity. For all the  experiments, we  train models with 3 different random seeds in each setting.
Besides, multiple target models are considered to ensure generality. 
The  results over 3 standard training (\ST) models without  augmentations are used as baselines.

\begin{figure}[t!]
\centering
\includegraphics[width=0.43\textwidth]{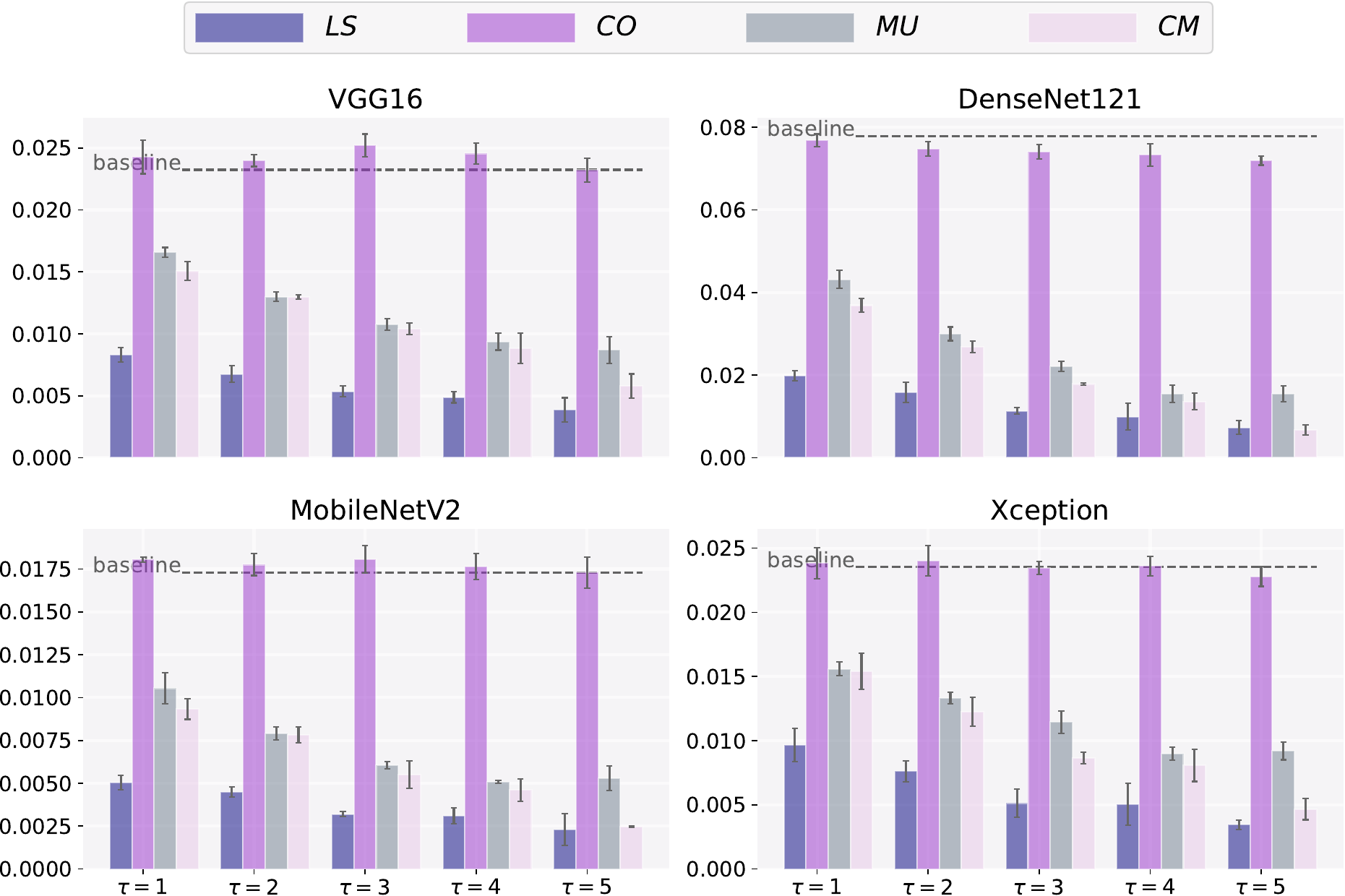}
\caption{Gradient similarities between  augmented surrogate and  \ST ~target  ImageNette models with error bars at each $\tau$.
}
\label{fig:GS-DA-imagenette}
\vspace{-2mm}
\end{figure}

\cref{fig:GS-DA-cifar10,fig:GS-DA-imagenette}  depict the alignments towards multiple target models for CIFAR-10 and ImageNette.
First,  we observe that \CO~harms gradient similarity the least, but its effect varies across different datasets.
Specifically, \CO ~slightly degrades  gradient similarity in CIFAR-10 while appearing comparable to baselines in ImageNette.
The difference in training settings between the two datasets could be the reason for this gap; CIFAR-10 models are trained from scratch, while ImageNette models are obtained by fine-tuning well-trained ImageNet classifiers. These different outcomes suggest that simply removing some pixels is not enough to make the model forget what it has previously ``seen",  since ImageNette is a 10-class subset of ImageNet.
On the other hand, \LS, \MU, and \CM~degrade similarity largely over both datasets.
Note that the similarity degradations  of these augmentations are in line with their  manipulations on  $\mathcal{P}_\mathcal{Y}$, 
suggesting modifying the ground-truth $y$ during training results in more impacts on the direction of gradient $\nabla\ell_\mathcal{F}(x, y)$.

In summary, despite subtle differences, \textbf{the results on both datasets support the hypothesis  that data distribution shift impairs gradient similarity.
Further, the distribution shift in $\mathcal{P}_\mathcal{Y}$ may have a greater  negative impact on  similarity than that in $\mathcal{P}_\mathcal{X}$}.

% \vspace{-2mm}
\subsection{Trade-off Under Data Augmentation}
% \vspace{-2mm}
To figure out how these augmentations influence smoothness and  transferability, we measure smoothness and the  ASRs of AEs \wrt these augmented models.
To save space,  we merely report the $\tau=1, 5$ cases for PGD attack in \cref{tab:cifar10-ASR,tab:imagenette-ASR}, as they represent the smallest and largest augmentation magnitudes.
We report  the $\tau=1, 3, 5$ cases for AutoAttack in  \cref{tab:autoattack}. 
\cref{fig:MS-DA-cifar10,fig:MS-DA-imagenette} plots the smoothness for augmented models on two datasets.

First,  \LS ~exhibits a monotonous benefit on smoothness on both datasets due to its implicit effect on reducing the gradient norm during training \cite{szegedy2016rethinking}.
However,   the tables show that the ASRs of \LS ~are always lower than those of \ST, implying this benefit may not completely offset the negative effect of largely degraded similarity in \LS. 
Additionally,  there is no monotonicity in ASRs in \LS, aligning with  the  always combating negative and positive effects.
In contrast, other augmentations do not exhibit a consistent tendency for smoothness on both datasets. 
Specifically, in CIFAR-10, \CM, \MU, and \CO ~mostly degrade the smoothness while the degradation is not strictly monotonous. 
The situation is even more chaotic in ImageNette, where the variances of smoothness are relatively large. However, the ASRs do reveal some patterns. 
For \CM ~and \MU,  their ASRs degrade from $\tau=1$ to $\tau=5$, and the ASRs of \CO ~are generally better than \MU ~and \CM. 
Notably, \CO ~performs slightly better than \ST ~under a few cases, especially when $\tau=1$. 
This agrees with the fact that \CO ~has  slightly better smoothness and less severely degraded similarity.
Conclusively, the  \textbf{trade-off under data augmentations is quite complex, and no single augmentation can  produce good surrogates}.
\begin{figure}[!t]
\centering
\subcaptionbox{\MU}{\includegraphics[width=0.215 \textwidth]{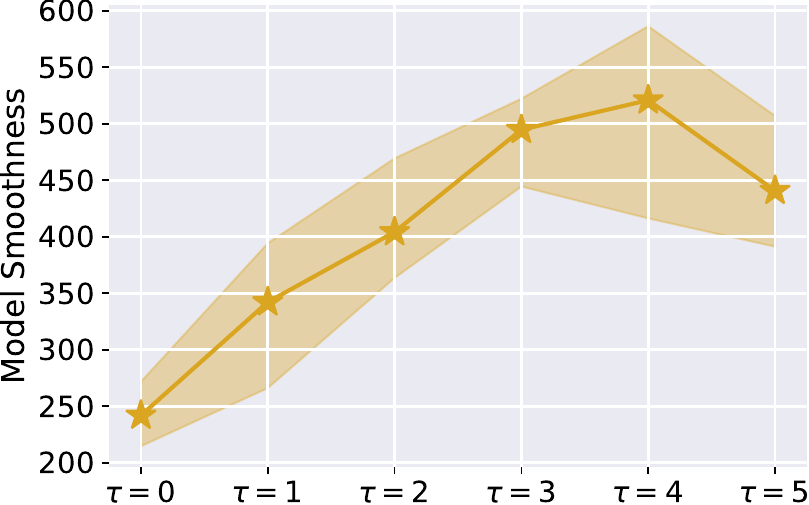}}
\hfill 
\subcaptionbox{\CM}{\includegraphics[width=0.215 \textwidth]{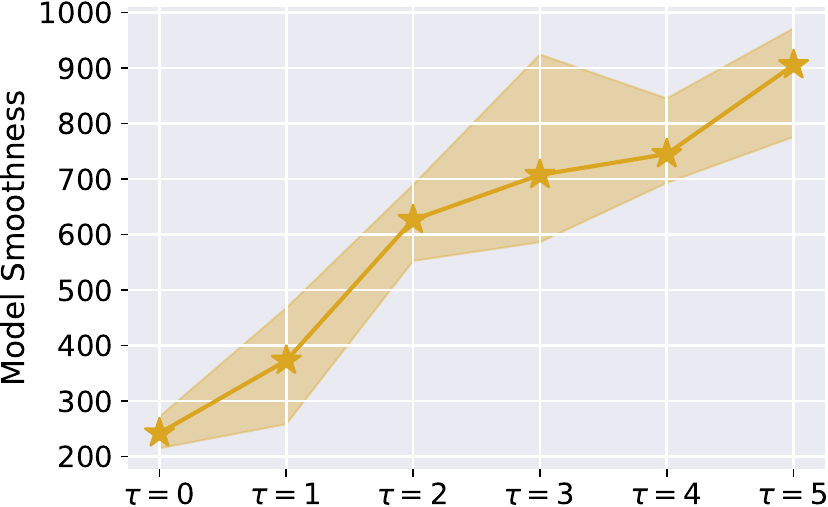}}
\subcaptionbox{\CO}{\includegraphics[width=0.215 \textwidth]{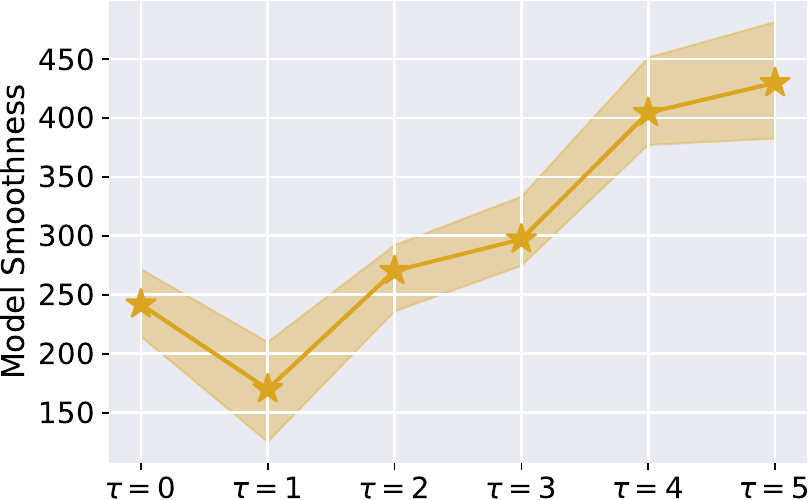}}
\hfill 
\subcaptionbox{\LS}{\includegraphics[width=0.215 \textwidth]{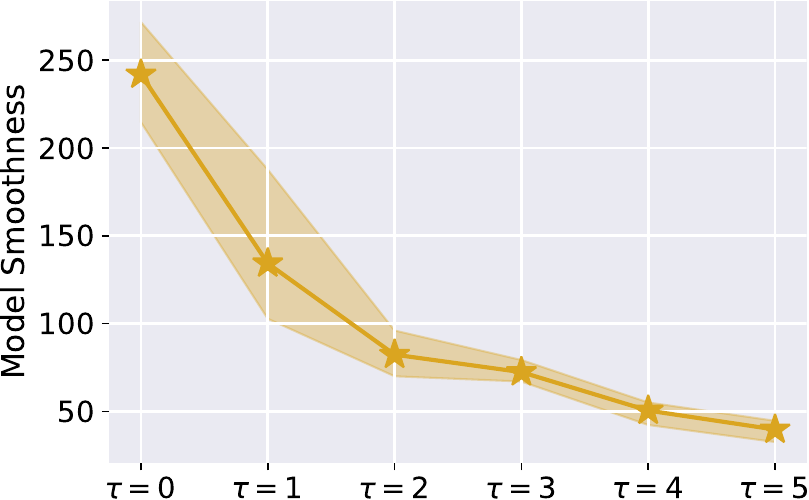}}
\caption{Average model smoothness of augmented models on CIFAR-10 with respective error bars at each $\tau$.}
\label{fig:MS-DA-cifar10}
\vspace{-2mm}
\end{figure}

% \begin{figure}[!t]
% \centering
% \subcaptionbox{\MU ~in CIFAR-10}{\includegraphics[width=0.21 \textwidth]{images/MS-MU-cifar10}}
% \subcaptionbox{\CM ~in CIFAR-10}{\includegraphics[width=0.21 \textwidth]{images/MS-CM-cifar10}}
% \subcaptionbox{\CO ~in CIFAR-10}{\includegraphics[width=0.21 \textwidth]{images/MS-CO-cifar10}}
% \subcaptionbox{\LS ~in CIFAR-10}{\includegraphics[width=0.21 \textwidth]{images/MS-LS-cifar10}}
% \caption{\black{Average model smoothness of augmented models on CIFAR-10 with respective error bars at each $\tau$. }}
% \label{fig:MS-DA-cifar10}
% \end{figure}

% \begin{figure}[!h]
% \setlength{\abovecaptionskip}{0.1cm}
% \setlength{\belowcaptionskip}{-0.1cm}
% \centering
% \subcaptionbox{\MU}{\includegraphics[width=0.20 \textwidth]{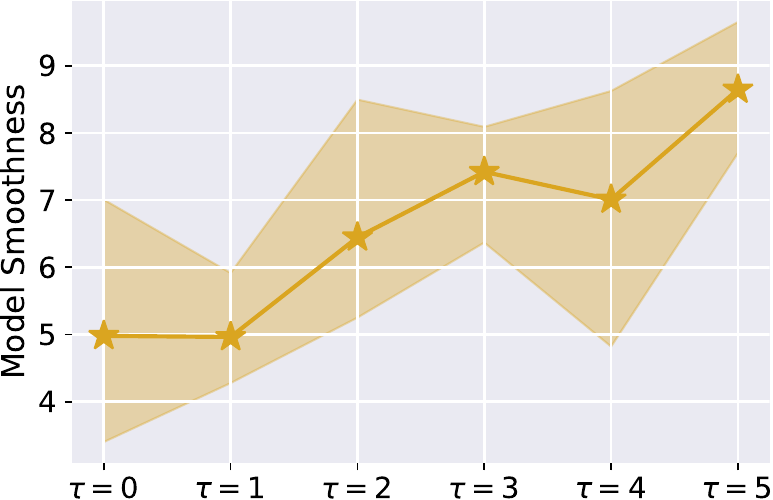}}
% \subcaptionbox{\CM}{\includegraphics[width=0.20 \textwidth]{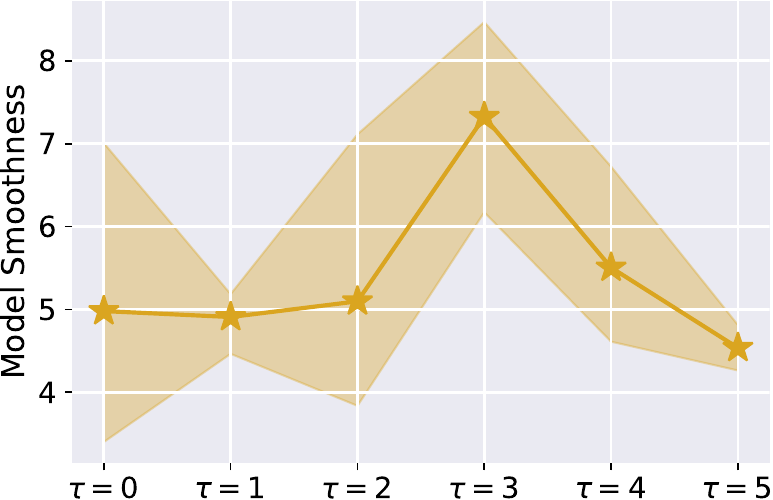}}
% \subcaptionbox{\CO}{\includegraphics[width=0.20 \textwidth]{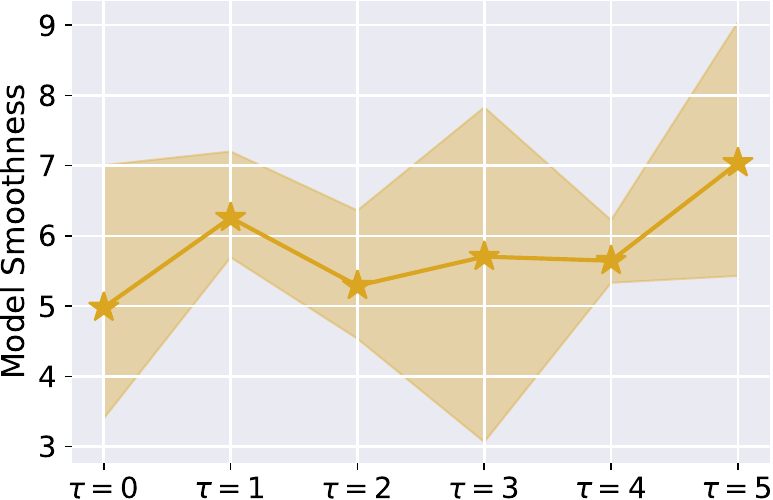}}
% \subcaptionbox{\LS}{\includegraphics[width=0.20 \textwidth]{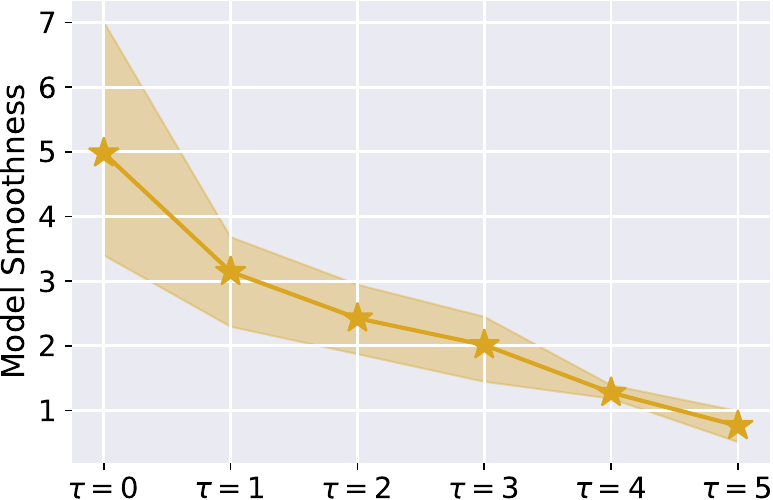}}
% \caption{Smoothness of augmented ImageNette models.}
% \label{fig:MS-DA-imagenette}
% \end{figure}

\section{Investigating Gradient Regularization}
\label{sec:trade-off:gr}
At present, we  recognize what may  harm gradient similarity, but have no idea of how to generally improve it. 
Additionally, it is challenging to regulate  similarity independently without affecting other factors, especially in black-box scenarios where  target  models are unknown. 
In contrast, the standalone model smoothness can be regulated and measured  with the surrogate itself. 
Therefore, we will explore how gradient regularization, which is expected to improve smoothness  without changing the data distribution, can be used to enhance surrogates.

\subsection{Gradient Regularization Mechanisms}
\vspace{-2mm}
We first formally define  gradient regularization and deliver the intuitions on why they are chosen.
To sharpen the analysis, we introduce the notation $\theta$ to parameterize a neural network and reform  $f(\cdot)$  as $f_\theta(\cdot)$ if needed. For concise illustrations, we sometimes abbreviate  $l(f(x), y))$ as $l(f(x))$  in this section.

\vspace{-2mm}
\subsubsection{Regularization in the Input Space}
\label{sec:input-space}
Based on the \black{implication that regularizing input space smoothness benefits  transferability shown in Sec.~\ref{Transfer-Analysis}, one should directly optimize the loss surface curvature.}
However, it is challenging to realize it since the computationally expensive second-order derivative is involved, \ie, $\nabla_{x}^2 \ell(f_\theta(x))$. 
Fortunately, deep learning and optimization theory~\cite{drucker1992improving} allow us to use  the first-order derivative to approximate it. 
We thus choose  two well-known  first-order gradient regularizations as follows:

\begin{figure}[!t]


\centering
\subcaptionbox{\MU}{\includegraphics[width=0.211\textwidth]{images/MS-MU-imagenette}}
\hfill 
\subcaptionbox{\CM}{\includegraphics[width=0.211 \textwidth]{images/MS-CM-imagenette}}
\subcaptionbox{\CO}{\includegraphics[width=0.211 \textwidth]{images/MS-CO-imagenette}}
\hfill
\subcaptionbox{\LS}{\includegraphics[width=0.211 \textwidth]{images/MS-LS-imagenette}}
\caption{Average model smoothness of augmented models on ImageNette with respective error bars at each $\tau$.}
\label{fig:MS-DA-imagenette}
\vspace{-3mm}
\end{figure}

\noindent \textbf{Input gradient regularization (\textit{IR})}: \IR ~directly add the Euclidean norm of gradient  \wrt input  to loss function as: 
$$
L_{ir} = \frac{1}{\left\|\mathbf{S}\right\|}\sum_{i=1}^{\left\|\mathbf{S}\right\|} [ \ell(f(x_i))+\lambda_{ir}\|\nabla_x\ell(f(x_i))\|], 
$$
where $\lambda_{ir}$ controls the regularization magnitude. \IR ~has  been demonstrated  improving the generalization and interpretability of DNNs~\cite{drucker1992improving,ross2018improving}. 
%This is rather straightforward since \IR~directly penalizes solutions that are sensitive to the rate of change \wrt input features. 
Here we  show that \IR ~can also improve transferability.  
Regularizing $\|\nabla_x\ell(f(x))\|_2$  leads to a smaller curvature $\sigma (\nabla_{x}^2 \ell(x))$ because the spectral norm of a matrix is upper-bounded by the Frobenius norm, \ie, $\sigma (\nabla_{x}^2 \ell(x)) \leq \|\nabla_{x}^2 \ell(x)\|_F \approx \|\nabla_{x} \ell(f(x))^T \nabla_{x} \ell(f(x))\|_F \leq \|\nabla_x\ell(f(x))\|^2$. 
Here  we approximate the second-order derivative using the first-order derivative as suggested in~\cite{DBLP:conf/icml/MartensSS12,jakubovitz2018improving}.
Consequently, smaller $\sigma (\nabla_{x}^2 \ell(x))$ indicates better smoothness, %hence should benefits
thereby improving transferability.\\
\noindent \textbf{Input Jacobian regularization (\textit{JR})}: \JR ~regularizes gradients on  the logits output of the network as:
$$
L_{jr} = \frac{1}{\left\|\mathbf{S}\right\|}\sum_{i=1}^{\left\|\mathbf{S}\right\|} [ \ell(f_\theta(x_i)+\lambda_{jr}\|\nabla_x f_\theta(x_i)\|_F].
$$
\JR~has been also correlated with the generalization and robustness~\cite{jakubovitz2018improving,yoshida2017spectral,chan2019jacobian,hoffman2019robust,varga2017gradient}. 
Since \JR~has a similar form with \IR, it is reasonable to presume  that  they have a similar effect. 
Formally, we prove that when $\|\nabla_x f_\theta(x_i)\|_F \rightarrow 0$, we have $\|\nabla_x \ell(f_\theta(x_i))\| \rightarrow 0$ using cross-entropy loss as an example (see \cref{proposition1}). 
This implies that \JR~also has a positive effect on  model smoothness.

\begin{figure}[!t]
\centering
\subcaptionbox{\IR}{\includegraphics[width=0.215 \textwidth]{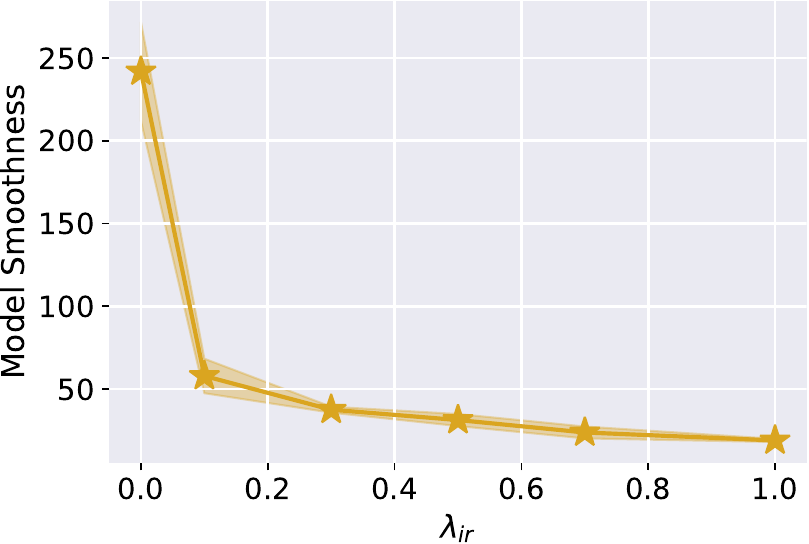}}
\hfill 
\subcaptionbox{\JR}{\includegraphics[width=0.215 \textwidth]{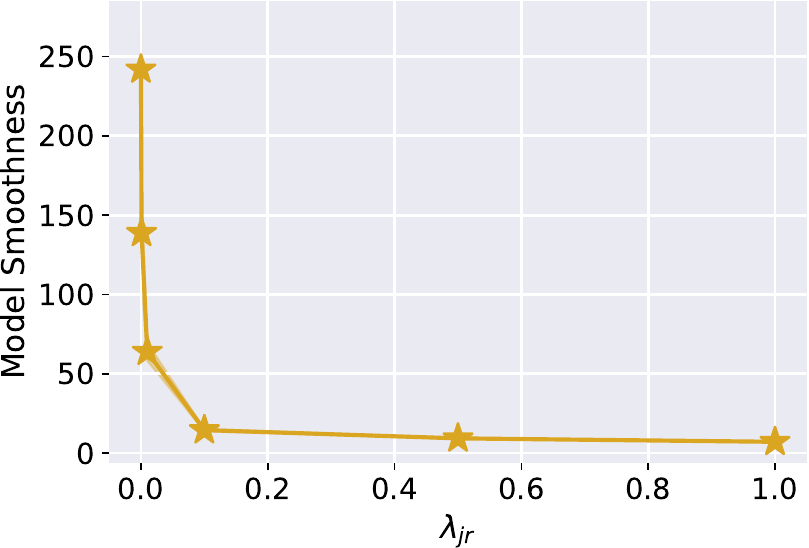}}
 \subcaptionbox{\ER}{\includegraphics[width=0.215 \textwidth]{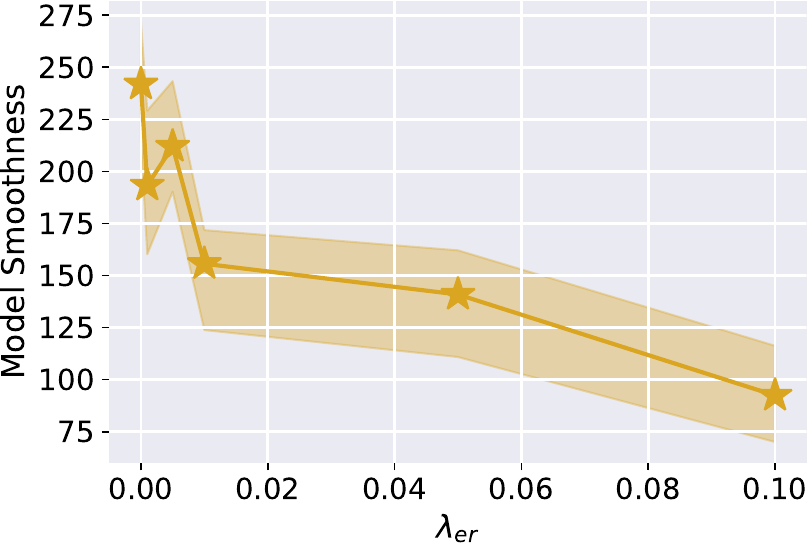}}
 \hfill 
\subcaptionbox{\SAM}{\includegraphics[width=0.215 \textwidth]{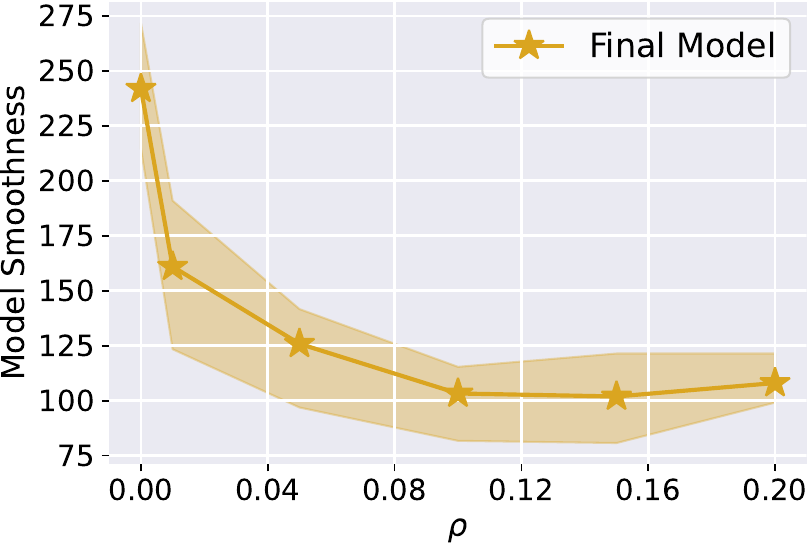}}
\caption{Model smoothness of regularized  models on CIFAR-10 with respective error bars at each $\tau$. The standard accuracy of these models ranges from 90.12\% to 95.38\%.}
\label{fig:MS-GR-cifar10}
\vspace{-2mm}
\end{figure}
% \vspace{-2mm}
\subsubsection{Regularization in the Weight Space}
 
Recently, researchers paid attention to the gradient regularization on $\theta$, \ie, the parameters (weights) of the network.
A recent study~\cite{Dherin2022why}  proved that the gradient regularizing pressure on the weight space $\|\nabla_\theta f_\theta(x)\|_F$ can  transfer to the input space $\|\nabla_x f_\theta(x)\|_F$. 
%On the one hand, promoting the flatness in the weight space has been connected to improving the generalization~\cite{hochreiter1997flat,stutz2021relating,keskar2017on, chenICLR2022When2022} and the robustness~\cite{hwang2021adversarial,wu2020adversarial,NEURIPS2021_a3ab4ff8,stutz2021relating}. 
%On the other hand, it has been  theoretically and empirically substantiated that regularizing the pressure on the weight space $\|\nabla_\theta f_\theta(x)\|^2$ can indeed transfer to the input space $\|\nabla_x f_\theta(x)\|^2$~\cite{Dherin2022why,gubriLGV2022}. 
Consequently, regularizing  gradients \wrt the weight space could also implicitly promote the  smoothness in the input space. 
We next explore two representative  weight gradient regularization methods as follows:\\
\noindent \textbf{Explicit gradient regularization (\textit{ER})}: 
In \ER, the Euclidean norm of gradient \wrt $\theta$ is  added to the training loss $L_{er}$ to promote flatness  as:
$$
L_{er} (\theta) = L(\theta) + \frac{\lambda_{er}}{2} \|\nabla_\theta L(\theta)\|^2,
$$
where $L$ is the original objective function.
Similar to \IR~and \JR, we can directly apply the double-propagation technique to implement the regularization term, using the automatic differentiation framework provided by  modern ML libraries like PyTorch~\cite{paszke2017automatic}.
\ER~has been empirically studied on  improving  generalization~\cite{barrett2021implicit,smith2021on,geiping2022stochastic}, while we investigate it from the transferability perspective.\\
\noindent \textbf{Sharpness-aware minimization (\textit{SAM})}: \black{\SAM ~also seeks flat solution in the weight space.
Specifically, it intends to minimize the original training loss and the worst-case sharpness $L_{sam}(\theta) =  L(\theta) + [\max_{\|\boldsymbol{\hat{\theta} -\theta}\|_2 \leq \rho} L(\hat{\theta}) - L(\theta)]= \max_{\|\boldsymbol{\hat{\theta} -\theta}\|_2 \leq \rho} L(\hat{\theta})$, where $\rho$ denotes the radius to search for the worst neighbor $\hat{\theta}$. 
As  the exact worst neighbor is difficult to track,
 \SAM ~uses the  gradient of ascent direction neighbor for the update  after twice approximations:}
%In each iteration, the update gradient $\nabla_\theta L_{sam}(\theta)$ does not directly derive from  $\theta$ itself, but rather from an approximately worst neighbor as follows:
$$
\nabla_{\theta} L_{sam}(\theta) \approx  \nabla_{\theta}L(\theta +\rho\frac{\nabla_{\theta}L(\theta)}{||\nabla_{\theta}L(\theta)||}).
$$
Recent researches establish \SAM ~as a special kind of gradient normalization~\cite{zhao2022penalizing,karakida2022understanding}, where $\rho$ represents the regularization magnitude.
% Theoretically and empirically, the finite-difference approach yields better performance and better computational efficiency than double-propagation.
\SAM~has been extensively studied recently for its remarkable performance on  generalization~\cite{foret2021sharpnessaware,andriushchenkoICML22a,chenICLR2022When2022,zhuang2022surrogate,wen2022does,du2022efficient,pmlr-v162-abbas22b}.
In this paper, for the first time, we substantiate that \SAM ~also features outstanding superiority in boosting transferability.

% \vspace{-2mm}
\subsection{Gradient Regularization Promotes  Smoothness}
\label{sec:GR-smoothness}    
We train multiple regularized models for each regularization and use coarse-grained parameter intervals \cite{ross2018improving,foret2021sharpnessaware,smith2021on} and  restrict the largest regularized magnitude to avoid excessive regularization such that the accuracy of these  models are  above 90\%. 
This allows for a fair comparison of each regularization method in an acceptable accuracy range.  
We choose 5 parameters in each interval through random grad and binary choices (See \cref{tab:training-settings}).

\begin{figure}[t!]
\centering
\subcaptionbox{\IR}{\includegraphics[width=0.213 \textwidth]{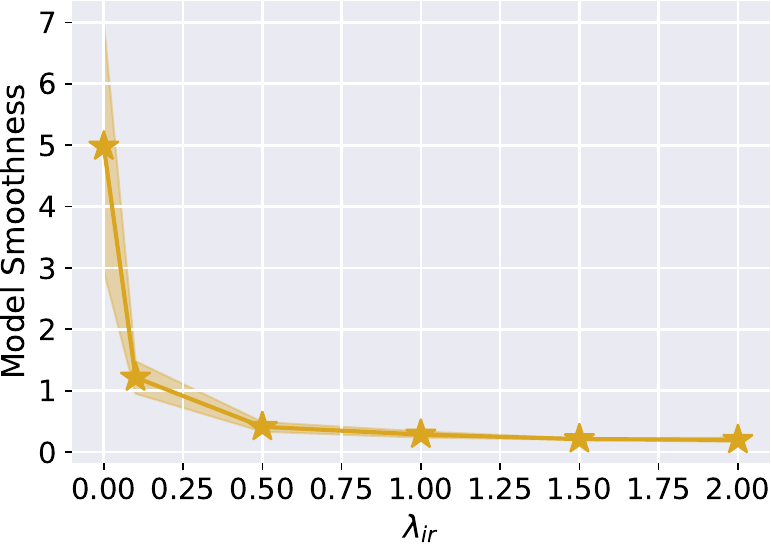}}
\hfill 
\subcaptionbox{\JR}{\includegraphics[width=0.213 \textwidth]{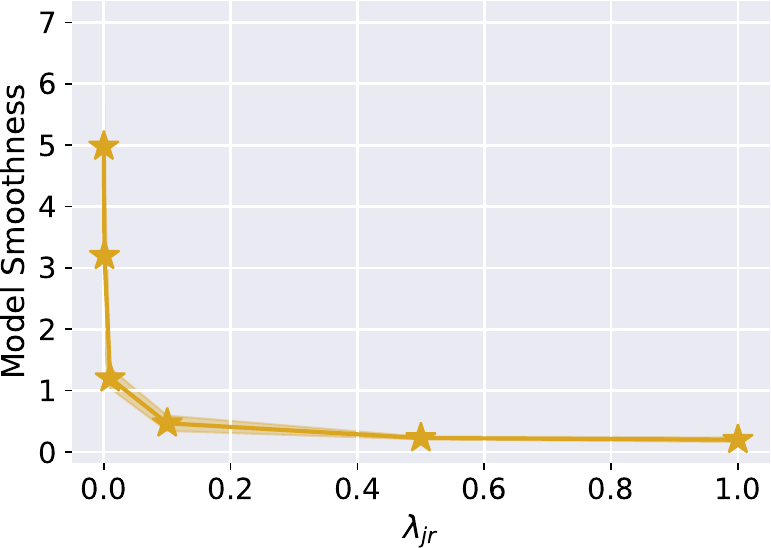}}
 \subcaptionbox{\ER}{\includegraphics[width=0.213 \textwidth]{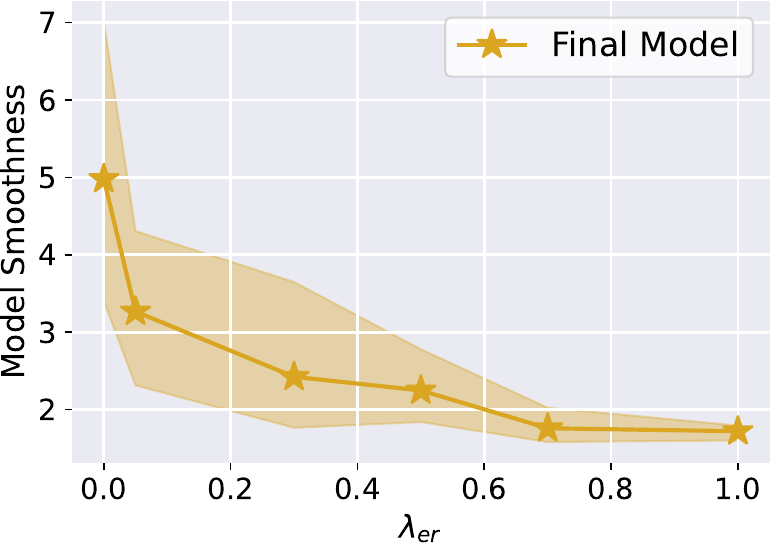}}
 \hfill 
\subcaptionbox{\SAM}{\includegraphics[width=0.213 \textwidth]{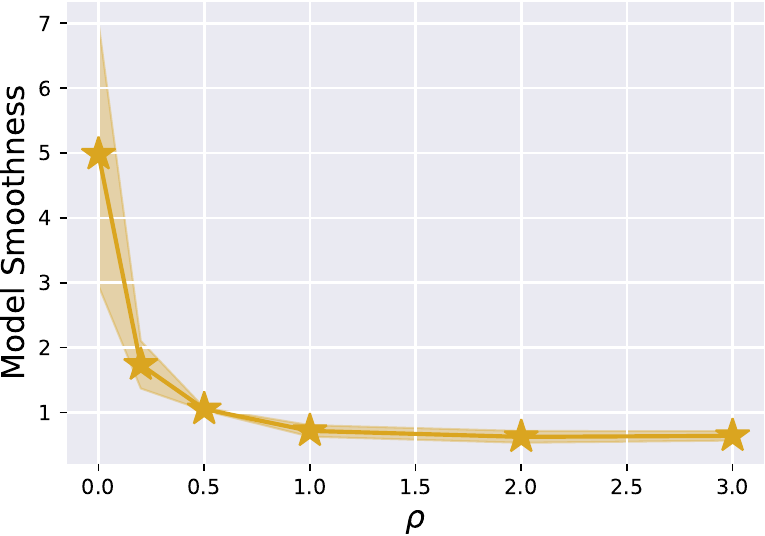}}
\caption{Model smoothness of regularized models on ImageNette with respective error bars at each $\tau$. The standard accuracy of these models ranges from 90.62\% to 96.91\%.}
\label{fig:MS-GR-imagenette}
\vspace{-3mm}
\end{figure}

\begin{figure*}[!t] 

\centering
\subcaptionbox{ResNet18 trained over CIFAR-10}{\includegraphics[width=0.45\textwidth]{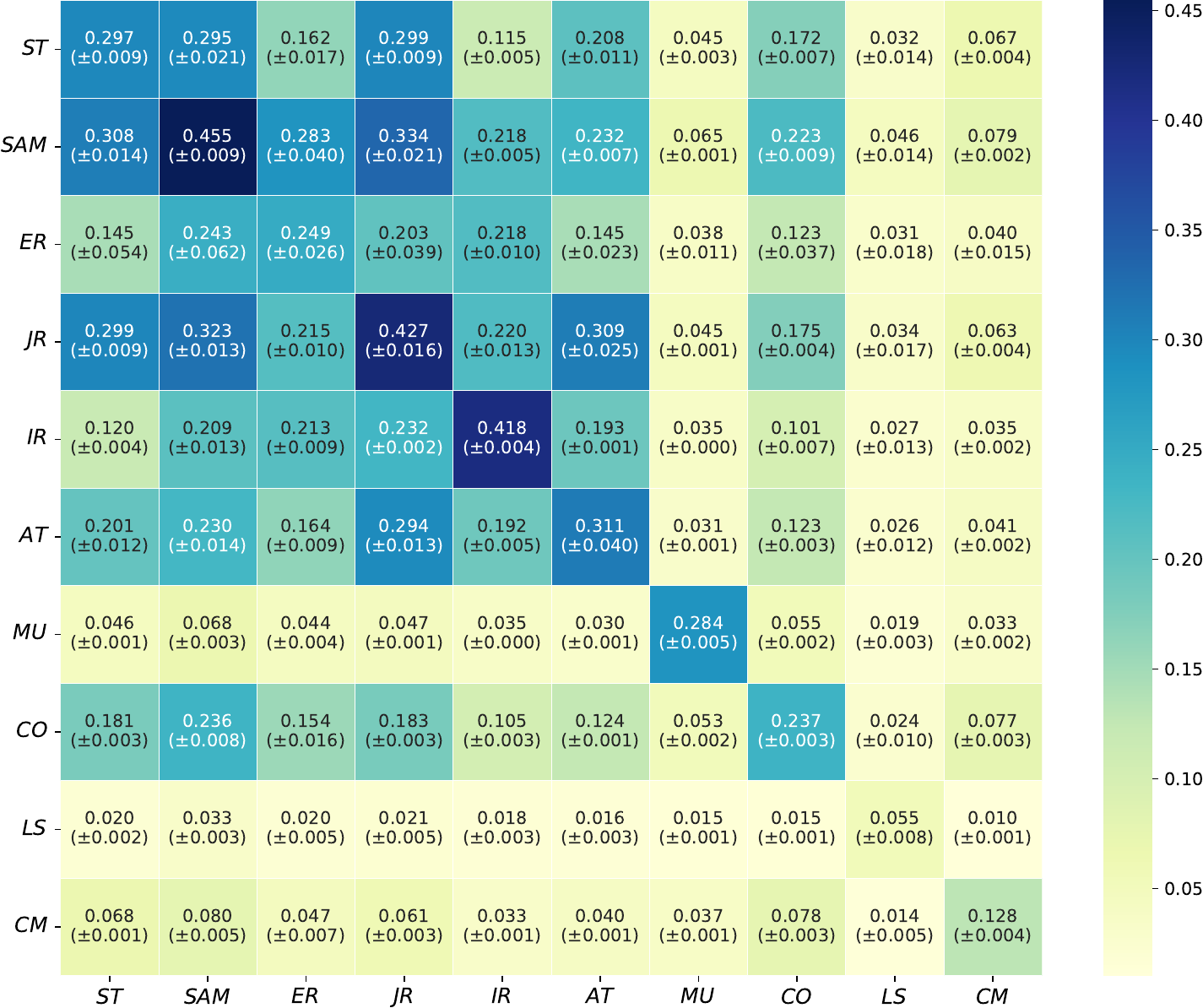}}
\subcaptionbox{ResNet50 trained over ImageNette }{\includegraphics[width=0.45\textwidth]{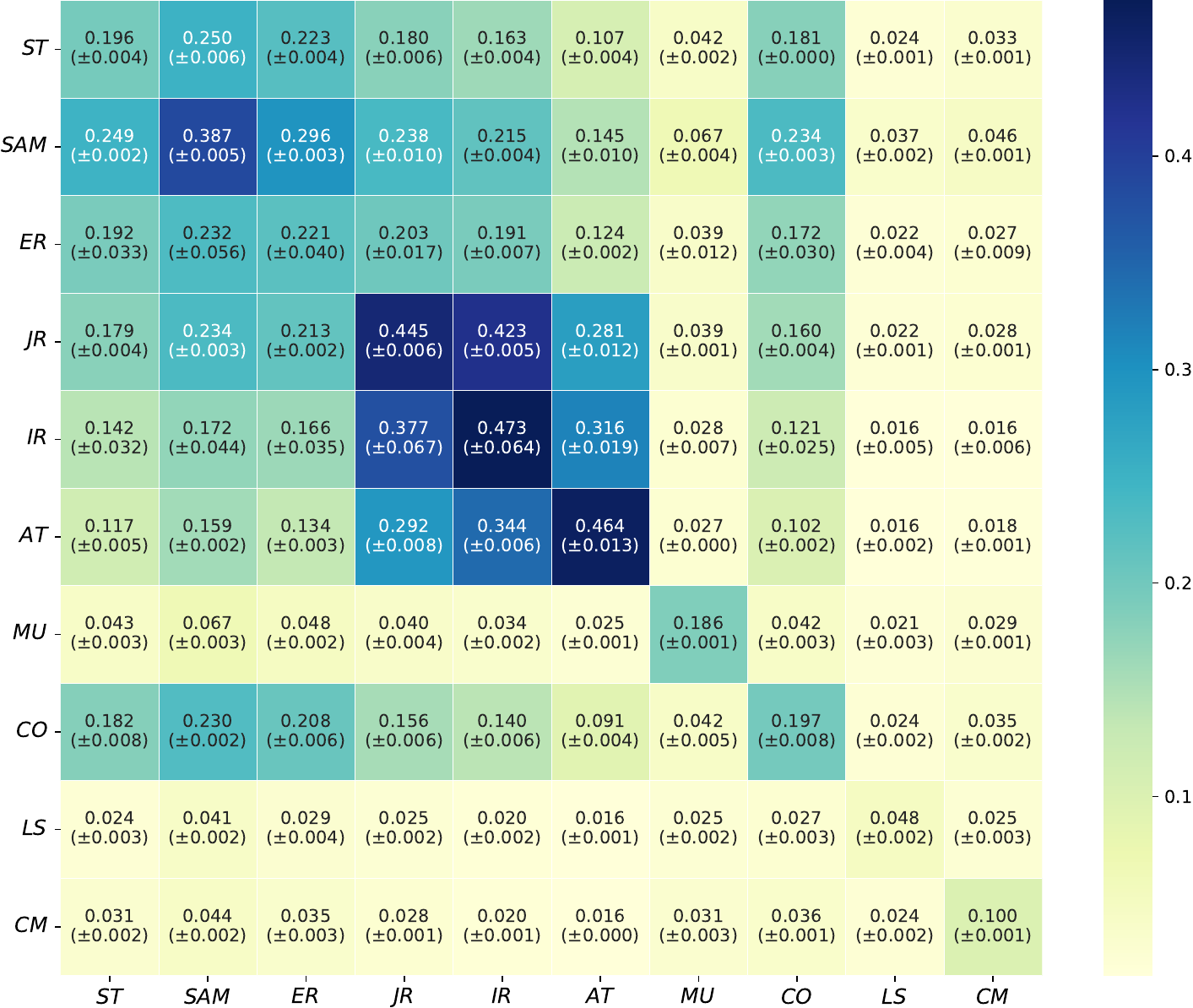}}

\caption{\textbf{Intra-architecture} gradient similarity between different mechanisms. Each cell reports the average result of 3 unique source-target pairs while all pair of models are trained with different random seeds, hence each matrix is asymmetric.}
\label{fig:heatmap-gs}
\vspace{-2mm}
\end{figure*}

\cref{fig:MS-GR-cifar10,fig:MS-GR-imagenette} depict the model smoothness. 
We can observe  gradient regularizations generally result in smoother models (\ie, smaller values) than the baseline where the regularization magnitude is 0 (\ST), and large magnitudes correspond  to better smoothness.
Moreover, input gradient regularizations  yield much better smoothness   than weight gradient regularizations, particularly on CIFAR-10. 
This indicates that  the transfer effect from weight space to input space suggested in \cite{Dherin2022why} does exist, yet is somewhat marginal and cannot outperform   direct regularizations in input space.

\subsection{Trade-off Under Gradient Regularization}
\label{trade-off-GR}

One might presume that input gradient regularization (\JR, \IR) should clearly outperform  weight gradient regularization (\SAM, \ER) in terms of transferability, which agrees with the conclusion that less complexity should exhibit better transferability in \security19, given that smoothness is positively correlated to the  model complexity  (as analyzed in \cref{analysis}, \textbf{Q3}).
However, our experimental results counter this:
\begin{itemize}[leftmargin=11pt]
    \item In \cref{tab:cifar10-ASR} (CIFAR-10), the fact is that \SAM ~is generally better than \IR ~(11 out of 12 cases for untargeted attacks, 12 out of 12 for targeted), and  generally better than \JR ~(11 out of 12 cases for untargeted, 7 out of 12 for targeted).
    \item In \cref{tab:imagenette-ASR} (ImageNette), the fact is that \SAM ~is generally worse than \IR ~(11 out of 12 cases for untargeted 10 out of 12 for targeted), and consistently worse than \JR ~(12 out of 12 cases for both targeted and untargeted).
\end{itemize}
Therefore, \textbf{these inconsistent results do not allow us to conclude that better smoothness always leads to better or worse transferability}.
This finding not only weakens the conclusion in \security19, but also suggests an overall examination of these training mechanisms for superior surrogates, as they may weigh differently on  similarity.

% Although gradient regularization mechanisms unanimously benefit  model smoothness, how these mechanisms influence the gradient similarity is intangible.
% \HSS{here we should first declare gradient regularization will influence the gradient similarity by some of our experiments, then elaborate it is intangible because xxx.}
% Furthermore, the solution which features the best model smoothness may not yield the best transferability (as reported in \cref{tab:untargeted-ASR,tab:targeted-ASR}). 
% {
% \color{red} This is in contrast to the conclusion in Security'19~\cite{xxx} that decreasing the complexity  of surrogate models can provide better transferability, but consistent with the conclusion in S\&P'22~\cite{maoP2022TransferAttacks2022} that surrogate complexity has a non-monotonic effect on the transferability and an appropriate surrogate depth is desired. }

% We, again, emphasize that the trade-off between gradient similarity and model smoothness dominates the transferability, there is no general strategy to improve the transferability since different approaches have diverse impacts on these two key factors. 

%\black{In addition, they also highlight better gradient similarity can provide better transferability.
%Nevertheless, a trade-off between gradient similarity and complexity, which we demonstrate here, has been omitted in both Security19 and S\&P22.
%At a high level, xxx} \HSS{removed or reinforced?}

%有一个Trade-off 出现了，这个Trade-off是好的, 我们是从这个trade-off中寻找好的solution。所以我们应该要xxx。
\section{Boosting Transferability with  Surrogates}\label{better-surrogates}
\black{Although we have concluded that gradient regularization can generally improve smoothness, the intangible nature of gradient similarity towards  unknown target models still makes it difficult to guide us in obtaining better surrogates. 
We thus propose a general solution that can simultaneously improve smoothness and similarity, under a strict black-box scenario where   target models are inaccessible. 
Our strategy involves first examining the pros and cons of various training mechanisms \wrt  smoothness and  similarity, and then combining their strengths to train better surrogates.}

While  gradient similarity cannot be directly measured when target models are unknown, one can assess the  similarity between all the training mechanisms under the intra-architecture case instead.  
This helps  to draw general conclusions that can be extended to the cross-architecture case as the angles between gradient directions follow the \textit{triangle inequality}. 
Therefore, we measure the similarity between the surrogates we obtained thus far, reported  in \cref{fig:heatmap-gs}.
\cref{tab:source-target similarities} in \cref{analysis}, \textbf{Q2} can support the consistency posteriorly with  observations in the cross-architecture case.
As a result, we obtain surrogates featuring both superior smoothness and similarity on average, and extensive experiments demonstrate their superiority in transferability.

%In this section, we analyze the gradient similarities between all the different training mechanisms on both CIFAR-10 and ImageNette  and the transfer ASRs of these mechanisms.
%Then we elaborate the underlying correlation between these mechanisms, and the consistency and variance of their performance on these two datasets.
%On the basis of this, we conclude a \black{general route} \HSS{why conservative?} to obtain better surrogates that can consistently exhibit superior transferability.
%Further, we incorporate state-of-art techniques to further verify the generalizability of the selected optimal regularization and our general route by experimenting on CIFAR-10.

% \vspace{-2mm}
\subsection{A General Route for  Better Surrogates}
We first elaborate on some general observations about gradient regularization mechanisms as follows:

\begin{table*}[!htbp]\scriptsize
\setlength{\abovecaptionskip}{0cm}
\setlength{\belowcaptionskip}{-0cm}
\centering
\caption{Untargeted and targeted transfer ASRs 
when incorporating  AE generation strategies under the different $L_\infty$ budgets on CIFAR-10. Corresponding results are different from those of  \cref{tab:cifar10-ASR} due to different randomization. In \%.  }
\setlength{\tabcolsep}{1pt}

\resizebox{\textwidth}{!}
{

\begin{tabular}{c|cccccccc|cccccccc}

\toprule              & \multicolumn{8}{c|}{\textbf{Untargeted}}                                                                                                & \multicolumn{8}{c}{\textbf{Targeted}}                                                                                                                                                                                                                                    \\ \hline 
\multicolumn{1}{l|}{} & \multicolumn{4}{c|}{4/255}                                                                                                       & \multicolumn{4}{c|}{8/255}                                                                                                    & \multicolumn{4}{c|}{4/255}                                                                                                       & \multicolumn{4}{c}{8/255}                                                                                                    \\
\multicolumn{1}{l|}{} & \multicolumn{1}{l}{Res-50} & \multicolumn{1}{l}{VGG16} & \multicolumn{1}{l}{Inc-V3} & \multicolumn{1}{l|}{Dense-121}    & \multicolumn{1}{l}{Res-50} & \multicolumn{1}{l}{VGG16} & \multicolumn{1}{l}{Inc-V3} & \multicolumn{1}{l|}{Dense-121} & \multicolumn{1}{l}{Res-50} & \multicolumn{1}{l}{VGG16} & \multicolumn{1}{l}{Inc-V3} & \multicolumn{1}{l|}{Dense-121}    & \multicolumn{1}{l}{Res-50} & \multicolumn{1}{l}{VGG16} & \multicolumn{1}{l}{Inc-V3} & \multicolumn{1}{l}{Dense-121} \\ \hline
\textit{ST}                    &  55.0$_{\pm2.4}$                        & 43.3$_{\pm2.1}$                     & 54.9$_{\pm2.5}$                           & \multicolumn{1}{c|}{71.7$_{\pm1.2}$}          & 79.6$_{\pm2.6}$                        & 71.3$_{\pm3.2}$                     & 80.2$_{\pm3.8}$                           & 93.1$_{\pm1.3}$                            & 19.0$_{\pm1.2}$                        & 13.2$_{\pm1.0}$                     & 21.1$_{\pm1.5}$                           & \multicolumn{1}{c|}{36.4$_{\pm1.8}$}          & 41.1$_{\pm4.7}$                        & 34.3$_{\pm3.7}$                     & 45.8$_{\pm4.2}$                           & 70.8$_{\pm4.6}$                           \\
\textit{SAM}                 & 76.8$_{\pm4.7}$                        & 65.0$_{\pm5.3}$                     & 74.9$_{\pm0.6}$                           & \multicolumn{1}{c|}{87.4$_{\pm1.2}$}          & 97.5$_{\pm1.2}$                        & 94.3$_{\pm2.2}$                     & 97.0$_{\pm0.2}$                           & 99.5$_{\pm0.1}$                            & 37.1$_{\pm6.4}$                        & 27.2$_{\pm5.0}$                     & 38.5$_{\pm2.0}$                           & \multicolumn{1}{c|}{55.8$_{\pm3.7}$}           & 77.2$_{\pm9.2}$                        & 68.3$_{\pm8.4}$                     & 78.4$_{\pm2.7}$                           & 93.4$_{\pm0.9}$                            \\
\textit{SAM\&JR}                & \textbf{81.2$_{\pm0.7}$}               & \textbf{70.2$_{\pm0.7}$}            & \textbf{79.4$_{\pm0.6}$}                  & \multicolumn{1}{c|}{\textbf{91.3$_{\pm0.1}$}} & \textbf{98.7$_{\pm0.3}$}               & \textbf{96.6$_{\pm0.1}$}            & \textbf{98.3$_{\pm0.0}$}                   & \textbf{99.8$_{\pm0.0}$}                   & \textbf{43.6$_{\pm2.1}$}               & \textbf{32.9$_{\pm1.2}$}            & \textbf{45.3$_{\pm2.1}$}                  & \multicolumn{1}{c|}{\textbf{64.5$_{\pm0.9}$}} & \textbf{85.4$_{\pm1.4}$}               & \textbf{77.2$_{\pm9.2}$}            & \textbf{86.8$_{\pm1.9}$}                  & \textbf{97.2$_{\pm0.3}$}                  \\ \hline
\textit{ST}+MI                   & 58.4$_{\pm2.6}$                        & 45.0$_{\pm2.9}$                     & 58.7$_{\pm4.0}$                           & \multicolumn{1}{c|}{76.7$_{\pm2.3}$}           & 82.9$_{\pm3.1}$                        & 75.6$_{\pm4.1}$                     & 83.5$_{\pm3.6}$                           & 94.6$_{\pm1.2}$                            & 22.4$_{\pm1.8}$                        & 15.4$_{\pm1.9}$                     & 25.6$_{\pm2.8}$                           & \multicolumn{1}{c|}{46.0$_{\pm3.9}$}          & 44.8$_{\pm4.1}$                        & 40.2$_{\pm5.9}$                     & 50.4$_{\pm6.3}$                           & 76.1$_{\pm4.5}$                           \\
\textit{SAM}+MI                & 82.3$_{\pm3.9}$                        & 69.6$_{\pm5.3}$                     & 80.5$_{\pm0.1}$                           & \multicolumn{1}{c|}{91.7$_{\pm1.5}$}          & 98.1$_{\pm0.9}$                        & 95.5$_{\pm1.9}$                     & 97.8$_{\pm0.1}$                           & 99.7$_{\pm0.1}$                            & 46.7$_{\pm7.0}$                        & 33.2$_{\pm5.7}$                     & 49.0$_{\pm1.1}$                           & \multicolumn{1}{c|}{69.6$_{\pm3.8}$}          & 84.0$_{\pm6.9}$                        & 76.9$_{\pm5.8}$                     & 84.8$_{\pm1.2}$                           & 96.4$_{\pm0.9}$                           \\
\textit{SAM\&JR}+MI               & \textbf{85.0$_{\pm0.4}$}               & \textbf{73.8$_{\pm0.5}$}            & \textbf{83.3$_{\pm0.5}$}                  & \multicolumn{1}{c|}{\textbf{93.6$_{\pm0.1}$}} & \textbf{98.9$_{\pm0.2}$}               & \textbf{97.2$_{\pm0.2}$}            & \textbf{98.7$_{\pm0.1}$}                  & \textbf{99.8$_{\pm0.1}$}                   & \textbf{52.4$_{\pm2.0}$}               & \textbf{38.4$_{\pm1.5}$}            & \textbf{54.0$_{\pm1.7}$}                  & \multicolumn{1}{c|}{\textbf{74.5$_{\pm0.4}$}} & \textbf{89.6$_{\pm1.6}$}               & \textbf{82.9$_{\pm1.5}$}            & \textbf{90.6$_{\pm2.1}$}                  & \textbf{98.3$_{\pm0.3}$}                  \\ \hline
\textit{ST}+DIM               & 70.2$_{\pm3.8}$                        & 59.2$_{\pm2.7}$                     & 69.2$_{\pm3.5}$                          & \multicolumn{1}{c|}{83.6$_{\pm3.1}$}          & 91.9$_{\pm2.2}$                        & 88.3$_{\pm2.8}$                      & 91.7$_{\pm2.2}$                           & 97.2$_{\pm1.3}$                            & 33.6$_{\pm4.8}$                        & 25.7$_{\pm3.2}$                     & 36.6$_{\pm4.2}$                           & \multicolumn{1}{c|}{54.9$_{\pm6.3}$}          & 65.4$_{\pm6.3}$                        & 62.0$_{\pm6.2}$                     & 69.9$_{\pm5.3}$                           & 87.5$_{\pm5.3}$                           \\
\textit{SAM}+DIM            & \textbf{84.4$_{\pm2.6}$}               & 74.7$_{\pm2.7}$                     & \textbf{82.2$_{\pm1.8}$}                  & \multicolumn{1}{c|}{91.8$_{\pm2.2}$} & 98.9$_{\pm0.4}$                        & 97.7$_{\pm0.7}$                     & \textbf{99.1$_{\pm0.8}$}                  & 99.7$_{\pm0.2}$                   & 50.4$_{\pm4.6}$                        & 38.9$_{\pm3.2}$                     & 51.3$_{\pm4.2}$                           & \multicolumn{1}{c|}{67.8$_{\pm5.8}$} & 89.3$_{\pm3.5}$                        & 84.2$_{\pm2.2}$                      & 89.6$_{\pm3.0}$                           & 96.8$_{\pm1.9}$                           \\
\textit{SAM\&JR}+DIM           & 83.6$_{\pm1.6}$                        & \textbf{74.8$_{\pm1.0}$}            & 81.8$_{\pm1.4}$                           & \multicolumn{1}{c|}{\textbf{92.1$_{\pm0.6}$}}          & \textbf{99.2$_{\pm0.2}$}               & \textbf{98.2$_{\pm0.3}$}            & 99.0$_{\pm0.1}$                           & \textbf{99.8$_{\pm0.1}$}                   & \textbf{51.6$_{\pm3.9}$}               & \textbf{40.7$_{\pm2.4}$}            & \textbf{52.7$_{\pm2.9}$}                  & \multicolumn{1}{c|}{\textbf{70.2$_{\pm2.0}$}}          & \textbf{91.5$_{\pm2.9}$}                & \textbf{86.9$_{\pm2.8}$}            & \textbf{91.8$_{\pm2.6}$}                  & \textbf{98.1$_{\pm0.8}$}                  \\ \bottomrule
\end{tabular}
}

\label{tab:DIM-incorporated-untargeted-targeted}
\vspace{-2mm}
\end{table*}

\begin{itemize}[leftmargin=10pt]
\item \noindent\textbf{Input regularizations stably yield the best smoothness.}
\cref{fig:MS-GR-cifar10,fig:MS-GR-imagenette} show that \IR ~and \JR ~yield superior  smoothness and their   variances are  small. This does not hold for other smoothness-benefiting mechanisms  like \SAM, \ER.
Further, \cref{fig:heatmap-gs} suggests that \IR, \JR, and \AT ~align superiorly well with each other,  this could be attributed to the fact that they all have a direct penalty effect  on  the input space, as analyzed in \cref{sec:TC-AT,sec:input-space}.

\item \noindent\textbf{\SAM ~yields the best gradient alignments when input regularizations are not involved.} 
\cref{fig:heatmap-gs} shows that \SAM ~\textit{exclusively exhibits better alignment towards every mechanism} than \ST.
However,  other mechanisms do not exhibit this advantage.
%, which matches the fact that \IR, \JR ~does not consistently outperform the \SAM ~in terms of transferability.
Furthermore, if we exclude the diagonal elements, we can find \SAM ~yields the best gradient similarity in each row and column except for the cases of \IR, \JR, and \AT.
% In fact, \ER, the other regularization in the weight space, downgrades the gradient similarities in some cases.
This suggests that \SAM ~has an underlying benefit on gradient  similarity, which all the other mechanisms do not hold. We defer the detailed analysis on this to \cref{analysis}, \textbf{Q7}. 

\item \textbf{No single mechanism can dominate both datasets.} 
From \cref{tab:cifar10-ASR,tab:imagenette-ASR}, we observe that \SAM ~yields the best ASRs on average on CIFAR-10.
However, it generally performs worse than \IR ~and \JR ~on ImageNette. 
\ER, on the other hand, consistently performs the worst   on two datasets. 
% More intriguingly, ``little robustness'' (\AT ~with the optimal perturbation magnitude) performs so well on ImageNette while exhibiting mediocre performance on CIFAR-10.
Apparently, there is no one mechanism that can adequately outperform the others on both datasets.

\end{itemize}
% general route设计的时候是不能观察到ASR的。

% \noindent\textbf{No single mechanism can dominate both datasets.} 
% From \cref{tab:cifar10-ASR,tab:imagenette-ASR}, we observe that \SAM ~yields the best ASRs on average on CIFAR-10.
% However, it generally performs worse than \IR ~and \JR ~on ImageNette. 
% \ER, on the other hand, consistently performs the worst   on two datasets. 
% % More intriguingly, ``little robustness'' (\AT ~with the optimal perturbation magnitude) performs so well on ImageNette while exhibiting mediocre performance on CIFAR-10.
% Apparently, there is no one mechanism that can adequately outperform the others on both datasets.
% And these results actually suggest that transferability in ImageNette seems to favor model smoothness over gradient similarity while CIFAR-10 demonstrates the opposite.
% We will further analyze this in \cref{analysis}, \textbf{Q1}.
% We highlight that the underlying cause is related to the input dimension of $x$, \ie, the resolution size of the input image (see \cref{analysis} \textbf{Qxx}).
% Again, this is another phenomenon dominated by the trade-off between model smoothness and gradient similarity. 
% In, we provide a plausible explanation.

The above observations inspire us that \SAM ~and  input regularization  are highly complementary, 
% it is straightforward to conclude that an effective and general solution to reconcile the trade-off between model smoothness and gradient similarity, thus obtaining a better surrogate, is applying two or more complementary approaches that dominate these two factors. 
and a straightforward solution to reach a better trade-off between  model smoothness and  gradient similarity is combining  \SAM ~with input regularization, namely \SAM\&\IR ~or \SAM\&\JR.
In other words, we believe \SAM\&\IR ~and \SAM\&\JR ~will bring significant improvement in  transferability.
To validate this hypothesis, we conduct experiments on both CIFAR-10 and ImageNette and report the results on \cref{tab:cifar10-ASR,tab:imagenette-ASR} for comparison.
We can see that \SAM\&\JR~and \SAM\&\IR ~generally perform the best in both datasets.
Notably, we do not limit the route of constructing better surrogates to  \SAM\&\IR ~or \SAM\&\JR, and there may exist other effective combinations or approaches as long as  the trade-off  is well reconciled.

\subsection{Incorporating Surrogate-independent Methods}
\label{surrogate-independent}
% \vspace{-1mm}
We  show that the surrogates obtained from our methods (\eg, \SAM ~and \SAM\&\JR) can be integrated with   surrogate-independent methods to further improve transferability.

% \vspace{-2mm}
\subsubsection{Incorporating AE generation strategies}
Exploiting the generation process of AEs is a popular approach to improve transferability. The most representative methods include \textit{momentum iterative} (MI)~\cite{Dong_2018_CVPR} and \textit{diversity iterative} (DI)~\cite{xie2019improving}, and DI is frequently combined with MI and serves as a stronger baseline---\textit{diversity iterative with momentum} (DIM). 
Generally, these designs are interpreted as constructing adversarial examples that are less likely to fall into a poor local optimum  of the given surrogate. 
Thus they adjust the update direction with momentum to escape it or apply multiple transformations on inputs to mimic an averagely flat landscape.
We conduct experiments to verify that obtaining a better surrogate, which features less and wider local optima (analyzed in \cref{analysis}, \textbf{Q3}) and general adversarial direction, can further enhance these designs by promoting a larger lower bound.

In \cref{tab:DIM-incorporated-untargeted-targeted},  the transfer ASRs under  different generation strategies (PGD, MI, and DIM) show that using better surrogates can always significantly boost transferability. 
This suggests that the optima found in \ST~are inherently poorer than those  in \SAM ~and \SAM\&\JR.
Besides, without MI or DIM, \SAM\&\JR~using standard PGD  alone already outperforms \ST+MI and \ST+DMI, implying that the lower bound of transferability in  a better surrogate is sufficiently high, and  even surpasses the upper bound in   poor models. 

\begin{table*}[!htbp]\scriptsize
\setlength{\abovecaptionskip}{0cm}
\setlength{\belowcaptionskip}{-0cm}
\centering
\caption{Transfer ASRs of AEs crafted against single and LGV ensemble surrogates under the different $L_\infty$ budgets on CIFAR-10. 
Single surrogates without  fine-tuning are used for baselines and compared with LGV  ensembles.
We highlight the best ASRs in bold and those lower than baselines in red for each comparison.  In \%.  } 
\setlength{\tabcolsep}{0.9pt}
\resizebox{\textwidth}{!}
{

\begin{tabular}{c|cccccccc|cccccccc}
\toprule
\multicolumn{1}{l|}{}      & \multicolumn{8}{c|}{\textbf{Untargeted}}                                                                                                                                                                                                                                            & \multicolumn{8}{c}{\textbf{Targeted}}                                                                                                                                                                                                                                               \\ \hline
\multicolumn{1}{l|}{}      & \multicolumn{4}{c|}{4/255}                                                                                                                     & \multicolumn{4}{c|}{8/255}                                                                                                & \multicolumn{4}{c|}{4/255}                                                                                                                     & \multicolumn{4}{c}{8/255}                                                                                                 \\
\multicolumn{1}{l|}{}      & Res-50                       & VGG16                        & Inc-V3                       & \multicolumn{1}{c|}{Dense-121}                    & Res-50                       & VGG16                        & Inc-V3                       & Dense-121                    & Res-50                       & VGG16                        & Inc-V3                       & \multicolumn{1}{c|}{Dense-121}                    & Res-50                       & VGG16                        & Inc-V3                       & Dense-121                    \\ \hline
\ST                         & 55.0$_{\pm2.4}$                        & 43.3$_{\pm2.1}$                        & 54.9$_{\pm2.5}$                        & \multicolumn{1}{c|}{71.7$_{\pm1.2}$}                        & 79.6$_{\pm2.6}$                        & 71.3$_{\pm3.2}$                        & 80.2$_{\pm3.8}$                        & 93.1$_{\pm1.3}$                        & 19.0$_{\pm1.2}$                        & 13.2$_{\pm1.0}$                        & 21.1$_{\pm1.5}$                        & \multicolumn{1}{c|}{36.4$_{\pm1.8}$}                        & 41.1$_{\pm4.7}$                        & 34.3$_{\pm3.7}$                        & 45.8$_{\pm4.2}$                        & 70.8$_{\pm4.6}$                        \\
\textit{ST}+$\rm{LGV_{\text{SGD}}}$          & 70.7$_{\pm2.2}$                        & 57.1$_{\pm1.1}$                        & 70.1$_{\pm0.9}$                        & \multicolumn{1}{c|}{87.1$_{\pm0.6}$}                        & 94.0$_{\pm2.0}$                        & 89.1$_{\pm2.5}$                        & 94.3$_{\pm1.8}$                        & 99.3$_{\pm0.2}$                        & 29.3$_{\pm2.0}$                        & 20.4$_{\pm1.3}$                        & 32.8$_{\pm2.1}$                        & \multicolumn{1}{c|}{53.4$_{\pm2.1}$}                        & 65.6$_{\pm4.2}$                        & 56.8$_{\pm3.2}$                        & 71.3$_{\pm2.5}$                        & 92.0$_{\pm1.2}$                        \\
\ST+$\rm{LGV}_{\SAM}$          & 79.5$_{\pm0.8}$               & 66.8$_{\pm1.1}$               & 79.8$_{\pm1.0}$                        & \multicolumn{1}{c|}{92.3$_{\pm0.2}$}               & 98.6$_{\pm0.2}$               & 96.0$_{\pm0.5}$               & 98.7$_{\pm0.2}$               & \textbf{99.9$_{\pm0.0}$}                        & 39.4$_{\pm1.0}$                        & 27.7$_{\pm0.5}$                        & 44.4$_{\pm1.7}$                        & \multicolumn{1}{c|}{65.7$_{\pm0.9}$}                        & 82.5$_{\pm1.7}$               & 74.2$_{\pm1.0}$               & 87.4$_{\pm0.8}$               & 97.8$_{\pm0.2}$              \\
\ST+$\rm{LGV}_{\SAM\&\JR}$      & \textbf{82.0$_{\pm1.5}$}                        & \textbf{69.5$_{\pm1.7}$}               & \textbf{81.3$_{\pm1.3}$}               & \multicolumn{1}{c|}{\textbf{93.4$_{\pm0.3}$}}                        & \textbf{99.1$_{\pm0.3}$}                        & \textbf{97.1$_{\pm0.7}$}                        & \textbf{99.0$_{\pm0.2}$}                        & \textbf{99.9$_{\pm0.0}$}               & \textbf{42.9$_{\pm2.6}$}               & \textbf{30.3$_{\pm1.5}$}               & \textbf{46.7$_{\pm1.4}$}               & \multicolumn{1}{c|}{\textbf{68.1$_{\pm1.0}$}}               & \textbf{87.1$_{\pm3.5}$}                        & \textbf{79.1$_{\pm3.4}$}                        & \textbf{90.2$_{\pm1.7}$}                        & \textbf{98.5$_{\pm0.3}$}                        \\ \hline
\SAM                        & 76.8$_{\pm4.7}$                        & 65.0$_{\pm5.3}$                     & 74.9$_{\pm0.6}$                        & \multicolumn{1}{c|}{87.4$_{\pm1.2}$}                        & 97.5$_{\pm1.2}$                        & 94.3$_{\pm2.2}$                     & 97.0$_{\pm0.2}$                           & 99.5$_{\pm0.1}$                         & 37.1$_{\pm6.4}$                        & 27.2$_{\pm5.0}$                     & 38.5$_{\pm2.0}$                         & \multicolumn{1}{c|}{55.8$_{\pm3.7}$}                        & 77.2$_{\pm9.2}$                        & 68.3$_{\pm8.4}$                     & 78.4$_{\pm2.7}$                           & 93.4$_{\pm0.9}$                        \\
\textit{SAM}+$\rm{LGV_{\text{SGD}}}$           &  {\color[HTML]{ff0e38}70.8$_{\pm6.1}$} &  {\color[HTML]{ff0e38}57.2$_{\pm6.5}$} & {\color[HTML]{ff0e38} 69.2$_{\pm0.7}$} & \multicolumn{1}{c|}{{\color[HTML]{ff0e38} 84.2$_{\pm0.7}$}} & {\color[HTML]{ff0e38} 93.1$_{\pm3.9}$} & {\color[HTML]{ff0e38} 87.6$_{\pm5.3}$} & {\color[HTML]{ff0e38} 93.2$_{\pm1.4}$} & {\color[HTML]{ff0e38} 98.6$_{\pm0.3}$} & {\color[HTML]{ff0e38} 26.7$_{\pm1.9}$} & {\color[HTML]{ff0e38} 18.7$_{\pm0.1}$}  & {\color[HTML]{ff0e38} 31.5$_{\pm2.2}$} & \multicolumn{1}{c|}{{\color[HTML]{ff0e38} 51.3$_{\pm2.6}$}} & {\color[HTML]{ff0e38} 63.6$_{\pm14.1}$} & {\color[HTML]{ff0e38} 53.6$_{\pm11.8}$} & {\color[HTML]{ff0e38} 66.4$_{\pm5.4}$}  & {\color[HTML]{ff0e38} 87.6$_{\pm2.2}$} \\
\SAM+$\rm{LGV}_{\SAM}$         & 81.5$_{\pm4.0}$                        & 69.5$_{\pm4.8}$                        & 79.8$_{\pm1.3}$               & \multicolumn{1}{c|}{91.6$_{\pm1.3}$}                        & 98.9$_{\pm0.7}$                        & 96.9$_{\pm1.3}$               & 98.7$_{\pm0.1}$                        & \textbf{99.9$_{\pm0.0}$}                        & 42.1$_{\pm7.3}$                        & 30.5$_{\pm5.5}$                        & 44.3$_{\pm1.6}$                        & \multicolumn{1}{c|}{63.4$_{\pm1.9}$}                        & 86.2$_{\pm6.7}$               & 78.4$_{\pm6.6}$               & 87.9$_{\pm1.1}$               & 97.5$_{\pm0.4}$                        \\
\SAM+$\rm{LGV}_{\SAM\&\JR}$     & \textbf{82.7$_{\pm4.2}$}               & \textbf{71.2$_{\pm5.1}$}               & \textbf{80.9$_{\pm0.6}$}                        & \multicolumn{1}{c|}{\textbf{92.1$_{\pm1.2}$}}               & \textbf{99.1$_{\pm0.6}$}               & \textbf{97.4$_{\pm1.3}$}                        & \textbf{99.0$_{\pm0.3}$}               & \textbf{99.9$_{\pm0.0}$}               & \textbf{44.7$_{\pm7.7}$}               & \textbf{32.6$_{\pm5.9}$}               & \textbf{46.6$_{\pm0.6}$}               & \multicolumn{1}{c|}{\textbf{65.6$_{\pm1.7}$}}               & \textbf{87.6$_{\pm6.9}$}                        & \textbf{79.8$_{\pm7.1}$}                        & \textbf{89.1$_{\pm2.0}$}                        & \textbf{97.9$_{\pm0.5}$}               \\ \hline
\textit{SAM\&JR}                    & 81.2$_{\pm0.7}$                       & 70.2$_{\pm0.7}$                      & 79.4$_{\pm0.6}$                     & \multicolumn{1}{c|}{91.3$_{\pm0.1}$}                        & 98.7$_{\pm0.3}$                       & 96.6$_{\pm0.1}$                         & 98.3$_{\pm0.0}$                        & 99.8$_{\pm0.0}$                      & 43.6$_{\pm2.1}$                        & 32.9$_{\pm1.2}$                     & 45.3$_{\pm2.1}$                     & \multicolumn{1}{c|}{64.5$_{\pm0.9}$}                        & 85.4$_{\pm1.4}$                        & 77.2$_{\pm2.1}$                        & 86.8$_{\pm1.9}$                       & 97.2$_{\pm0.3}$                        \\
\textit{SAM\&JR}+$\rm{LGV_{\text{SGD}}}$       &  {\color[HTML]{ff0e38}73.8$_{\pm1.8}$} &  {\color[HTML]{ff0e38}61.1$_{\pm1.6}$} & {\color[HTML]{ff0e38}73.1$_{\pm1.8}$} & \multicolumn{1}{c|}{ {\color[HTML]{ff0e38}88.3$_{\pm0.5}$}} &  {\color[HTML]{ff0e38}95.8$_{\pm1.1}$} &  {\color[HTML]{ff0e38}90.9$_{\pm1.9}$} &  {\color[HTML]{ff0e38}95.6$_{\pm1.1}$} &  {\color[HTML]{ff0e38}99.4$_{\pm0.1}$} &  {\color[HTML]{ff0e38}35.6$_{\pm2.9}$} &  {\color[HTML]{ff0e38}25.1$_{\pm2.4}$} &  {\color[HTML]{ff0e38}39.0$_{\pm2.7}$} & \multicolumn{1}{c|}{{\color[HTML]{ff0e38}60.2$_{\pm1.7}$}} &  {\color[HTML]{ff0e38}70.5$_{\pm5.0}$} & {\color[HTML]{ff0e38}61.1$_{\pm5.9}$} &  {\color[HTML]{ff0e38}74.5$_{\pm5.7}$} & {\color[HTML]{ff0e38}93.6$_{\pm1.5}$} \\
\textit{SAM\&JR}+$\rm{LGV}_{\SAM}$     & 82.5$_{\pm0.7}$                        & 71.2$_{\pm1.0}$                        & 81.0$_{\pm0.4}$                        & \multicolumn{1}{c|}{93.0$_{\pm0.2}$}                        & 99.1$_{\pm0.1}$                        & 97.4$_{\pm0.2}$                        & 98.9$_{\pm0.1}$                        & \textbf{99.9$_{\pm0.1}$}                        & 44.0$_{\pm1.5}$ &  {\color[HTML]{ff0e38}32.2$_{\pm1.1}$}  & 46.8$_{\pm1.4}$                        & \multicolumn{1}{c|}{67.6$_{\pm0.8}$}                        & 87.6$_{\pm1.2}$                        & 80.2$_{\pm1.7}$                        & 89.7$_{\pm1.2}$                        & 98.4$_{\pm0.1}$                        \\
\textit{SAM\&JR}+$\rm{LGV}_{\SAM\&\JR}$ & \textbf{83.8$_{\pm0.8}$}               & \textbf{72.9$_{\pm1.2}$}               & \textbf{82.5$_{\pm0.9}$}               & \multicolumn{1}{c|}{\textbf{93.7$_{\pm0.1}$}}               & \textbf{99.3$_{\pm0.1}$}               & \textbf{97.8$_{\pm0.3}$}               & \textbf{99.3$_{\pm0.1}$}               & \textbf{99.9$_{\pm0.0}$}               & \textbf{46.9$_{\pm1.4}$}               & \textbf{34.8$_{\pm1.2}$}               & \textbf{49.5$_{\pm1.0}$}               & \multicolumn{1}{c|}{\textbf{70.0$_{\pm0.5}$}}               & \textbf{90.0$_{\pm0.9}$}               & \textbf{83.2$_{\pm0.7}$}               & \textbf{91.8$_{\pm0.7}$}               & \textbf{98.8$_{\pm0.1}$}               \\ \bottomrule
\end{tabular}
}

\label{tab:LGV-untargeted-targeted}
\vspace{-2mm}
\end{table*}

% \vspace{-2mm}
\subsubsection{Incorporating ensemble strategies}
Using   multiple surrogates  is  widely believed to update adversarial examples in a more general direction that benefits transferability.
Recently, \cite{gubriLGV2022} proposed  constructing the ensemble from a \textit{large geometric vicinity} (LGV), \ie, first fine-tuning a normally trained model with a high constant learning rate to obtain a set of intermediate models that belongs to a large vicinity, then optimizing  AEs iteratively against all these models. 
We conduct experiments  to verify  whether  \SAM~and \SAM\&\JR~can further enhance this ensemble  and analyze how their intrinsic mechanisms  correlate with LGV.
Specifically, we fine-tune models well-trained with  SGD\footnote{We refer SGD here to training models without any  augmentation or regularization mechanisms, not merely the particular optimizer itself.}, \SAM, and \SAM\&\JR ~on CIFAR-10 using  SGD, \SAM, and \SAM\&\JR ~with a constant learning rate of 0.05.

In \cref{tab:LGV-untargeted-targeted}, the transfer results show that LGV with \SAM, \SAM\&\JR~can always yield superior ASRs than standard SGD, indicating that the underlying property of \SAM ~and \JR~allows them to find solutions that benefit transferability. 
More intriguingly, fine-tuning models obtained by \SAM~and \SAM\&\JR~with  SGD always brings non-negligible negative impacts on transferability. 
These suggest solutions well-trained with \SAM~and \SAM\&\JR~lie in the \textit{basin of attractions} featuring better  transferability, 
and a relatively high constant learning rate with SGD will arbitrarily minimize the training loss while leaving the basin of attractions.
Note that a relatively high constant learning rate is necessary for a large vicinity \cite{gubriLGV2022}, thus we can infer that a large vicinity may not be a determining factor for transferability since a single surrogate  can outperform an ensemble of poor surrogates (see \SAM\&\JR ~and \textit{ST}+$\rm{LGV_{\text{SGD}}}$).
Instead, a dense subspace  whose geometry is intrinsically connected to transferability  is more desirable.

% \vspace{-3mm}
\subsubsection{Incorporating other strategies}
We  take  AE generation and ensemble strategies, which are the most well-explored  methods, to exemplify the universality of better surrogates. 
We are aware of other factors that could also contribute to transferability, such as loss object modification~\cite{zhao2021on}, architecture selection~\cite{Wu2020Skip}, $l_2$-norm consideration~\cite{maoP2022TransferAttacks2022},  unrestricted generation~\cite{hu2022protecting,zhou2023advclip}, patch-based AEs~\cite{hu2021advhash}.
We leave it to our future work to investigate whether they can be mildly integrated with our design.

\section{Analyses and Discussions}\label{analysis}

% \titleformat{\subsubsection}[runin]{\normalfont\bfseries}{\thesubsubsection?}{0.5em}{}[.]

\noindent\textbf{Q1: Does the trade-off between  smoothness and  similarity really ``significantly indicate" transferability \wrt surrogates?}

Our theoretical analysis and experimental validation have demonstrated the significant role of smoothness and similarity in regulating transferability among surrogate-side factors. 
To investigate whether their impact on transferability is significant, we calculate the respective Pearson correlation coefficients between the surrogate-related factors and ASR and run  \textit{ordinary least squares} (OLS) regressions to measure their joint effect on ASR. 
\cref{tab:statistical-results} reports the results under each fixed target model and adversarial budget,
which indicates that both smoothness and similarity are highly correlated to ASR, with a clear superiority over other factors.
Besides, the joint effect of smoothness and similarity  ($R^2_1$) is significant in each case and very close to that of the 4 factors combined ($R^2_2$).
Further, it does not exhibit a strong correlation with $\epsilon$ under each target model, indicating the joint effect is  indeed strong and robust.

\black{Interestingly, we can observe that as  $\epsilon$ increases, the relevance  \wrt smoothness alone increases, and the relevance  \wrt similarity alone decreases.
This agrees with the intuition: smoothness depicts the gradient invariance around  origin, while similarity captures the alignment of two gradients in different directions.
As $\epsilon$ approaches 0, smoothness becomes less important for the gradient may not vary too much in a small region, and the legitimate direction becomes more crucial since there is limited scope for success.
%However, as $\epsilon$ increases, the gradient can vary significantly. 
As $\epsilon$ gets bigger, the alignment at the origin requires better smoothness to be further propagated into a bigger region.
These suggest an adaptive similarity-smoothness trade-off surrogate could be  preferable given different $\epsilon$.\\
}

\begin{table}[t!]
\renewcommand\arraystretch{0.95}
\setlength{\abovecaptionskip}{0.05cm}
\centering
\setlength{\tabcolsep}{0.9pt}
\caption{The Pearson correlation coefficients ($r$)    between ASR  and  accuracy ($Acc$), fooling probability ($FP$),  model smoothness ($MS$),  and gradient similarity ($GS$). For the coefficient of determination ($R^2$) in OLS, $R_1^2$ depicts the joint effect of $MS$ and $GS$ on ASR and $R_2^2$ depicts the joint effect of these 4 factors.}
\resizebox{0.48\textwidth}{!}
{
\begin{tabular}{ccccccccccccc}
\toprule
\multicolumn{13}{c}{CIFAR-10}                                                                                                                                                                                      \\ \hline
\multicolumn{1}{c|}{\multirow{2}{*}{}} & \multicolumn{3}{c|}{ResNet50}               & \multicolumn{3}{c|}{VGG16}                  & \multicolumn{3}{c|}{InceptionV3}            & \multicolumn{3}{c}{DenseNet121} \\
\multicolumn{1}{c|}{}                  & $\frac{\textrm{4}}{\textrm{255}}$ & $\frac{\textrm{8}}{\textrm{255}}$ & \multicolumn{1}{c|}{$\frac{\textrm{16}}{\textrm{255}}$} & $\frac{\textrm{4}}{\textrm{255}}$ & $\frac{\textrm{8}}{\textrm{255}}$ & \multicolumn{1}{c|}{$\frac{\textrm{16}}{\textrm{255}}$} & $\frac{\textrm{4}}{\textrm{255}}$ & $\frac{\textrm{8}}{\textrm{255}}$ & \multicolumn{1}{c|}{$\frac{\textrm{16}}{\textrm{255}}$} & $\frac{\textrm{4}}{\textrm{255}}$ & $\frac{\textrm{8}}{\textrm{255}}$ & $\frac{\textrm{16}}{\textrm{255}}$   \\ \hline
\multicolumn{1}{c|}{$r(Acc, ASR)$}        & 0.121 & -0.049 & \multicolumn{1}{c|}{-0.255}  & 0.032 & -0.095 & \multicolumn{1}{c|}{-0.259}  & 0.211 & 0.028 & \multicolumn{1}{c|}{-0.187}  & 0.286    & 0.130    & -0.107     \\
\multicolumn{1}{c|}{$r(FP, ASR)$}        & 0.118 & 0.106 & \multicolumn{1}{c|}{0.055}  & 0.197 & 0.164 & \multicolumn{1}{c|}{0.098}  & 0.105 & 0.091 & \multicolumn{1}{c|}{0.040}  & 0.051    & 0.008    & -0.033     \\
\multicolumn{1}{c|}{$r(MS, ASR)$}        & 0.540 & 0.662 & \multicolumn{1}{c|}{0.753}  & 0.614 & 0.700 & \multicolumn{1}{c|}{0.774}  & 0.478 & 0.630 & \multicolumn{1}{c|}{0.732}  & 0.418    & 0.581    & 0.701     \\
\multicolumn{1}{c|}{$r(GS, ASR)$}        & 0.912 & 0.837 & \multicolumn{1}{c|}{0.759}  & 0.924 & 0.878 & \multicolumn{1}{c|}{0.816}  & 0.864 & 0.743 & \multicolumn{1}{c|}{0.653}  & 0.875    & 0.710    & 0.585     \\ \hline
\multicolumn{1}{c|}{$R^{2}_{1}$}                & 0.909 & 0.885 & \multicolumn{1}{c|}{0.879}  & 0.917 & 0.906 & \multicolumn{1}{c|}{0.893}  & 0.870 & 0.827 & \multicolumn{1}{c|}{0.838}  & 0.880    & 0.773    & 0.765     \\
\multicolumn{1}{c|}{$R^{2}_{2}$}                & 0.943 & 0.892 & \multicolumn{1}{c|}{0.894}  & 0.931 & 0.907 & \multicolumn{1}{c|}{0.913}  & 0.911 & 0.843 & \multicolumn{1}{c|}{0.852}  & 0.926    & 0.823    & 0.799     \\ \hline

\multicolumn{13}{c}{ImageNette}                                                                                                                                                                                    \\ \hline
\multicolumn{1}{c|}{\multirow{2}{*}{}} & \multicolumn{3}{c|}{VGG16}                  & \multicolumn{3}{c|}{DenseNet121}            & \multicolumn{3}{c|}{MobileNetV2}            & \multicolumn{3}{c}{Xception}    \\
\multicolumn{1}{c|}{}                  & $\frac{\textrm{4}}{\textrm{255}}$ & $\frac{\textrm{8}}{\textrm{255}}$ & \multicolumn{1}{c|}{$\frac{\textrm{16}}{\textrm{255}}$} & $\frac{\textrm{4}}{\textrm{255}}$ & $\frac{\textrm{8}}{\textrm{255}}$ & \multicolumn{1}{c|}{$\frac{\textrm{16}}{\textrm{255}}$} & $\frac{\textrm{4}}{\textrm{255}}$ & $\frac{\textrm{8}}{\textrm{255}}$ & \multicolumn{1}{c|}{$\frac{\textrm{16}}{\textrm{255}}$} & $\frac{\textrm{4}}{\textrm{255}}$ & $\frac{\textrm{8}}{\textrm{255}}$ & $\frac{\textrm{16}}{\textrm{255}}$    \\ \hline
\multicolumn{1}{c|}{$r(Acc, ASR)$}       & 0.218 & 0.128 & \multicolumn{1}{c|}{-0.047}  & 0.133 & 0.023 & \multicolumn{1}{c|}{-0.165}  & 0.093 & -0.014 & \multicolumn{1}{c|}{-0.196}  & 0.072    & -0.028    & -0.177     \\
\multicolumn{1}{c|}{$r(FP, ASR)$}        & 0.121 & 0.120 & \multicolumn{1}{c|}{-0.055}  & 0.146 & 0.117 & \multicolumn{1}{c|}{-0.081}  & -0.034 & 0.003 & \multicolumn{1}{c|}{-0.151}  & 0.062    & -0.012    & -0.156     \\
\multicolumn{1}{c|}{$r(MS, ASR)$}        & 0.410 & 0.472 & \multicolumn{1}{c|}{0.516}  & 0.458 & 0.503 & \multicolumn{1}{c|}{0.492}  & 0.495 & 0.576 & \multicolumn{1}{c|}{0.625}  & 0.499    & 0.571    & 0.624     \\
\multicolumn{1}{c|}{$r(GS, ASR)$}        & 0.864 & 0.829 & \multicolumn{1}{c|}{0.764}  & 0.859 & 0.855 & \multicolumn{1}{c|}{0.853}  & 0.789 & 0.766 & \multicolumn{1}{c|}{0.780}  & 0.753    & 0.722    & 0.721     \\ \hline
\multicolumn{1}{c|}{$R^{2}_{1}$}                & 0.854 & 0.841 & \multicolumn{1}{c|}{0.780}  & 0.821 & 0.843 & \multicolumn{1}{c|}{0.832}  & 0.715 & 0.742 & \multicolumn{1}{c|}{0.804}  & 0.688    & 0.704    & 0.752     \\
\multicolumn{1}{c|}{$R^{2}_{2}$}                & 0.871 & 0.847 & \multicolumn{1}{c|}{0.798}  & 0.854 & 0.850 & \multicolumn{1}{c|}{0.867}  & 0.750 & 0.752 & \multicolumn{1}{c|}{0.833}  & 0.712    & 0.707    & 0.781     \\ 
\bottomrule
\end{tabular}
}
\label{tab:statistical-results}
\vspace{-2mm}
\end{table}

\noindent\textbf{Q2: What if the target model is not trained on $\mathcal{D}$?} 
So far, we presume the target models are trained within the normal data distribution $\mathcal{D}$ to lay out our analysis and conclusions, which is also set by default in \security19.
\black{To verify the generality of the obtained conclusion, we deliver the results here when the data distribution restrictions of target model $\mathcal{G}$ is relaxed.
First, we generalize \cref{hypothesis1} as:}
% \vspace{-0.1cm}
\begin{hypothesis}[\textbf{$\mathcal{D}$-surrogate  aligns general target models better than $\mathcal{D}'$-surrogate}]
\label{hypothesis2}
Suppose surrogates $\mathcal{F}_{\mathcal{D}}$ and $\mathcal{F}_{\mathcal{D}'}$ are trained on $\mathcal{D}$ and  $\mathcal{D}'$  respectively. 
Denote the data distribution of target model $\mathcal{G}_{\mathcal{D}_t}$ as $\mathcal{D}_t \in \mathcal{P}_{\mathcal{X}\times\mathcal{Y}}$. 
They  share a joint training loss $\ell$.
Then $\mathbb{E}_{\mathcal{D}_t\sim P_{\mathcal{X}\times\mathcal{Y}}} [\tilde{\mathcal{S}}_\mathcal{D}\left(\ell_{\mathcal{F}_{\mathcal{D}'}}, \ell_{\mathcal{G}_{\mathcal{D}_t}}\right)
- \tilde{\mathcal{S}}\left(\ell_{\mathcal{F}_{\mathcal{D}}}, \ell_{\mathcal{G}_{\mathcal{D}_t}}\right)]  < 0$ generally holds.
\end{hypothesis}

This hypothesis considers a more practical case that the adversary has no knowledge of  $\mathcal{D}_t$, and tries to attack $\mathcal{G}_{\mathcal{D}_t}$ with AEs constructed on  normal samples $(x, y) \in \mathcal{D}$.
The adversary hopes its surrogate aligns well with any $\mathcal{G}_{\mathcal{D}_t}$ regardless of what $\mathcal{D}_t$ is.
We show that $\mathcal{D}$-surrogate $\mathcal{F}_\mathcal{D}$ is still better than $\mathcal{D}'$-surrogate $\mathcal{F}_\mathcal{D}'$ based on the expectation over various $\mathcal{D}_t$.

\cref{tab:source-target similarities} reports the general surrogate-target gradient similarities. 
It's obvious that $\mathcal{D}'$ surrogates (above \AT) generally worsen the gradient alignments as their increment scores are all negative, except for \AT ~with a small budget $\epsilon=0.01$. 
Conversely, the  scores of $\mathcal{D}$ surrogates (below \AT)   are all non-negative and most of them improve the general alignment.
Note that when $\mathcal{D}'$ happens to be $\mathcal{D}_t$, the gradient similarity between $\mathcal{F}_\mathcal{D}'$ and $\mathcal{G}_{\mathcal{D}_t}$   is high, especially for \CM ~and \MU.
However, these surrogates  lower the similarity towards all other targets.
In general, they are not good surrogates since $\mathcal{D}_t$ is unknown to the adversary.

The same observations apply to the transfer ASRs. 
\cref{tab:source-target-transferability} reports the transfer ASRs against various $\mathcal{G}_{\mathcal{D}_t}$ that are not trained on $\mathcal{D}$.
Even when surrogates have the same training mechanisms as the corresponding target models, the best surrogate is still either \SAM\&\IR ~or \SAM\&\JR, ~regardless of what $\mathcal{D}_t$ is. This finding is in line with the results in \cref{tab:cifar10-ASR}.

\begin{table}[t!]
\setlength{\abovecaptionskip}{0.05cm}
\centering
\setlength{\tabcolsep}{2pt}
\caption{\black{\textbf{Cross-architecture} gradient similarity between  surrogate-target pairs (ResNet18-VGG16) on CIFAR-10. 
The first row reports the results of \ST ~surrogate, below which report the difference to  \ST.
The last column rates the score of the surrogates based on the general alignment toward multiple targets. (Accumulate +1 for positive, -1 for negative values.)
} }
\resizebox{0.5\textwidth}{!}
{
\begin{tabular}{cc|cccccc|c}
\toprule
\multicolumn{2}{c|}{\diagbox{Surrogate  $\mathcal{F}$}{Target  $\mathcal{G}_{\mathcal{D}_t}$}}    & \ST             & \CO             & \CM             & \MU             & \LS             
& \begin{tabular}[c]{@{}c@{}}\AT  \end{tabular} & \begin{tabular}[c]{@{}c@{}}Increment\\ Score\end{tabular} \\ \hline
\rowcolor[HTML]{F5F5F5}
\multicolumn{1}{c|}{$\,\,\mathcal{D}\,\,$}   & \ST                 & {0.141}    & 0.135          & 0.056          & 0.038          & 0.014          & 0.131            & {0}               \\ \hline
\multicolumn{1}{c|}{\multirow{5}{*}{$\,\,\mathcal{D^{\prime}}\,\,$}}  & \CO                 & -0.038         & {+0.005}    & -0.001         & +0.004          & -0.001         & -0.041           & -2              \\
\multicolumn{1}{c|}{} & \CM                 & -0.105         & -0.084         & \textbf{+0.008} & -0.021         & -0.006         & -0.098           & -4              \\
\multicolumn{1}{c|}{} & \MU                 & -0.106         & -0.092         & -0.030         & \textbf{+0.122} & -0.003         & -0.102           & -4              \\
\multicolumn{1}{c|}{} & \LS                 & -0.094         & -0.098         & -0.033         & -0.013         & {\ul +0.008} & -0.090           & -4              \\ \cline{2-9}
\multicolumn{1}{c|}{} & \AT & -0.002         & -0.005         & -0.013         & +0.001          & +0.001          & { +0.047}      & {0}               \\ \hline
\multicolumn{1}{c|}{\multirow{6}{*}{$\,\,\mathcal{D}\,\,$}}  & \IR                 & {-0.002}    & {-0.014}    & {-0.020}    & {+0.008}    & {\ul +0.007}    & {+0.043}   & 0     \\ 
\multicolumn{1}{c|}{}  & \JR                 & {\ul +0.044}    & {\ul +0.028}    & {\ul +0.004}    & {+0.012}    & {+0.004}    & {\ul +0.072}   & \textbf{+6}     \\ 
\multicolumn{1}{c|}{}  & \ER                 & {+0.012}    & {+0.006}    & {-0.013}    & {+0.011}    & {+0.005}    & {+0.031}   & {\ul +4}     \\ 
\multicolumn{1}{c|}{}  & \SAM                & {\ul +0.055} & {\ul +0.057} & {\ul +0.007}    & {\ul +0.029}    & {+0.006}    & { +0.058}      & \textbf{+6}      \\
\cline{2-9}
\multicolumn{1}{c|}{}  & \SAM\&\IR        & {+0.012}    & {+0.001}    & {-0.021}    & {+0.016}    & \textbf{+0.010}    & {\ul +0.065}                                            & {\ul +4}    \\
\multicolumn{1}{c|}{}  & \SAM\&\JR        & \textbf{+0.078}    & \textbf{+0.072}    & \textbf{+0.008}    & {\ul +0.035}    & \textbf{+0.010}    & \textbf{+0.093}                                            & \textbf{+6}    \\
\bottomrule
\end{tabular}
}
\label{tab:source-target similarities}
\vspace{-2mm}
\end{table}

\vspace{3mm}
\noindent\textbf{Q3: What is the relationship between model smoothness and model complexity?}
Model complexity  has been investigated in \security19  to figure out whether it correlates with transferability, which is defined as the variability of the loss landscape, deriving from  the number of local optima of the surrogates.
Here we correlate the number of local optima with  model smoothness. It is obvious that a high-complexity model featuring  multiple local optima will cause a larger variance to the loss landscape of AE optimization.
Formally, for a feature $x$, the fewer optima in its $\epsilon$-neighbor, the smaller the variance would be. 
Thus, supposing the loss function used to optimize adversarial points is continuously differentiable, by promoting the smoothness, we can expect fewer optima \black{in the finite $\epsilon$-neighbor space} since all the local optima are wider. 
Consequently, model smoothness negatively correlates with model complexity, \ie, a smoother model indicates lower complexity.

\begin{table}[t!]
\setlength{\tabcolsep}{2pt}
\centering
\caption{Transfer ASRs of AEs crafted from various ResNet18 surrogates against VGG16 target models  trained with \AT ~and augmentations   on CIFAR-10. $\bigstar$ denotes that surrogates  have the same training methods as  the corresponding target models. `T' represents `Targeted' attack and `U' represents `Untargeted' attack.}
\resizebox{0.5\textwidth}{!}
{
\begin{tabular}{c|cc|cc|cc|cc|cc|cc}
\toprule
 & \multicolumn{2}{c|}{\AT, $\epsilon=0.01$} & \multicolumn{2}{c|}{\AT, $\epsilon=0.05$} & \multicolumn{2}{c|}{\CM}  & \multicolumn{2}{c|}{\CO}     & \multicolumn{2}{c|}{\LS}               & \multicolumn{2}{c}{\MU}        \\
                  & T          & U        & T          & U        & T       & U   & T          & U        & T          & U        & T       & U   \\ \hline
\ST                & 20.18             & 51.33             & 4.83              & 22.49             & 42.42          & 79.49       & 38.80             & 76.33             & 34.62             & 72.34             & 37.46          & 69.25    \\
\rowcolor[HTML]{F5F5F5}
$\bigstar$               & 43.01             & 76.96             & {\ul 35.54}       & 69.56             & 14.20          & 54.57     & 32.91             & 79.11             & 11.90             & 46.66             & 15.81          & 54.43       \\
\IR                & {\ul 60.08}       & {\ul 91.23}       & 31.84             & {\ul 73.98}       & 59.80          & 91.82      & 60.78             & 92.14             & 63.20             & 93.52             & 60.79          & 90.06      \\
\JR                & 57.24             & 83.54             & 14.57             & 41.77             & 73.44          & 92.47      & {\ul 74.77}       & 93.03             & 72.27             & 92.30             & 68.34          & 86.90     \\
\ER                & 32.61             & 77.14             & 9.55              & 40.41             & 41.94          & 86.92       & 43.04             & 87.39             & 44.51             & 87.77             & 42.12          & 82.66   \\
\SAM               & 40.35             & 79.31             & 8.41              & 35.05             & {\ul 75.34}    & {\ul 96.59}   & 67.72             & {\ul 94.84}       & 62.68             & 93.57             & 65.37          & 90.43 \\
\SAM\&\IR            & \textbf{71.88}    & \textbf{94.01}    & \textbf{48.38}    & \textbf{80.88}    & 70.81          & 93.75        & 73.05             & 94.31             & {\ul 74.02}       & {\ul 95.47}       & {\ul 70.81}    & {\ul 92.68}  \\
\SAM\&\JR            & 58.81             & 89.23             & 15.11             & 47.11             & \textbf{83.56} & \textbf{97.57}  & \textbf{80.19}    & \textbf{97.02}    & \textbf{77.95}    & \textbf{96.92}    & \textbf{75.06} & \textbf{94.49}\\
\bottomrule
\end{tabular}
}

\label{tab:source-target-transferability}

\vspace{-2mm}
\end{table}

\vspace{3mm}
\noindent\textbf{Q4: Does a smaller model generalization gap imply higher adversarial transferability?} 
It is natural to raise this question as the idea of adversarial transferability in a way resembles the concept of  model generalization. The former captures the effectiveness of AEs against unseen models, while the latter evaluates the model's ability to adapt properly to previously unseen data. 
A concurrent work \cite{wang2022role} attempts to show the positive relationship between model generalization  and adversarial transferability, by applying a spectral normalization method constraining the $L_2$-operator norm of the surrogate's weight matrix to empirically demonstrate the increase of both transferability and generalization.

Nevertheless, we emphasize that we cannot over-generalize the alignment between generalization and  transferability. 
% Spectral normalization  plays a role similar to gradient regularization (\eg, \JR, \IR, \ER, and \SAM) that convincingly  benefits  model smoothness, thus improving the transferability. 
It is easy to find some cases that boost generalization yet degrades  transferability (\eg, through damaging  similarity). 
For example, the results in \cref{tab:cifar10-ASR,tab:imagenette-ASR,tab:autoattack,tab:API-results} have shown  augmentations that boost generalization  could sacrifice transferability. 
More typically, the strong data augmentations that yield the smallest generalization gaps (see \cref{clean-acc}) perform the worst in transfer attacks.
In fact, the product of spectral norm  of  model's layer-wise weight matrices is a loose upper bound of  $\|\nabla_x f_\theta(x)\|_2$~\cite{NEURIPS2020_ab731488,yoshida2017spectral}. 
When $\|\nabla_x f_\theta(x)\|_2 \rightarrow 0$, we have $\|\nabla_x f_\theta(x)\|_F\rightarrow 0$. Utilizing \cref{proposition1}, we also have $\|\nabla_x \ell(f_\theta(x),y)\| \rightarrow 0$. This suggests that spectral normalization actually benefits transferability by promoting smoothness.

\vspace{3mm}
\noindent\textbf{Q5: What is restraining the adversarial transferability of ViT?}
% {\color{blue}In this subsection, we further showcase the effectiveness of \SAM~on boosting the transferability of ViT. At a higher level, we also point out that the current popular training recipe for ViT is not preferable for launching transfer attacks by conducting experiments on publicly available models in the real world.}
Recently, ViT \cite{dosovitskiy2021an} receives notable attention for its outstanding performance  on various tasks.
Meanwhile, some works \cite{shao2022on,Mahmood_2021_ICCV} have demonstrated that ViT models exhibit lower adversarial transferability than CNNs when used as surrogates, and a recent work \cite{naseer2022on} argues that this is because the current  attacks are not strong enough to fully exploit the transferability potential of ViT.
%and propose an ad hoc design to realize stronger attacks specifical for ViT.
Complementarily,   we provide a different viewpoint that \textit{the transferability of ViT may have been restrained by its own  training paradigm}.
ViT is known to have poor trainability~\cite{Chen_2021_ICCV,chenICLR2022When2022}, and thus generally requires large-scale pre-training and fine-tuning with strong data augmentations.  
This training paradigm is used by default since  its appearance.
However, as we discussed in this paper, changing the data distribution could have negative impacts on  similarity.
Using data augmentation is plausible for accuracy performance, yet not ideal for transfer attacks.

\begin{table}[!t]
\renewcommand\arraystretch{0.95}
\setlength{\abovecaptionskip}{0.05cm}
\centering
\caption{Untargeted transfer ASRs of AEs crafted against ViT ImageNet surrogates on target models. In \%.  }
\setlength{\tabcolsep}{1pt}
\resizebox{0.48\textwidth}{!}
{
\begin{tabular}{c|c|cccc|cc}
\toprule
\multirow{2}{*}{Source} &\multirow{2}{*}{Acc} & \multicolumn{4}{c|}{Normally Trained}            & \multicolumn{2}{c}{Adversarially Trained}  \\
                      &  & Res101 & Dense121 & VGG16 & IncV3 & IncV3 $_{\AT}$ & IncResV3$_{\AT}$  \\ \hline
ViT-B/16  & \textbf{84.5}               & 23.8      & 37.6        & 50.0    & 45.9        & 48.6           & 27.6                       \\
ViT-B/16-\SAM   & 80.2          & \textbf{48.4}      & \textbf{69.9}        & \textbf{75.2}  & \textbf{70.2}        & \textbf{59.5}          & \textbf{40.5}                     \\ \hline
ViT-B/32       & \textbf{80.7}          & 19.6      & 35.7        & 50.0    & 49.1        & 51.9           & 29.7                         \\
ViT-B/32-\SAM  & 73.7           & \textbf{34.0}     & \textbf{55.8}        & \textbf{68.9}  & \textbf{64.3}        & \textbf{63.1}           & \textbf{43.9}                     \\ \hline
ViT-B/16+DIM  & \textbf{84.5}   & 60.7      & 75.3        & 79.9  & 80.0          & 61.4           & 43.8     \\
ViT-B/16-\SAM+DIM & 80.2 & \textbf{81.6}      & \textbf{95.6}        & \textbf{93.9}  & \textbf{94.6}        & \textbf{82.4}           & \textbf{79.9}  \\ \hline
ViT-B/32+DIM    & \textbf{80.7} & 48.4      & 66.4        & 75.2  & 81.6        & 68.6           & 54.4        \\
ViT-B/32-\SAM+DIM & 73.7 & \textbf{48.9}      & \textbf{78.1}        & \textbf{78.4}  & \textbf{82.5}        & \textbf{79.3}           & \textbf{72.3}          \\ \bottomrule
\end{tabular}
}
\label{tab:ViT-transfer}
\vspace{-2mm}
\end{table}

To validate this standpoint, we show that with a different training method such as  \SAM, the transferability of ViT can be significantly improved.
Our experiments are conducted on well-trained ImageNet classifiers, all publicly available from PyTorch image model library timm~\cite{rw2019timm}. 
We consider standard ViT surrogates ViT-B/16 and ViT-B/32, which are both first pre-trained on a large-scale dataset and then fine-tuned on ImageNet using a combination of \MU~and RandAugment~\cite{Cubuk_2020_CVPR_Workshops}. 
In contrast, their \SAM~versions ViT-B/16-\SAM ~and ViT-B/32-\SAM ~are merely trained from scratch.
We use both normally and adversarially trained CNN  classifiers as target models.  
We craft AEs on these 4 surrogates using both standard PGD and DIM  under $\epsilon=16/255$ on the  ImageNet-compatible dataset as in~\cite{qin2022boosting}.
The accuracy of these surrogates (referring to~\cite{rw2019timm}) and ASRs  are reported in  \cref{tab:ViT-transfer}.
The results indicate that standard ViT models have much weaker transferability than their \SAM~versions, despite the fact that they have higher accuracy.

\vspace{3mm}
\noindent\textbf{Q6: Will the transferability benefit of better surrogate in ``lab''  generalize to ``real-world'' environment?}
Recently, \SP22 suggests that many conclusions about adversarial  transferability done in the simplistic controlled ``lab" environments may not hold in the real world. 
We thus conduct  experiments   to   verify the generality of our conclusions using  AEs crafted against our  ImageNette surrogates.

% However, we argue that the MLaaS system alone does not stand for the real world. 
% In the real world, the target model can also have the exact same output space as the surrogate (same as we simulated so far), such as attacking a binary malware detection task.
% Or one can construct a surrogate model training on his or her facial images to conduct transfer attacks so as to avoid the face recognition system~\cite{wenger2021sok}, in which the latter holds an output space subsuming the former's.
% At a high level, a truly generalized conclusion can be applied to as many scenarios as possible.
% Thus, we conduct two types of experiments here to further verify the generality of our conclusions using the adversarial examples crafted against our local ImageNette surrogates.

%API 

% First, we consider the case where target models are trained on ImageNet-1K (also  from timm~\cite{rw2019timm}), featuring an output space subsuming our local surrogates  as  ImageNette is a 10 classes subset of  ImageNet-1K.  
% We conduct transfer attacks by associating  the class index correctly.
% The results are reported in  \cref{tab:imagenet-results}.

Specifically, aligning with \SP22,  we conduct transfer attacks on three commercial MLaaS platforms---AWS Rekognition, Alibaba Cloud, and Baidu Cloud, using their code, and report the results in \cref{tab:API-results}. 
% The experimental settings can be found in the Appendix.
The results  are generally consistent with that in \cref{tab:imagenette-ASR}.
Particularly, we highlight here that our experimental results on MLaaS platforms  \textbf{contradict} the observation in \SP22 that ``adversarial training and data augmentation do not show  strong correlations to the transfer attack''.
\cref{tab:API-results} indicates that strong data augmentations like \CM ~and \MU ~generally exhibit lower ASRs than \ST ~for both targeted and untargeted attacks. 
In addition, we  can  observe that ``little robustness'' (see \AT, $\epsilon=1$) improves transferability while large adversarial magnitude (see \AT, $\epsilon=5$) degrades it.

\vspace{3mm}
\noindent\textbf{Q7: How does \SAM~yield superior alignment towards every training solution?}
Given its superior performance on transferability and gradient alignment, it is natural to wonder why \SAM ~exhibits this  property.
%We provide a possible explanation for this by relating it to the most recent advance in the literature.
Recent works~\cite{pmlr-v180-gubri22a,li2023making} empirically demonstrate that attacking an ensemble of models in the distribution obtained via \textit{Bayesian} learning substantially improves the transferability.

\begin{table}[t!]\small
\setlength{\abovecaptionskip}{0.05cm}
\renewcommand\arraystretch{0.95}
\setlength{\belowcaptionskip}{-0.1cm}
\setlength{\tabcolsep}{2.5pt}
\centering
\caption{Transfer ASRs under the different $L_\infty$ budgets on 3 MLaaS platforms using AEs reported in \cref{tab:imagenette-ASR}. }
\resizebox{0.48\textwidth}{!}{

\begin{tabular}{c|cccccc|cccccc}
\toprule
\multicolumn{1}{l|}{}   & \multicolumn{6}{c|}{Untargeted}                                                                                                                                                                                                   & \multicolumn{6}{c}{Targeted}                                                                                                                                                                                                     \\ \hline
                        & \multicolumn{2}{c|}{AWS}                                                         & \multicolumn{2}{c|}{Baidu}                                                       & \multicolumn{2}{c|}{Aliyun}                                 & \multicolumn{2}{c|}{AWS}                                                         & \multicolumn{2}{c|}{Baidu}                                                       & \multicolumn{2}{c}{Aliyun}                                 \\
\multirow{-2}{*}{Model} & $\frac{\textrm{8}}{\textrm{255}}$                        & \multicolumn{1}{c|}{$\frac{\textrm{16}}{\textrm{255}}$}                       & $\frac{\textrm{8}}{\textrm{255}}$                        & \multicolumn{1}{c|}{$\frac{\textrm{16}}{\textrm{255}}$}                       & $\frac{\textrm{8}}{\textrm{255}}$                         & $\frac{\textrm{16}}{\textrm{255}}$                       & $\frac{\textrm{8}}{\textrm{255}}$                         & \multicolumn{1}{c|}{$\frac{\textrm{16}}{\textrm{255}}$}                       & $\frac{\textrm{8}}{\textrm{255}}$                         & \multicolumn{1}{c|}{$\frac{\textrm{16}}{\textrm{255}}$}                       & $\frac{\textrm{8}}{\textrm{255}}$                        & $\frac{\textrm{16}}{\textrm{255}}$                       \\ \hline
\textit{ST}             & 9.4                          & \multicolumn{1}{c|}{13.4}                         & 30.0                         & \multicolumn{1}{c|}{53.8}                         & 18.6                         & 52.2                         & 23.0                         & \multicolumn{1}{c|}{23.8}                         & 11.6                         & \multicolumn{1}{c|}{16.6}                         & 2.6                         & 6.4                          \\ \hline
\textit{MU}             & \cellcolor[HTML]{F2F2F2}8.6  & \multicolumn{1}{c|}{\cellcolor[HTML]{F2F2F2}10.8} & \cellcolor[HTML]{F2F2F2}20.4 & \multicolumn{1}{c|}{\cellcolor[HTML]{F2F2F2}39.6} & \cellcolor[HTML]{F2F2F2}12.4 & \cellcolor[HTML]{F2F2F2}33.8 & \cellcolor[HTML]{F2F2F2}20.4 & \multicolumn{1}{c|}{23.8}                         & \cellcolor[HTML]{F2F2F2}9.6  & \multicolumn{1}{c|}{\cellcolor[HTML]{F2F2F2}15.0} & \cellcolor[HTML]{F2F2F2}1.4 & \cellcolor[HTML]{F2F2F2}2.8  \\
\textit{CM}             & \cellcolor[HTML]{F2F2F2}8.0  & \multicolumn{1}{c|}{\cellcolor[HTML]{F2F2F2}10.8} & \cellcolor[HTML]{F2F2F2}19.4 & \multicolumn{1}{c|}{\cellcolor[HTML]{F2F2F2}35.0} & \cellcolor[HTML]{F2F2F2}12.4 & \cellcolor[HTML]{F2F2F2}29.2 & \cellcolor[HTML]{F2F2F2}22.2 & \multicolumn{1}{c|}{\cellcolor[HTML]{F2F2F2}23.0} & \cellcolor[HTML]{F2F2F2}10.0 & \multicolumn{1}{c|}{\cellcolor[HTML]{F2F2F2}11.2} & \cellcolor[HTML]{F2F2F2}1.2 & \cellcolor[HTML]{F2F2F2}3.0  \\
\textit{CO}             & 9.6                          & \multicolumn{1}{c|}{\cellcolor[HTML]{F2F2F2}13.0} & \cellcolor[HTML]{F2F2F2}27.2 & \multicolumn{1}{c|}{\cellcolor[HTML]{F2F2F2}48.4} & 18.6                         & \cellcolor[HTML]{F2F2F2}51.8 & 23.2                         & \multicolumn{1}{c|}{\cellcolor[HTML]{FCE4D6}26.4} & \cellcolor[HTML]{FCE4D6}14.0 & \multicolumn{1}{c|}{16.8}                         & \cellcolor[HTML]{F2F2F2}2.0 & \cellcolor[HTML]{FCE4D6}6.8  \\
\textit{LS}             & 9.6                          & \multicolumn{1}{c|}{\cellcolor[HTML]{F2F2F2}12.0} & \cellcolor[HTML]{F2F2F2}23.4 & \multicolumn{1}{c|}{\cellcolor[HTML]{F2F2F2}43.0} & \cellcolor[HTML]{F2F2F2}12.2 & \cellcolor[HTML]{F2F2F2}33.6 & \cellcolor[HTML]{F2F2F2}22.2 & \multicolumn{1}{c|}{\cellcolor[HTML]{F2F2F2}23.0} & \cellcolor[HTML]{FCE4D6}12.6 & \multicolumn{1}{c|}{\cellcolor[HTML]{F2F2F2}14.4} & \cellcolor[HTML]{F2F2F2}1.4 & \cellcolor[HTML]{F2F2F2}2.0  \\ \hline
\textit{AT}, $\epsilon = 1$         & 9.6                          & \multicolumn{1}{c|}{\cellcolor[HTML]{F8CBAD}24.0} & \cellcolor[HTML]{FCE4D6}38.2 & \multicolumn{1}{c|}{\cellcolor[HTML]{FCE4D6}60.4} & \cellcolor[HTML]{FCE4D6}35.0 & \cellcolor[HTML]{FCE4D6}82.6 & \cellcolor[HTML]{FCE4D6}24.8 & \multicolumn{1}{c|}{\cellcolor[HTML]{FCE4D6}33.2} & \cellcolor[HTML]{FCE4D6}14.8 & \multicolumn{1}{c|}{\cellcolor[HTML]{F8CBAD}27.6} & \cellcolor[HTML]{FCE4D6}3.6 & \cellcolor[HTML]{FCE4D6}13.8 \\
\textit{AT}, $\epsilon = 5$         & \cellcolor[HTML]{F2F2F2}7.8  & \multicolumn{1}{c|}{\cellcolor[HTML]{FCE4D6}16.0} & \cellcolor[HTML]{F2F2F2}22.2 & \multicolumn{1}{c|}{\cellcolor[HTML]{F2F2F2}53.4} & \cellcolor[HTML]{F2F2F2}15.6 & \cellcolor[HTML]{FCE4D6}60.0 & \cellcolor[HTML]{F2F2F2}22.0 & \multicolumn{1}{c|}{\cellcolor[HTML]{FCE4D6}27.6} & \cellcolor[HTML]{F2F2F2}11.0 & \multicolumn{1}{c|}{\cellcolor[HTML]{FCE4D6}17.0} & \cellcolor[HTML]{F2F2F2}1.2 & \cellcolor[HTML]{F2F2F2}5.0  \\ \hline
\textit{IR}             & \cellcolor[HTML]{FCE4D6}10.4 & \multicolumn{1}{c|}{\cellcolor[HTML]{FCE4D6}23.6} & \cellcolor[HTML]{FCE4D6}45.8 & \multicolumn{1}{c|}{\cellcolor[HTML]{FCE4D6}60.0} & \cellcolor[HTML]{F8CBAD}48.8 & \cellcolor[HTML]{F8CBAD}86.0 & \cellcolor[HTML]{FCE4D6}25.0 & \multicolumn{1}{c|}{\cellcolor[HTML]{F8CBAD}34.4} & \cellcolor[HTML]{F8CBAD}17.2 & \multicolumn{1}{c|}{\cellcolor[HTML]{FCE4D6}26.4} & \cellcolor[HTML]{FCE4D6}8.0 & \cellcolor[HTML]{F8CBAD}15.4 \\
\textit{JR}             & \cellcolor[HTML]{FCE4D6}9.8  & \multicolumn{1}{c|}{\cellcolor[HTML]{FCE4D6}22.6} & \cellcolor[HTML]{F8CBAD}46.4 & \multicolumn{1}{c|}{\cellcolor[HTML]{FCE4D6}60.6} & \cellcolor[HTML]{FCE4D6}48.2 & \cellcolor[HTML]{FCE4D6}79.6 & \cellcolor[HTML]{FCE4D6}26.8 & \multicolumn{1}{c|}{\cellcolor[HTML]{FCE4D6}32.0} & \cellcolor[HTML]{F8CBAD}17.2 & \multicolumn{1}{c|}{\cellcolor[HTML]{FCE4D6}25.0} & \cellcolor[HTML]{FCE4D6}7.8 & \cellcolor[HTML]{FCE4D6}12.4 \\
\textit{ER}             & \cellcolor[HTML]{FCE4D6}9.8  & \multicolumn{1}{c|}{13.4}                         & \cellcolor[HTML]{F2F2F2}28.2 & \multicolumn{1}{c|}{\cellcolor[HTML]{FCE4D6}56.2} & \cellcolor[HTML]{FCE4D6}24.0 & \cellcolor[HTML]{FCE4D6}56.6 & \cellcolor[HTML]{FCE4D6}23.8 & \multicolumn{1}{c|}{\cellcolor[HTML]{FCE4D6}26.4} & \cellcolor[HTML]{FCE4D6}14.2 & \multicolumn{1}{c|}{\cellcolor[HTML]{FCE4D6}20.0} & \cellcolor[HTML]{FCE4D6}3.2 & \cellcolor[HTML]{FCE4D6}8.0  \\
\textit{SAM}            & \cellcolor[HTML]{FCE4D6}10.2 & \multicolumn{1}{c|}{\cellcolor[HTML]{FCE4D6}16.4} & \cellcolor[HTML]{FCE4D6}37.2 & \multicolumn{1}{c|}{\cellcolor[HTML]{FCE4D6}59.2} & \cellcolor[HTML]{FCE4D6}36.2 & \cellcolor[HTML]{FCE4D6}68.2 & \cellcolor[HTML]{FCE4D6}23.4 & \multicolumn{1}{c|}{\cellcolor[HTML]{FCE4D6}30.0} & \cellcolor[HTML]{FCE4D6}13.8 & \multicolumn{1}{c|}{\cellcolor[HTML]{FCE4D6}19.2} & \cellcolor[HTML]{FCE4D6}4.6 & \cellcolor[HTML]{FCE4D6}9.8  \\ \hline
\textit{SAM}\&\textit{IR}         & \cellcolor[HTML]{F8CBAD}11.0 & \multicolumn{1}{c|}{\cellcolor[HTML]{F4B084}28.4} & \cellcolor[HTML]{F4B084}54.2 & \multicolumn{1}{c|}{\cellcolor[HTML]{F4B084}63.8} & \cellcolor[HTML]{F4B084}55.8 & \cellcolor[HTML]{F4B084}93.4 & \cellcolor[HTML]{F4B084}29.2 & \multicolumn{1}{c|}{\cellcolor[HTML]{F4B084}36.2} & \cellcolor[HTML]{FCE4D6}16.8 & \multicolumn{1}{c|}{\cellcolor[HTML]{F4B084}28.0} & \cellcolor[HTML]{F4B084}9.8 & \cellcolor[HTML]{F4B084}18.0 \\ 
\textit{SAM}\&\textit{JR}         & \cellcolor[HTML]{F4B084}11.4 & \multicolumn{1}{c|}{\cellcolor[HTML]{FCE4D6}16.2} & \cellcolor[HTML]{FCE4D6}45.8 & \multicolumn{1}{c|}{\cellcolor[HTML]{F8CBAD}61.2} & \cellcolor[HTML]{FCE4D6}48.0 & \cellcolor[HTML]{FCE4D6}79.0 & \cellcolor[HTML]{F8CBAD}25.4 & \multicolumn{1}{c|}{\cellcolor[HTML]{FCE4D6}33.2} & \cellcolor[HTML]{F4B084}18.8 & \multicolumn{1}{c|}{\cellcolor[HTML]{FCE4D6}21.8} & \cellcolor[HTML]{F8CBAD}8.2 & \cellcolor[HTML]{FCE4D6}11.8 \\
\bottomrule
\end{tabular}
}
\label{tab:API-results}
\vspace{-2mm}
\end{table}

Another concurrent work~\cite{mollenhoff2023sam} theoretically establishes \SAM ~as an optimal relaxation of Bayes object.
% \ie, ${L}_{\text {Bayes}}(\theta ; \sigma, \lambda)=\mathbb{E}_{\boldsymbol{\delta}^{\prime} \sim \mathcal{N}\left(0, \sigma^2 \mathbf{I}\right)}\left[\ell\left(\theta+\boldsymbol{\delta}^{\prime}\right)\right]+\frac{\lambda}{2}\|\theta\|^2$, where the expected negative-loss is replaced by the optimal convex lower bound.
Consequently, this suggests that the transferability benefit of  \SAM ~may owe to the fact it optimizes a single model that represents an ensemble (expectation) of  models, \ie, attacking a \SAM ~solution is  somewhat equivalent to attacking an ensemble of solutions. 
We  conjecture that the strong input gradient alignment towards every  training solution may reflect this  ``expectation'' effect of \SAM ~(Bayes).
Notably, our interpretation here is merely intuitive and speculative, however, \SAM ~still remains poorly understood to date.
We believe that uncovering the exact reason is far beyond the scope of our study, while our experimental observations provide  valuable insights and may contribute to  a better understanding  of \SAM ~for future study. 

% In this paper, we also find \SAM ~generally yields strong input gradient alignment towards other training solutions. 
% To further support this, we conduct experiments to examine whether general Bayes model exhibits this property by using the Monte Carlo sampling to approximate the Bayes object. 
% Specifically, we sample Gaussian noises as weight perturbations, obtaining multiple neighbors $\{\theta'=\theta+\boldsymbol{\delta}^{\prime}|  \boldsymbol{\delta}^{\prime} \sim \mathcal{N}\left(0, \sigma^2 \mathbf{I}\right) \}$ in each training step. 
% Then the averaged gradient of these multiple neighbors is used to update $\theta$ during training. We measure the gradient similarity towards xxx. 
% The results in Tab. \cref{s} indicate these solutions indeed improve input gradient alignment.

% Jointly combing these findings with our empirical observations on \SAM, we attribute it to the fact that \SAM ~optimizes a single model that, in a way, can be seen as optimizing an ensemble (expectation) of various models.

\section{Related Work}

In this work, we aim to provide a comprehensive overview of adversarial transferability, \ie, the transferabi-lity of adversarial examples, through the lens of surrogate model training. 
Here we connect our works to research that studies transferability beyond the adversarial perspective and approaches that improve adversarial transferability beyond surrogate training.

% From the start point of adversarial training, we investigate how model smoothness and gradient similarity are affected under various training approaches.

\subsection{Beyond  Adversarial Transferability}
In recent research, adversarial transferability, the property that depicts the machine learning similarities from the adversarial perspective, has been connected to another form of transferability, namely, \textit{domain transferability} (or knowledge transferability).
Domain transferability describes the ability of a pre-trained model to transfer knowledge from the source domain to the target domain, allowing it to perform well on various downstream tasks.

Typically, Liang \etal ~\cite{liang2021uncovering} have shed light on the bidirectional relationship between these two forms of transferability.
They presented theoretical and empirical results showing that \textit{\textbf{domain transferability implies adversarial transferability, and vice versa}}. This finding suggests that approaches aimed at enhancing domain transferability during model training may also have a positive impact on adversarial transferability.

Within the line of research \cite{yosinski2014transferable,wang2019characterizing,zhang2021quantifying,wang2022generalizing} investigating factors influencing domain transferability, several works~\cite{yi2021improved,salman2020adversarially,utrera2021adversariallytrained} have focused on the impact of adversarial training.
These works consistently showed that adversarially trained models transfer better than non-robust models on tasks in different target domains, \ie, \textit{\textbf{adversarial training improves domain transferability},} thus asserting a connection between adversarial robustness and domain transferability.

However, recently, Xu \etal ~\cite{xuICML2022Adversarially2022} 
challenged this connection and contended that robustness alone is neither necessary nor sufficient for domain transferability.
Through comprehensive experiments, they demonstrated that models with better robustness do not necessarily guarantee enhanced domain transferability. 
Instead, the regularization effect of adversarial training offers a fundamental explanation for the relationship between adversarial training and domain transferability, namely, \textit{\textbf{adversarial training improves domain transferability through its regularization effect}}.

Taken together, the conclusions of these researches suggest that adversarial training is expected to improve both two kinds of transferability with its regularization effect.
However, in this work,  though we find the regularization effect induced by adversarial training does play a crucial role in improving model smoothness, leading to enhanced adversarial transferability, the situation is much more complex: adversarially trained models with small budgets yield better adversarial transferability, large budgets perform worse~\cite{springer2021little,springer2021uncovering}, and yet, these same robust models does uniformly transfer  better in downstream tasks~\cite{salman2020adversarially}.

Our work dissects this nuanced relationship and establishes a connection between the drawbacks of adversarial training in adversarial transferability and the data distribution shift.
From the  distribution perspective, the key distinction between these two kinds of transferability is that domain transferability does not necessarily require the model to match the source domain perfectly since the target domain is independent of it, yet successful untargeted (targeted) adversarial attacks owe to pulling (pushing) the generated AEs out of (into) the original (target) sample distribution~\cite{zhu2022toward}, thus demanding the models fitting the data domain well.
% And yet, this drawback does not appear in domain transferability. 
% Further, our extensive experiments show that, various regularization approaches, without the presence of data distribution shift, uniformly improve the adversarial transferability.
% This also applies to the empirical observations regarding domain transferability in ~\cite{xuICML2022Adversarially2022} as well.

\subsection{Beyond Surrogate Training} 
Intriguingly, besides the perspective of surrogate training, we notice that some concurrent works adopt similar design philosophies as ours from other perspectives to improve adversarial transferability. 
Wu \etal ~\cite{wu2023gnp} and Ge \etal ~\cite{ge2023boosting} both suggest a regularization term that encourages small loss gradient norms in the input space during AE generation, while the latter also performs gradient averaging on multiple randomly sampled neighbors. 
On the other hand, Chen \etal ~\cite{chen2023rethinking} study the model ensemble perspective and propose a common weakness attack, which adopts \SAM ~and a cosine similarity encourager to optimize  AEs, aiming to converge to points close to the flat local optimum of each model.
These works also demonstrate the effectiveness of optimizing smoothness and similarity for improving adversarial transferability.

Other approaches that study from the surrogate perspective transform the model at the test time (\ie, after training, when performing the attack).
Ghost network~\cite{li2020learning} attacks a set of ghost networks generated by densely applying dropout at intermediate features.  
SGM~\cite{Wu2020Skip} incorporates more gradients from the skip connections of ResNet to generate AEs, while LinBP~\cite{NEURIPS2020_00e26af6} and BPA~\cite{wang2023rethinking} both restore the truncated gradient caused by non-linear layers. 

Our work complements them all since we view adversarial transferability as an inherent property of the surrogate model itself and propose methods to construct surrogate models specialized for transfer attacks at the training time. 
We leave the exploration and discussion regarding a unified framework~\cite{wang2021a} that can reason adversarial transferability from all perspectives for future work.

\section{Conclusion}

In this paper, we have conducted a comprehensive analysis of the transferability of adversarial examples from the surrogate perspective.
To the best of our knowledge, this is the first in-depth study on obtaining better surrogates for transfer attacks. 
Specifically, we investigate the impact of different training mechanisms on two key factors: model smoothness and gradient similarity. 
By exploring these factors, we seek to gain insights into the dynamics of  transferability and its relationship with various training approaches.

Through the dissection of adversarial training, we propose and then verify two working hypotheses. 
The first one is that the trade-off between model smoothness and gradient similarity largely dictates adversarial transferability from the perspective of surrogate models.
The other one is that the data distribution shift will impair gradient similarity, thereby producing worse surrogates on average.
As a result, the practical guide for more effective transfer attacks is to handle these model smoothness and similarity well simultaneously, validated by the superiority of our proposed methods, \ie, the combination of input gradient regularization (\ER ~and \JR) and sharpness-aware minimization (\SAM).

On the other front,  our study also exemplifies the conflicting aspects of current research and further provides plausible explanations for some intriguing puzzles while revealing other unsolved issues in the study of  transferability.

\section*{Acknowledgements}
\vspace{-2mm}
We sincerely thank the anonymous reviewers and the shepherd for their valuable feedback. 
Special thanks to Yuhang Zhou for pointing out an error in the original proof of \cref{theorem1}, which we've fixed in this version.
Shengshan's work is supported in part by the National Natural Science Foundation of China (Grant No.U20A20177) and Hubei Province Key R\&D Technology Special Innovation Project under Grant No.2021BAA032. 
Minghui's work is supported in part by the National Natural Science Foundation of China (Grant No. 62202186).
Shengshan Hu is the corresponding author.

%%%%%%%%% REFERENCES

{
\small
\bibliographystyle{plain}
\bibliography{egbib}
}

\appendix \label{Appendix}

\subsection{Experimental Settings}
\vspace{-2mm}
\cref{tab:training-settings} reports the detailed training settings. 
Code and models are available at \url{https://github.com/CGCL-codes/TransferAttackSurrogates}.

\begin{table}[hb!]
\centering
\setlength{\tabcolsep}{0.05pt}
\caption{Hyperparameters for training on CIFAR-10 and ImageNette. \MU, \CM, \CO\; and \LS\;refer to the X-axis parameters of \cref{fig:GS-DA-cifar10,fig:MS-DA-cifar10,fig:MS-DA-imagenette,fig:GS-DA-imagenette}.  \AT\; refer to \;\cref{fig:adv-transfer,fig:adv-ms,fig:adv-gs}. \IR, \JR, \ER \;and \SAM\;refer to \cref{fig:MS-GR-cifar10,fig:MS-GR-imagenette}, respectively.
The bold items in parameter lists correspond to \cref{tab:cifar10-ASR,tab:imagenette-ASR}.}
\resizebox{0.48\textwidth}{!}
{
\begin{tabular}{l|cc}
\toprule
                    & \textbf{CIFAR-10}                        & \textbf{ImageNette}                    \\ \hline
Surrogate Architecture    & ResNet18                        & ResNet50                    \\ 
Input size          & $32\times32$                          & $224\times224$                     \\
Batch size           & 128                            & 128                         \\
Epoch               & 200                            & 50                          \\
Warmup epoch        & 10                             & 5                           \\
Finetune            & False                          & True                        \\
Peak learning rate             & 0.1                            & 1                           \\
Learning rate decay & \multicolumn{2}{c}{cosine}                                  \\
Optimizer           & \multicolumn{2}{c}{SGD}                                      \\ \hline
(\MU)\;$\mathbf{p}$     & \multicolumn{2}{c}{\{\textbf{0.1}, 0.3, 0.5, 0.7, \textbf{0.9}\}}              \\
(\CM)\;$\mathbf{p}$    & \multicolumn{2}{c}{\{\textbf{0.1}, 0.3, 0.5, 0.7, \textbf{0.9}\}}              \\
(\CO)\;$M$   & \{\textbf{8}, 12, 16, 20, \textbf{24}\}          & \{\textbf{80}, 100, 120, 140, \textbf{160}\}   \\
(\LS)\;p       & \multicolumn{2}{c}{\{\textbf{0.1}, 0.2, 0.3, 0.4, \textbf{0.5}\}}  \\ \hline
(\AT)\;Step            & 10                             & 5                           \\
(\AT)\;Step\_size            &  $0.25\times \epsilon$                               & $0.4\times\epsilon$                           \\
(\AT)\;$\epsilon$         & \{0.01, \textbf{0.03}, 0.05, 0.1, 0.2, 0.5, 1\}                           & \{0.05, 0.1, 0.2, 0.5, \textbf{1}, 3, 5\}                           \\
\hline
(\IR)\;$\lambda_{ir}$        & \{0.1, 0.3, \textbf{0.5}, 0.7, 1\}                            & \{0.1, 0.5, \textbf{1}, 1.5, 2\}                           \\
(\JR)\;$\lambda_{jr}$        & \{1e-5, \textbf{1e-4}, 0.01, 0.05, 0.1\}                         & \{0.001, 0.01, \textbf{0.1}, 0.5, 1\}                          \\
(\ER)\;$\lambda_{er}$        & \{1e-3, 5e-3, 0.01, \textbf{0.05}, 0.1\}                           & \{\textbf{0.05}, 0.3, 0.5, 0.7, 1\}                        \\
(\SAM)\;$\rho$            & \{0.01, 0.05, \textbf{0.1}, 0.15, 0.2\}                            & \{\textbf{0.2}, 0.5, 1, 2, 3\}                         \\ \bottomrule
\end{tabular}
}
\label{tab:training-settings}
\vspace{-2mm}
\end{table}

\begin{table}[ht!]
\centering
\setlength{\tabcolsep}{0.05pt}
\caption{Clean accuracy of models in \cref{tab:cifar10-ASR,tab:imagenette-ASR,tab:API-results}. We report the results of a fixed random seed here.}
\setlength{\tabcolsep}{0.8pt}
\resizebox{0.48\textwidth}{!}
{
\begin{tabular}{c|cc|cc}
\toprule
          & \multicolumn{2}{c|}{CIFAR10}              & \multicolumn{2}{c}{ImageNette}            \\ \hline
          & \begin{tabular}[c]{@{}c@{}}Clean Acc\\ Train/Test\end{tabular} & \begin{tabular}[c]{@{}c@{}}Generalization\\ Gap ($\downarrow$)\end{tabular} & \begin{tabular}[c]{@{}c@{}}Clean Acc\\ Train/Test\end{tabular} & \begin{tabular}[c]{@{}c@{}}Generalization\\ Gap ($\downarrow$)\end{tabular} \\ \hline
ST        & 100.00/94.40         & 5.60                & 97.75/96.74          & 1.01               \\ \hline
\MU, $\tau=1$\; & 100.00/95.04         & 4.96               & 95.53/97.10          & -1.57              \\
\MU, $\tau=5$\; & \textbf{100.00/96.13}         & \textbf{3.87}               & \textbf{84.23/97.58}          & \textbf{-13.35}             \\
\CM, $\tau=1$\; & 100.00/95.45         & 4.55               & 94.84/97.10          & -2.26              \\
\CM, $\tau=5$\; & \textbf{99.99/95.93}          & \textbf{4.06}               & \textbf{82.46/97.73}          & \textbf{-15.27}             \\
\CO, $\tau=1$\; & 100.00/95.09         & 4.91               & 97.40/96.94          & 0.46               \\
\CO, $\tau=5$\; & 100.00/95.79         & 4.21               & 95.41/96.97          & -1.56              \\
\LS, $\tau=1$\; & 100.00/94.84         & 5.16               & 98.09/97.48          & 0.61               \\
\LS, $\tau=5$\; &100.00/94.35         & 5.65               & 98.38/98.04          & 0.34               \\ \hline
\AT        & 100.00/94.30         & 5.70                & 96.15/95.99          & 0.16               \\ \hline
\IR        & 100.00/92.76         & 7.24               & 97.24/96.13          & 1.11               \\
\JR        & 100.00/94.46         & 5.54               & 97.51/96.28          & 1.23               \\
\ER        & 100.00/92.83         & 7.17               & 97.86/97.12          & 0.74               \\
\SAM       & 100.00/95.20         & 4.80                & 96.54/97.53          & -0.99              \\ \hline
\SAM\&\IR    & 99.87/92.72          & 7.15               & 95.49/96.51          & -1.02              \\
\SAM\&\JR    & 100.00/95.16         & 4.84               & 95.62/96.79          & -1.17              \\ \bottomrule
\end{tabular}
}
\label{clean-acc}
% \vspace{-2mm}
\end{table}

\begin{table*}[tbp!]
\renewcommand\arraystretch{0.80}
\setlength{\abovecaptionskip}{0cm}
\setlength{\belowcaptionskip}{-0cm}
\centering
\caption{Transfer ASRs of AEs crafted against different surrogates on \textbf{CIFAR-10} and \textbf{ImageNette} using \textbf{AutoAttack}. We plot this table in the same way as \cref{tab:cifar10-ASR}. In \%.}
\tabcolsep=0.05cm
  \resizebox{\textwidth}{!}
  {
\begin{tabular}{c|cccc|cccc|cccc}
\toprule    \multicolumn{13}{c}{\textbf{\scriptsize CIFAR-10}} \\
\hline
                   & \multicolumn{4}{c|}{\scriptsize 4/255}                                                                                                                 & \multicolumn{4}{c|}{\scriptsize 8/255}                                                                                                                    & \multicolumn{4}{c}{\scriptsize 16/255}                                                                                                                  \\
\multirow{-2}{*}{} & \scriptsize ResNet50                         & \scriptsize VGG16                            & \scriptsize InceptionV3                      & \scriptsize DenseNet121                       & \scriptsize ResNet50                          & \scriptsize VGG16                             & \scriptsize InceptionV3                       & \scriptsize DenseNet121                       & \scriptsize ResNet50                          & \scriptsize VGG16                             & \scriptsize InceptionV3                      & \scriptsize DenseNet121                      \\ \hline
\scriptsize \ST                 & \scriptsize 41.2$_{\pm 5.6}$                        & \scriptsize 30.1$_{\pm3.5}$                         & \scriptsize 41.7$_{\pm5.3}$                         & \scriptsize 58.6$_{\pm7.1}$                          & \scriptsize 62.9$_{\pm7.9}$                          & \scriptsize 54.5$_{\pm6.5}$                          & \scriptsize 64.8$_{\pm6.0}$                          & \scriptsize 80.5$_{\pm6.8}$                          & \scriptsize 83.2$_{\pm5.2}$                          & \scriptsize 81.0$_{\pm5.5}$                          & \scriptsize 84.2$_{\pm3.8}$                         & \scriptsize 91.0$_{\pm3.3}$                         \\ \hline
\scriptsize \MU, $\tau=1$         & \scriptsize \cellcolor[HTML]{F2F2F2}27.9$_{\pm5.1}$& \scriptsize \cellcolor[HTML]{F2F2F2}20.9$_{\pm2.5}$& \scriptsize \cellcolor[HTML]{F2F2F2}29.0$_{\pm4.8}$& \scriptsize \cellcolor[HTML]{F2F2F2}40.2$_{\pm7.0}$ & \scriptsize \cellcolor[HTML]{F2F2F2}48.5$_{\pm6.9}$ & \scriptsize \cellcolor[HTML]{F2F2F2}40.6$_{\pm3.7}$ & \scriptsize \cellcolor[HTML]{F2F2F2}51.4$_{\pm5.9}$ & \scriptsize \cellcolor[HTML]{F2F2F2}65.3$_{\pm7.5}$ & \scriptsize \cellcolor[HTML]{F2F2F2}73.1$_{\pm6.4}$ & \scriptsize \cellcolor[HTML]{F2F2F2}71.3$_{\pm4.5}$ & \scriptsize \cellcolor[HTML]{F2F2F2}76.5$_{\pm4.7}$& \scriptsize \cellcolor[HTML]{F2F2F2}85.2$_{\pm5.1}$ \\
\scriptsize \MU,  $\tau=3$        & \scriptsize \cellcolor[HTML]{F2F2F2}20.4$_{\pm0.6}$& \scriptsize \cellcolor[HTML]{F2F2F2}16.4$_{\pm0.5}$& \scriptsize \cellcolor[HTML]{F2F2F2}23.3$_{\pm1.2}$& \scriptsize \cellcolor[HTML]{F2F2F2}27.8$_{\pm1.6}$ & \scriptsize \cellcolor[HTML]{F2F2F2}37.7$_{\pm0.9}$ & \scriptsize \cellcolor[HTML]{F2F2F2}31.2$_{\pm1.1}$ & \scriptsize \cellcolor[HTML]{F2F2F2}42.0$_{\pm1.4}$ & \scriptsize \cellcolor[HTML]{F2F2F2}51.1$_{\pm1.3}$ & \scriptsize \cellcolor[HTML]{F2F2F2}68.0$_{\pm1.9}$ & \scriptsize \cellcolor[HTML]{F2F2F2}64.7$_{\pm1.7}$ & \scriptsize \cellcolor[HTML]{F2F2F2}72.9$_{\pm1.2}$& \scriptsize \cellcolor[HTML]{F2F2F2}80.4$_{\pm1.2}$ \\
\scriptsize \MU,  $\tau=5$         & \scriptsize \cellcolor[HTML]{F2F2F2}18.8$_{\pm0.3}$& \scriptsize \cellcolor[HTML]{F2F2F2}15.7$_{\pm0.3}$& \scriptsize \cellcolor[HTML]{F2F2F2}20.8$_{\pm1.0}$& \scriptsize \cellcolor[HTML]{F2F2F2}24.5$_{\pm0.8}$ & \scriptsize \cellcolor[HTML]{F2F2F2}33.8$_{\pm1.7}$ & \scriptsize \cellcolor[HTML]{F2F2F2}28.6$_{\pm0.9}$ & \scriptsize \cellcolor[HTML]{F2F2F2}37.5$_{\pm1.0}$ & \scriptsize \cellcolor[HTML]{F2F2F2}44.9$_{\pm1.6}$ & \scriptsize \cellcolor[HTML]{F2F2F2}64.1$_{\pm1.5}$ & \scriptsize \cellcolor[HTML]{F2F2F2}60.5$_{\pm1.9}$ & \scriptsize \cellcolor[HTML]{F2F2F2}68.3$_{\pm0.4}$& \scriptsize \cellcolor[HTML]{F2F2F2}76.4$_{\pm1.5}$ \\
\scriptsize \CM, $\tau=1$         & \scriptsize \cellcolor[HTML]{F2F2F2}22.4$_{\pm0.9}$& \scriptsize \cellcolor[HTML]{F2F2F2}16.7$_{\pm1.3}$& \scriptsize \cellcolor[HTML]{F2F2F2}21.8$_{\pm0.6}$& \scriptsize \cellcolor[HTML]{F2F2F2}30.3$_{\pm1.3}$ & \scriptsize \cellcolor[HTML]{F2F2F2}39.6$_{\pm1.7}$ & \scriptsize \cellcolor[HTML]{F2F2F2}32.42$_{\pm.5}$ & \scriptsize \cellcolor[HTML]{F2F2F2}39.0$_{\pm1.0}$ & \scriptsize \cellcolor[HTML]{F2F2F2}53.4$_{\pm2.5}$ & \scriptsize \cellcolor[HTML]{F2F2F2}64.5$_{\pm2.0}$ & \scriptsize \cellcolor[HTML]{F2F2F2}62.4$_{\pm3.2}$ & \scriptsize \cellcolor[HTML]{F2F2F2}64.9$_{\pm1.7}$& \scriptsize \cellcolor[HTML]{F2F2F2}77.8$_{\pm3.2}$ \\
\scriptsize \CM, $\tau=3$         & \scriptsize \cellcolor[HTML]{F2F2F2}14.3$_{\pm1.2}$& \scriptsize \cellcolor[HTML]{F2F2F2}11.5$_{\pm0.4}$& \scriptsize \cellcolor[HTML]{F2F2F2}14.2$_{\pm1.1}$& \scriptsize \cellcolor[HTML]{F2F2F2}17.2$_{\pm2.0}$ & \scriptsize \cellcolor[HTML]{F2F2F2}25.8$_{\pm3.0}$ & \scriptsize \cellcolor[HTML]{F2F2F2}20.2$_{\pm1.8}$ & \scriptsize \cellcolor[HTML]{F2F2F2}25.1$_{\pm2.6}$ & \scriptsize \cellcolor[HTML]{F2F2F2}32.1$_{\pm4.4}$ & \scriptsize \cellcolor[HTML]{F2F2F2}48.9$_{\pm4.1}$ & \scriptsize \cellcolor[HTML]{F2F2F2}47.1$_{\pm2.7}$ & \scriptsize \cellcolor[HTML]{F2F2F2}49.8$_{\pm3.0}$& \scriptsize \cellcolor[HTML]{F2F2F2}60.6$_{\pm4.6}$ \\
\scriptsize \CM, $\tau=5$         & \scriptsize \cellcolor[HTML]{F2F2F2}12.3$_{\pm0.5}$& \scriptsize \cellcolor[HTML]{F2F2F2}10.4$_{\pm0.2}$& \scriptsize \cellcolor[HTML]{F2F2F2}12.0$_{\pm1.1}$& \scriptsize \cellcolor[HTML]{F2F2F2}13.7$_{\pm1.2}$ & \scriptsize \cellcolor[HTML]{F2F2F2}21.9$_{\pm1.3}$& \scriptsize \cellcolor[HTML]{F2F2F2}17.5$_{\pm0.6}$ & \scriptsize \cellcolor[HTML]{F2F2F2}21.3$_{\pm0.9}$ & \scriptsize \cellcolor[HTML]{F2F2F2}26.5$_{\pm1.8}$ & \scriptsize \cellcolor[HTML]{F2F2F2}44.4$_{\pm1.6}$ & \scriptsize \cellcolor[HTML]{F2F2F2}41.5$_{\pm1.6}$& \scriptsize \cellcolor[HTML]{F2F2F2}44.3$_{\pm1.9}$& \scriptsize \cellcolor[HTML]{F2F2F2}53.4$_{\pm1.8}$ \\
\scriptsize \CO, $\tau=1$         & \scriptsize \cellcolor[HTML]{F2F2F2}40.7$_{\pm7.0}$& \scriptsize \cellcolor[HTML]{F2F2F2}31.1$_{\pm7.0}$& \scriptsize \cellcolor[HTML]{F2F2F2}39.8$_{\pm5.9}$& \scriptsize \cellcolor[HTML]{F2F2F2}55.9$_{\pm8.1}$ & \scriptsize \cellcolor[HTML]{F2F2F2}62.4$_{\pm6.9}$ & \scriptsize \cellcolor[HTML]{F2F2F2}55.3$_{\pm8.1}$ & \scriptsize \cellcolor[HTML]{F2F2F2}63.1$_{\pm6.2}$ & \scriptsize \cellcolor[HTML]{F2F2F2}78.2$_{\pm5.9}$ & \scriptsize \cellcolor[HTML]{F2F2F2}82.5$_{\pm5.6}$ & \scriptsize \cellcolor[HTML]{F2F2F2}79.9$_{\pm6.3}$ & \scriptsize \cellcolor[HTML]{F2F2F2}82.8$_{\pm5.3}$& \scriptsize \cellcolor[HTML]{F2F2F2}89.9$_{\pm3.7}$ \\
\scriptsize \CO, $\tau=3$         & \scriptsize \cellcolor[HTML]{F2F2F2}34.6$_{\pm5.2}$& \scriptsize \cellcolor[HTML]{F2F2F2}24.9$_{\pm2.8}$& \scriptsize \cellcolor[HTML]{F2F2F2}33.7$_{\pm5.1}$& \scriptsize \cellcolor[HTML]{F2F2F2}47.1$_{\pm8.0}$ & \scriptsize \cellcolor[HTML]{F2F2F2}55.5$_{\pm5.8}$ & \scriptsize \cellcolor[HTML]{F2F2F2}46.0$_{\pm4.7}$ & \scriptsize \cellcolor[HTML]{F2F2F2}55.8$_{\pm6.7}$ & \scriptsize \cellcolor[HTML]{F2F2F2}70.7$_{\pm8.0}$ & \scriptsize \cellcolor[HTML]{F2F2F2}79.6$_{\pm5.4}$ & \scriptsize \cellcolor[HTML]{F2F2F2}74.4$_{\pm5.1}$ & \scriptsize \cellcolor[HTML]{F2F2F2}79.2$_{\pm5.6}$& \scriptsize \cellcolor[HTML]{F2F2F2}86.8$_{\pm5.0}$ \\
\scriptsize \CO, $\tau=5$         & \scriptsize \cellcolor[HTML]{F2F2F2}30.8$_{\pm2.5}$& \scriptsize \cellcolor[HTML]{F2F2F2}22.3$_{\pm2.0}$& \scriptsize \cellcolor[HTML]{F2F2F2}31.0$_{\pm3.3}$& \scriptsize \cellcolor[HTML]{F2F2F2}42.1$_{\pm6.7}$ & \scriptsize \cellcolor[HTML]{F2F2F2}49.5$_{\pm5.0}$ & \scriptsize \cellcolor[HTML]{F2F2F2}41.3$_{\pm4.8}$ & \scriptsize \cellcolor[HTML]{F2F2F2}51.3$_{\pm5.8}$ & \scriptsize \cellcolor[HTML]{F2F2F2}63.9$_{\pm7.8}$ & \scriptsize \cellcolor[HTML]{F2F2F2}73.2$_{\pm5.6}$ & \scriptsize \cellcolor[HTML]{F2F2F2}69.3$_{\pm6.3}$ & \scriptsize \cellcolor[HTML]{F2F2F2}74.5$_{\pm6.4}$& \scriptsize \cellcolor[HTML]{F2F2F2}81.7$_{\pm7.0}$ \\
\scriptsize \LS, $\tau=1$         & \scriptsize \cellcolor[HTML]{F2F2F2}35.5$_{\pm4.8}$& \scriptsize \cellcolor[HTML]{F2F2F2}28.2$_{\pm5.7}$& \scriptsize \cellcolor[HTML]{F2F2F2}35.5$_{\pm3.3}$& \scriptsize \cellcolor[HTML]{F2F2F2}48.4$_{\pm5.7}$ & \scriptsize \cellcolor[HTML]{F2F2F2}54.3$_{\pm9.4}$ & \scriptsize \cellcolor[HTML]{F2F2F2}49.7$_{\pm10.7}$& \scriptsize \cellcolor[HTML]{F2F2F2}56.0$_{\pm6.4}$ & \scriptsize \cellcolor[HTML]{F2F2F2}69.4$_{\pm8.4}$ & \scriptsize \cellcolor[HTML]{F2F2F2}76.1$_{\pm7.8}$ & \scriptsize \cellcolor[HTML]{F2F2F2}76.9$_{\pm8.3}$ & \scriptsize \cellcolor[HTML]{F2F2F2}78.3$_{\pm6.0}$& \scriptsize \cellcolor[HTML]{F2F2F2}85.9$_{\pm5.1}$ \\
\scriptsize \LS, $\tau=3$         & \scriptsize \cellcolor[HTML]{F2F2F2}34.2$_{\pm7.5}$& \scriptsize \cellcolor[HTML]{F2F2F2}27.3$_{\pm4.6}$& \scriptsize \cellcolor[HTML]{F2F2F2}33.9$_{\pm5.6}$& \scriptsize \cellcolor[HTML]{F2F2F2}44.1$_{\pm2.8}$ & \scriptsize \cellcolor[HTML]{F2F2F2}55.0$_{\pm13.6}$& \scriptsize \cellcolor[HTML]{F2F2F2}50.0$_{\pm10.0}$& \scriptsize \cellcolor[HTML]{F2F2F2}55.9$_{\pm11.0}$& \scriptsize \cellcolor[HTML]{F2F2F2}68.2$_{\pm5.9}$ & \scriptsize \cellcolor[HTML]{F2F2F2}76.8$_{\pm12.1}$& \scriptsize \cellcolor[HTML]{F2F2F2}76.6$_{\pm10.0}$& \scriptsize \cellcolor[HTML]{F2F2F2}78.3$_{\pm9.3}$& \scriptsize \cellcolor[HTML]{F2F2F2}85.5$_{\pm5.3}$ \\
\scriptsize \LS, $\tau=5$         & \scriptsize \cellcolor[HTML]{F2F2F2}31.3$_{\pm3.5}$& \scriptsize \cellcolor[HTML]{F2F2F2}26.6$_{\pm3.5}$& \scriptsize \cellcolor[HTML]{F2F2F2}30.2$_{\pm3.1}$& \scriptsize \cellcolor[HTML]{F2F2F2}41.2$_{\pm2.9}$ & \scriptsize \cellcolor[HTML]{F2F2F2}51.1$_{\pm8.2}$ & \scriptsize \cellcolor[HTML]{F2F2F2}48.6$_{\pm8.0}$ & \scriptsize \cellcolor[HTML]{F2F2F2}51.2$_{\pm6.9}$ & \scriptsize \cellcolor[HTML]{F2F2F2}65.6$_{\pm7.0}$ & \scriptsize \cellcolor[HTML]{F2F2F2}72.2$_{\pm7.7}$ & \scriptsize \cellcolor[HTML]{F2F2F2}74.5$_{\pm8.2}$ & \scriptsize \cellcolor[HTML]{F2F2F2}73.5$_{\pm6.2}$& \scriptsize \cellcolor[HTML]{F2F2F2}83.3$_{\pm5.4}$ \\ \hline
\scriptsize \AT                & \scriptsize \cellcolor[HTML]{FCE4D6}54.4$_{\pm2.3}$& \scriptsize \cellcolor[HTML]{FCE4D6}45.3$_{\pm1.5}$& \scriptsize \cellcolor[HTML]{FCE4D6}49.5$_{\pm3.6}$& \scriptsize \cellcolor[HTML]{FCE4D6}61.0$_{\pm3.5}$ & \scriptsize \cellcolor[HTML]{FCE4D6}80.2$_{\pm5.8}$ & \scriptsize \cellcolor[HTML]{FCE4D6}73.7$_{\pm5.9}$ & \scriptsize \cellcolor[HTML]{FCE4D6}77.5$_{\pm4.1}$ & \scriptsize \cellcolor[HTML]{FCE4D6}86.4$_{\pm4.2}$ & \scriptsize \cellcolor[HTML]{FCE4D6}90.8$_{\pm3.0}$ & \scriptsize \cellcolor[HTML]{FCE4D6}89.5$_{\pm3.2}$ & \scriptsize \cellcolor[HTML]{FCE4D6}89.8$_{\pm2.8}$& \scriptsize \cellcolor[HTML]{FCE4D6}92.9$_{\pm2.3}$ \\ \hline
\scriptsize \IR                & \scriptsize \cellcolor[HTML]{FCE4D6}51.9$_{\pm1.4}$& \scriptsize \cellcolor[HTML]{FCE4D6}45.3$_{\pm1.1}$& \scriptsize \cellcolor[HTML]{FCE4D6}48.0$_{\pm0.6}$& \scriptsize \cellcolor[HTML]{F2F2F2}54.4$_{\pm1.1}$ & \scriptsize \cellcolor[HTML]{FCE4D6}85.9$_{\pm3.4}$ & \scriptsize \cellcolor[HTML]{FCE4D6}82.4$_{\pm3.9}$ & \scriptsize \cellcolor[HTML]{FCE4D6}83.8$_{\pm3.3}$ & \scriptsize \cellcolor[HTML]{FCE4D6}87.4$_{\pm3.2}$ & \scriptsize \cellcolor[HTML]{FCE4D6}92.4$_{\pm2.1}$ & \scriptsize \cellcolor[HTML]{FCE4D6}92.2$_{\pm2.3}$ & \scriptsize \cellcolor[HTML]{FCE4D6}91.8$_{\pm2.1}$& \scriptsize \cellcolor[HTML]{FCE4D6}92.0$_{\pm2.0}$ \\
\scriptsize \JR                & \scriptsize \cellcolor[HTML]{FCE4D6}63.5$_{\pm8.7}$& \scriptsize \cellcolor[HTML]{FCE4D6}53.1$_{\pm7.1}$& \scriptsize \cellcolor[HTML]{FCE4D6}62.4$_{\pm5.7}$& \scriptsize \cellcolor[HTML]{FCE4D6}75.1$_{\pm4.5}$ & \scriptsize \cellcolor[HTML]{FCE4D6}78.9$_{\pm6.8}$ & \scriptsize \cellcolor[HTML]{FCE4D6}73.7$_{\pm6.1}$ & \scriptsize \cellcolor[HTML]{FCE4D6}79.5$_{\pm4.1}$ & \scriptsize \cellcolor[HTML]{FCE4D6}88.7$_{\pm2.6}$ & \scriptsize \cellcolor[HTML]{FCE4D6}87.6$_{\pm4.3}$& \scriptsize \cellcolor[HTML]{FCE4D6}87.2$_{\pm3.9}$& \scriptsize \cellcolor[HTML]{FCE4D6}88.1$_{\pm2.4}$& \scriptsize \cellcolor[HTML]{FCE4D6}92.1$_{\pm2.1}$ \\
\scriptsize \ER                & \scriptsize \cellcolor[HTML]{FCE4D6}55.0$_{\pm5.5}$& \scriptsize \cellcolor[HTML]{FCE4D6}42.9$_{\pm5.6}$& \scriptsize \cellcolor[HTML]{FCE4D6}46.2$_{\pm7.4}$& \scriptsize \cellcolor[HTML]{F2F2F2}55.2$_{\pm13.8}$& \scriptsize \cellcolor[HTML]{FCE4D6}82.3$_{\pm5.5}$ & \scriptsize \cellcolor[HTML]{FCE4D6}74.1$_{\pm7.9}$ & \scriptsize \cellcolor[HTML]{FCE4D6}76.1$_{\pm8.8}$ & \scriptsize \cellcolor[HTML]{FCE4D6}81.1$_{\pm12.2}$& \scriptsize \cellcolor[HTML]{FCE4D6}90.6$_{\pm4.0}$ & \scriptsize \cellcolor[HTML]{FCE4D6}89.3$_{\pm4.6}$ & \scriptsize \cellcolor[HTML]{FCE4D6}88.9$_{\pm4.0}$& \scriptsize \cellcolor[HTML]{FCE4D6}89.8$_{\pm4.4}$ \\
\scriptsize \SAM               & \scriptsize \cellcolor[HTML]{F8CBAD}66.1$_{\pm8.6}$& \scriptsize \cellcolor[HTML]{F8CBAD}53.4$_{\pm5.8}$& \scriptsize \cellcolor[HTML]{F8CBAD}66.0$_{\pm8.8}$& \scriptsize \cellcolor[HTML]{F8CBAD}81.1$_{\pm6.9}$ & \scriptsize \cellcolor[HTML]{F8CBAD}88.4$_{\pm4.7}$ & \scriptsize \cellcolor[HTML]{FCE4D6}83.4$_{\pm4.5}$& \scriptsize \cellcolor[HTML]{F8CBAD}88.7$_{\pm4.8}$ & \scriptsize \cellcolor[HTML]{F8CBAD}94.1$_{\pm2.4}$ & \scriptsize \cellcolor[HTML]{F8CBAD}94.3$_{\pm3.0}$ & \scriptsize \cellcolor[HTML]{F8CBAD}94.0$_{\pm3.0}$ & \scriptsize \cellcolor[HTML]{F8CBAD}94.2$_{\pm2.9}$& \scriptsize \cellcolor[HTML]{F8CBAD}94.8$_{\pm2.4}$ \\ \hline
\scriptsize \SAM\&\IR           & \scriptsize \cellcolor[HTML]{FCE4D6}52.9$_{\pm0.6}$& \scriptsize \cellcolor[HTML]{FCE4D6}47.0$_{\pm1.0}$& \scriptsize \cellcolor[HTML]{FCE4D6}48.2$_{\pm1.2}$& \scriptsize \cellcolor[HTML]{F2F2F2}54.1$_{\pm1.7}$ & \scriptsize \cellcolor[HTML]{FCE4D6}88.3$_{\pm2.3}$ & \scriptsize \cellcolor[HTML]{F8CBAD}85.5$_{\pm2.7}$ & \scriptsize \cellcolor[HTML]{FCE4D6}85.7$_{\pm2.4}$ & \scriptsize \cellcolor[HTML]{FCE4D6}88.7$_{\pm2.0}$ & \scriptsize \cellcolor[HTML]{FCE4D6}92.8$_{\pm2.0}$ & \scriptsize \cellcolor[HTML]{FCE4D6}92.9$_{\pm2.0}$ & \scriptsize \cellcolor[HTML]{FCE4D6}92.2$_{\pm2.0}$& \scriptsize \cellcolor[HTML]{FCE4D6}92.4$_{\pm1.9}$ \\
\scriptsize \SAM\&\JR           & \scriptsize \cellcolor[HTML]{F4B084}69.8$_{\pm3.0}$& \scriptsize \cellcolor[HTML]{F4B084}57.2$_{\pm2.9}$& \scriptsize \cellcolor[HTML]{F4B084}69.2$_{\pm2.5}$& \scriptsize \cellcolor[HTML]{F4B084}82.7$_{\pm2.1}$ & \scriptsize \cellcolor[HTML]{F4B084}90.4$_{\pm3.9}$ & \scriptsize \cellcolor[HTML]{F4B084}86.2$_{\pm4.7}$ & \scriptsize \cellcolor[HTML]{F4B084}90.3$_{\pm3.5}$ & \scriptsize \cellcolor[HTML]{F4B084}94.6$_{\pm1.8}$ & \scriptsize \cellcolor[HTML]{F4B084}94.8$_{\pm1.9}$ & \scriptsize \cellcolor[HTML]{F4B084}94.6$_{\pm2.1}$ & \scriptsize \cellcolor[HTML]{F4B084}94.6$_{\pm1.7}$& \scriptsize \cellcolor[HTML]{F4B084}94.9$_{\pm1.6}$ \\  
% \bottomrule
% \end{tabular}
% }
% \label{tab:autoattck-cifar10}
% \end{table*}

% \begin{table*}[]
% \centering
% \caption{Transfer ASRs of AEs crafted against baseline (\ST), data augmentations, adversarial training, and gradient regularizations surrogates under the different $L_\infty$ budgets (4/255, 8/255, 16/255) on \textbf{ImageNette} using \textbf{AutoAttack}. Results in gray are (more than 0.2\%) below the baseline, results in white are close to the baseline ($\pm0.2\%$), and results in orange are (more than 0.2\%) above the baseline, the bigger the better. In \%.}
% \tabcolsep=0.10cm
%   \resizebox{\textwidth}{!}
%   {
% \begin{tabular}{c|cccc|cccc|cccc}
% \toprule
\toprule    \multicolumn{13}{c}{\textbf{\scriptsize ImageNette}} \\
\hline
                   & \multicolumn{4}{c|}{\scriptsize 4/255}                                                                                                                & \multicolumn{4}{c|}{\scriptsize 8/255}                                                                                                                   & \multicolumn{4}{c}{\scriptsize 16/255}                                                                                                                   \\
\multirow{-2}{*}{} & \scriptsize VGG16                            & \scriptsize DenseNet121                      & \scriptsize MobileNetV2                      & \scriptsize Xception                         & \scriptsize VGG16                             & \scriptsize DenseNet121                      & \scriptsize MobileNetV2                       & \scriptsize Xception                          & \scriptsize VGG16                             & \scriptsize DenseNet121                      & \scriptsize MobileNetV2                       & \scriptsize Xception                          \\ \hline
\scriptsize \ST                & \scriptsize 10.1$_{\pm0.7}$                        & \scriptsize 16.0$_{\pm1.0}$                        & \scriptsize 7.1$_{\pm0.5}$                         & \scriptsize 6.7$_{\pm0.1}$                         & \scriptsize 27.4$_{\pm3.0}$                         & \scriptsize 41.0$_{\pm2.8}$                        & \scriptsize 17.7$_{\pm0.7}$                         & \scriptsize 16.9$_{\pm1.1}$                         & \scriptsize 61.5$_{\pm6.5}$                         & \scriptsize 83.3$_{\pm4.9}$                        & \scriptsize 51.5$_{\pm2.7}$                         & \scriptsize 44.8$_{\pm4.6}$                          \\ \hline
\scriptsize \MU, $\tau=1$         & \scriptsize \cellcolor[HTML]{F2F2F2}7.9$_{\pm1.5}$ & \scriptsize \cellcolor[HTML]{F2F2F2}11.1$_{\pm1.1}$& \scriptsize \cellcolor[HTML]{F2F2F2}5.9$_{\pm0.5}$ & \scriptsize \cellcolor[HTML]{F2F2F2}6.1$_{\pm0.5}$ & \scriptsize \cellcolor[HTML]{F2F2F2}20.3$_{\pm1.5}$ & \scriptsize \cellcolor[HTML]{F2F2F2}23.6$_{\pm2.2}$& \scriptsize \cellcolor[HTML]{F2F2F2}13.5$_{\pm1.3}$ & \scriptsize \cellcolor[HTML]{F2F2F2}11.1$_{\pm0.3}$ & \scriptsize \cellcolor[HTML]{F2F2F2}43.8$_{\pm0.8}$ & \scriptsize \cellcolor[HTML]{F2F2F2}60.5$_{\pm4.5}$& \scriptsize \cellcolor[HTML]{F2F2F2}39.1$_{\pm2.3}$ & \scriptsize \cellcolor[HTML]{F2F2F2}29.5$_{\pm1.9}$  \\
\scriptsize \MU, $\tau=3$         & \scriptsize \cellcolor[HTML]{F2F2F2}6.8$_{\pm0.4}$ & \scriptsize \cellcolor[HTML]{F2F2F2}7.8$_{\pm0.6}$ & \scriptsize \cellcolor[HTML]{F2F2F2}5.1$_{\pm0.5}$ & \scriptsize \cellcolor[HTML]{F2F2F2}5.5$_{\pm0.5}$ & \scriptsize \cellcolor[HTML]{F2F2F2}13.3$_{\pm0.9}$ & \scriptsize \cellcolor[HTML]{F2F2F2}14.3$_{\pm0.9}$& \scriptsize \cellcolor[HTML]{F2F2F2}8.7$_{\pm0.9}$ & \scriptsize \cellcolor[HTML]{F2F2F2}9.0$_{\pm0.8}$  & \scriptsize \cellcolor[HTML]{F2F2F2}30.8$_{\pm2.8}$ & \scriptsize \cellcolor[HTML]{F2F2F2}35.6$_{\pm1.6}$& \scriptsize \cellcolor[HTML]{F2F2F2}26.4$_{\pm2.2}$ & \scriptsize \cellcolor[HTML]{F2F2F2}19.3$_{\pm1.7}$  \\
\scriptsize \MU, $\tau=5$         & \scriptsize \cellcolor[HTML]{F2F2F2}5.9$_{\pm0.5}$ & \scriptsize \cellcolor[HTML]{F2F2F2}7.0$_{\pm0.2}$ & \scriptsize \cellcolor[HTML]{F2F2F2}4.2$_{\pm0.4}$ & \scriptsize \cellcolor[HTML]{F2F2F2}5.3$_{\pm0.3}$ & \scriptsize \cellcolor[HTML]{F2F2F2}10.4$_{\pm1.0}$ & \scriptsize \cellcolor[HTML]{F2F2F2}11.6$_{\pm0.6}$& \scriptsize \cellcolor[HTML]{F2F2F2}8.2$_{\pm0.6}$  & \scriptsize \cellcolor[HTML]{F2F2F2}7.5$_{\pm0.5}$  & \scriptsize \cellcolor[HTML]{F2F2F2}24.9$_{\pm0.9}$ & \scriptsize \cellcolor[HTML]{F2F2F2}27.3$_{\pm1.1}$& \scriptsize \cellcolor[HTML]{F2F2F2}20.7$_{\pm1.3}$ & \scriptsize \cellcolor[HTML]{F2F2F2}17.2$_{\pm0.8}$  \\
\scriptsize \CM, $\tau=1$         & \scriptsize \cellcolor[HTML]{F2F2F2}7.3$_{\pm0.7}$ & \scriptsize \cellcolor[HTML]{F2F2F2}9.1$_{\pm1.1}$ & \scriptsize \cellcolor[HTML]{F2F2F2}5.3$_{\pm0.3}$ & \scriptsize \cellcolor[HTML]{F2F2F2}5.7$_{\pm0.3}$ & \scriptsize \cellcolor[HTML]{F2F2F2}15.3$_{\pm0.7}$ & \scriptsize \cellcolor[HTML]{F2F2F2}17.9$_{\pm1.5}$& \scriptsize \cellcolor[HTML]{F2F2F2}9.3$_{\pm0.7}$  & \scriptsize \cellcolor[HTML]{F2F2F2}8.6$_{\pm0.4}$  & \scriptsize \cellcolor[HTML]{F2F2F2}33.1$_{\pm1.7}$ & \scriptsize \cellcolor[HTML]{F2F2F2}44.0$_{\pm3.6}$& \scriptsize \cellcolor[HTML]{F2F2F2}25.6$_{\pm2.0}$ & \scriptsize \cellcolor[HTML]{F2F2F2}18.7$_{\pm0.9}$  \\
\scriptsize \CM, $\tau=3$         & \scriptsize \cellcolor[HTML]{F2F2F2}5.8$_{\pm0.4}$ & \scriptsize \cellcolor[HTML]{F2F2F2}6.4$_{\pm0.6}$ & \scriptsize \cellcolor[HTML]{F2F2F2}4.2$_{\pm0.6}$ & \scriptsize \cellcolor[HTML]{F2F2F2}5.3$_{\pm0.3}$ & \scriptsize \cellcolor[HTML]{F2F2F2}9.3$_{\pm0.9}$  & \scriptsize \cellcolor[HTML]{F2F2F2}11.4$_{\pm1.2}$& \scriptsize \cellcolor[HTML]{F2F2F2}7.1$_{\pm1.1}$  & \scriptsize \cellcolor[HTML]{F2F2F2}6.9$_{\pm0.5}$  & \scriptsize \cellcolor[HTML]{F2F2F2}20.2$_{\pm0.8}$ & \scriptsize \cellcolor[HTML]{F2F2F2}24.7$_{\pm2.3}$& \scriptsize \cellcolor[HTML]{F2F2F2}17.2$_{\pm0.6}$ & \scriptsize \cellcolor[HTML]{F2F2F2}12.7$_{\pm1.7}$  \\
\scriptsize \CM, $\tau=5$         & \scriptsize \cellcolor[HTML]{F2F2F2}4.5$_{\pm0.3}$ & \scriptsize \cellcolor[HTML]{F2F2F2}5.7$_{\pm0.5}$ & \scriptsize \cellcolor[HTML]{F2F2F2}3.8$_{\pm0.4}$ & \scriptsize \cellcolor[HTML]{F2F2F2}4.7$_{\pm0.3}$ & \scriptsize \cellcolor[HTML]{F2F2F2}7.9$_{\pm0.5}$  & \scriptsize \cellcolor[HTML]{F2F2F2}8.5$_{\pm0.7}$ & \scriptsize \cellcolor[HTML]{F2F2F2}5.7$_{\pm0.3}$  & \scriptsize \cellcolor[HTML]{F2F2F2}6.3$_{\pm0.7}$  & \scriptsize \cellcolor[HTML]{F2F2F2}16.6$_{\pm1.0}$ & \scriptsize \cellcolor[HTML]{F2F2F2}16.9$_{\pm0.9}$& \scriptsize \cellcolor[HTML]{F2F2F2}14.7$_{\pm1.3}$ & \scriptsize \cellcolor[HTML]{F2F2F2}9.9$_{\pm0.7}$   \\
\scriptsize \CO, $\tau=1$         & \scriptsize \cellcolor[HTML]{FCE4D6}10.9$_{\pm1.1}$& \scriptsize \cellcolor[HTML]{F2F2F2}15.3$_{\pm0.7}$& \scriptsize \cellcolor[HTML]{F2F2F2}6.9$_{\pm0.7}$ & \scriptsize \cellcolor[HTML]{FCE4D6}7.2$_{\pm0.0}$& \scriptsize \cellcolor[HTML]{F2F2F2}26.7$_{\pm2.5}$ & \scriptsize \cellcolor[HTML]{FCE4D6}41.3$_{\pm3.5}$& \scriptsize 17.9$_{\pm0.7}$                         & \scriptsize \cellcolor[HTML]{F2F2F2}16.3$_{\pm1.1}$ & \scriptsize \cellcolor[HTML]{F2F2F2}60.2$_{\pm4.6}$ & \scriptsize \cellcolor[HTML]{FCE4D6}83.9$_{\pm4.1}$& \scriptsize \cellcolor[HTML]{F2F2F2}50.6$_{\pm3.4}$ & \scriptsize \cellcolor[HTML]{F2F2F2}43.1$_{\pm3.1}$  \\
\scriptsize \CO, $\tau=3$        & \scriptsize 10.3$_{\pm1.5}$                        & \scriptsize \cellcolor[HTML]{F2F2F2}14.6$_{\pm0.2}$& \scriptsize 7.2$_{\pm0.4}$                         & \scriptsize \cellcolor[HTML]{FCE4D6}7.6$_{\pm0.4}$ & \scriptsize \cellcolor[HTML]{F2F2F2}26.3$_{\pm1.9}$ & \scriptsize \cellcolor[HTML]{F2F2F2}40.3$_{\pm2.3}$& \scriptsize \cellcolor[HTML]{F2F2F2}18.9$_{\pm0.9}$ & \scriptsize \cellcolor[HTML]{F2F2F2}15.7$_{\pm3.3}$ & \scriptsize \cellcolor[HTML]{F2F2F2}59.6$_{\pm3.4}$ & \scriptsize \cellcolor[HTML]{F2F2F2}82.4$_{\pm1.6}$& \scriptsize \cellcolor[HTML]{F2F2F2}51.2$_{\pm5.4}$ & \scriptsize \cellcolor[HTML]{F2F2F2}42.1$_{\pm4.3}$  \\
\scriptsize \CO, $\tau=5$         & \scriptsize \cellcolor[HTML]{F2F2F2}9.7$_{\pm1.3}$ & \scriptsize \cellcolor[HTML]{F2F2F2}13.2$_{\pm0.4}$& \scriptsize \cellcolor[HTML]{F2F2F2}6.2$_{\pm1.0}$ & \scriptsize 6.6$_{\pm0.4}$                         & \scriptsize \cellcolor[HTML]{F2F2F2}24.4$_{\pm3.2}$ & \scriptsize \cellcolor[HTML]{F2F2F2}35.4$_{\pm4.6}$& \scriptsize \cellcolor[HTML]{F2F2F2}14.8$_{\pm0.8}$ & \scriptsize \cellcolor[HTML]{F2F2F2}15.0$_{\pm0.4}$ & \scriptsize \cellcolor[HTML]{F2F2F2}58.2$_{\pm5.4}$ & \scriptsize \cellcolor[HTML]{F2F2F2}79.9$_{\pm2.1}$& \scriptsize \cellcolor[HTML]{F2F2F2}47.1$_{\pm3.9}$ & \scriptsize \cellcolor[HTML]{F2F2F2}39.8$_{\pm1.6}$  \\
\scriptsize \LS, $\tau=1$         & \scriptsize \cellcolor[HTML]{F2F2F2}5.9$_{\pm0.7}$ & \scriptsize \cellcolor[HTML]{F2F2F2}7.7$_{\pm0.3}$ & \scriptsize \cellcolor[HTML]{F2F2F2}4.7$_{\pm0.9}$ & \scriptsize \cellcolor[HTML]{F2F2F2}5.2$_{\pm0.2}$ & \scriptsize \cellcolor[HTML]{F2F2F2}11.1$_{\pm1.1}$ & \scriptsize \cellcolor[HTML]{F2F2F2}14.2$_{\pm1.0}$& \scriptsize \cellcolor[HTML]{F2F2F2}8.1$_{\pm0.5}$  & \scriptsize \cellcolor[HTML]{F2F2F2}7.9$_{\pm0.3}$  & \scriptsize \cellcolor[HTML]{F2F2F2}24.8$_{\pm0.8}$ & \scriptsize \cellcolor[HTML]{F2F2F2}32.8$_{\pm0.6}$& \scriptsize \cellcolor[HTML]{F2F2F2}22.8$_{\pm1.6}$ & \scriptsize \cellcolor[HTML]{F2F2F2}15.5$_{\pm0.9}$  \\
\scriptsize \LS, $\tau=3$         & \scriptsize \cellcolor[HTML]{F2F2F2}4.9$_{\pm0.3}$ & \scriptsize \cellcolor[HTML]{F2F2F2}6.4$_{\pm0.2}$ & \scriptsize \cellcolor[HTML]{F2F2F2}4.1$_{\pm0.7}$ & \scriptsize \cellcolor[HTML]{F2F2F2}5.2$_{\pm0.4}$ & \scriptsize \cellcolor[HTML]{F2F2F2}9.5$_{\pm0.3}$  & \scriptsize \cellcolor[HTML]{F2F2F2}11.7$_{\pm0.7}$& \scriptsize \cellcolor[HTML]{F2F2F2}7.3$_{\pm0.3}$  & \scriptsize \cellcolor[HTML]{F2F2F2}7.1$_{\pm0.3}$  & \scriptsize \cellcolor[HTML]{F2F2F2}22.0$_{\pm0.6}$ & \scriptsize \cellcolor[HTML]{F2F2F2}26.6$_{\pm1.0}$& \scriptsize \cellcolor[HTML]{F2F2F2}19.4$_{\pm0.8}$ & \scriptsize \cellcolor[HTML]{F2F2F2}14.3$_{\pm1.9}$  \\
\scriptsize \LS, $\tau=5$         & \scriptsize \cellcolor[HTML]{F2F2F2}5.1$_{\pm0.5}$ & \scriptsize \cellcolor[HTML]{F2F2F2}5.7$_{\pm0.5}$ & \scriptsize \cellcolor[HTML]{F2F2F2}3.9$_{\pm0.1}$ & \scriptsize \cellcolor[HTML]{F2F2F2}4.8$_{\pm0.0}$ & \scriptsize \cellcolor[HTML]{F2F2F2}8.6$_{\pm1.8}$  & \scriptsize \cellcolor[HTML]{F2F2F2}9.6$_{\pm1.2}$ & \scriptsize \cellcolor[HTML]{F2F2F2}6.5$_{\pm0.7}$ & \scriptsize \cellcolor[HTML]{F2F2F2}6.9$_{\pm0.9}$  & \scriptsize \cellcolor[HTML]{F2F2F2}18.1$_{\pm2.5}$ & \scriptsize \cellcolor[HTML]{F2F2F2}22.8$_{\pm0.2}$& \scriptsize \cellcolor[HTML]{F2F2F2}17.2$_{\pm2.2}$ & \scriptsize \cellcolor[HTML]{F2F2F2}14.1$_{\pm0.5}$  \\ \hline
\scriptsize \AT                & \scriptsize \cellcolor[HTML]{FCE4D6}11.3$_{\pm0.9}$& \scriptsize \cellcolor[HTML]{F2F2F2}15.7$_{\pm0.9}$& \scriptsize \cellcolor[HTML]{FCE4D6}11.4$_{\pm0.6}$& \scriptsize \cellcolor[HTML]{FCE4D6}11.1$_{\pm0.3}$& \scriptsize \cellcolor[HTML]{FCE4D6}50.3$_{\pm5.9}$ & \scriptsize \cellcolor[HTML]{FCE4D6}68.9$_{\pm2.5}$& \scriptsize \cellcolor[HTML]{FCE4D6}57.8$_{\pm5.0}$ & \scriptsize \cellcolor[HTML]{FCE4D6}51.6$_{\pm3.6}$ & \scriptsize \cellcolor[HTML]{F8CBAD}94.9$_{\pm0.7}$ & \scriptsize \cellcolor[HTML]{FCE4D6}97.1$_{\pm0.3}$& \scriptsize \cellcolor[HTML]{F4B084}95.5$_{\pm1.1}$ & \scriptsize \cellcolor[HTML]{F8CBAD}93.8$_{\pm0.8}$  \\ \hline
\scriptsize \IR                & \scriptsize \cellcolor[HTML]{FCE4D6}19.1$_{\pm6.1}$& \scriptsize \cellcolor[HTML]{FCE4D6}29.9$_{\pm6.3}$& \scriptsize \cellcolor[HTML]{FCE4D6}16.7$_{\pm5.1}$& \scriptsize \cellcolor[HTML]{FCE4D6}16.7$_{\pm4.7}$& \scriptsize \cellcolor[HTML]{FCE4D6}60.7$_{\pm14.7}$& \scriptsize \cellcolor[HTML]{FCE4D6}82.7$_{\pm9.5}$& \scriptsize \cellcolor[HTML]{FCE4D6}64.1$_{\pm16.7}$& \scriptsize \cellcolor[HTML]{FCE4D6}55.9$_{\pm13.5}$& \scriptsize \cellcolor[HTML]{FCE4D6}91.2$_{\pm7.6}$ & \scriptsize \cellcolor[HTML]{FCE4D6}96.2$_{\pm0.8}$& \scriptsize \cellcolor[HTML]{FCE4D6}92.0$_{\pm5.6}$ & \scriptsize \cellcolor[HTML]{FCE4D6}90.3$_{\pm6.5}$  \\
\scriptsize \JR                & \scriptsize \cellcolor[HTML]{FCE4D6}23.4$_{\pm2.2}$    & \scriptsize \cellcolor[HTML]{FCE4D6}37.3$_{\pm2.3}$    & \scriptsize \cellcolor[HTML]{FCE4D6}20.0$_{\pm1.8}$    & \scriptsize \cellcolor[HTML]{FCE4D6}19.1$_{\pm1.1}$    & \scriptsize \cellcolor[HTML]{FCE4D6}67.3$_{\pm3.5}$     & \scriptsize \cellcolor[HTML]{FCE4D6}88.2$_{\pm0.5}$    & \scriptsize \cellcolor[HTML]{FCE4D6}66.6$_{\pm1.8}$     & \scriptsize \cellcolor[HTML]{FCE4D6}57.7$_{\pm3.5}$     & \scriptsize \cellcolor[HTML]{FCE4D6}93.1$_{\pm1.5}$     & \scriptsize \cellcolor[HTML]{F8CBAD}97.4$_{\pm0.4}$    & \scriptsize \cellcolor[HTML]{FCE4D6}93.5$_{\pm0.3}$     & \scriptsize \cellcolor[HTML]{FCE4D6}91.9$_{\pm1.1}$      \\
\scriptsize \ER                & \scriptsize \cellcolor[HTML]{FCE4D6}12.1$_{\pm2.7}$& \scriptsize \cellcolor[HTML]{FCE4D6}19.6$_{\pm2.4}$& \scriptsize \cellcolor[HTML]{FCE4D6}7.5$_{\pm0.5}$ & \scriptsize \cellcolor[HTML]{FCE4D6}8.1$_{\pm1.5}$ & \scriptsize \cellcolor[HTML]{FCE4D6}32.5$_{\pm9.3}$ & \scriptsize \cellcolor[HTML]{FCE4D6}51.9$_{\pm9.5}$& \scriptsize \cellcolor[HTML]{FCE4D6}22.1$_{\pm4.5}$ & \scriptsize \cellcolor[HTML]{FCE4D6}21.5$_{\pm5.5}$ & \scriptsize \cellcolor[HTML]{FCE4D6}65.7$_{\pm17.9}$& \scriptsize \cellcolor[HTML]{FCE4D6}89.2$_{\pm5.2}$& \scriptsize \cellcolor[HTML]{FCE4D6}55.0$_{\pm8.4}$ & \scriptsize \cellcolor[HTML]{FCE4D6}50.1$_{\pm11.9}$ \\
\scriptsize \SAM               & \scriptsize \cellcolor[HTML]{FCE4D6}22.6$_{\pm2.6}$& \scriptsize \cellcolor[HTML]{FCE4D6}30.1$_{\pm0.5}$& \scriptsize \cellcolor[HTML]{FCE4D6}12.7$_{\pm0.5}$& \scriptsize \cellcolor[HTML]{FCE4D6}11.5$_{\pm0.1}$& \scriptsize \cellcolor[HTML]{FCE4D6}55.5$_{\pm1.9}$ & \scriptsize \cellcolor[HTML]{FCE4D6}72.5$_{\pm3.1}$& \scriptsize \cellcolor[HTML]{FCE4D6}39.4$_{\pm3.4}$ & \scriptsize \cellcolor[HTML]{FCE4D6}34.8$_{\pm3.0}$ & \scriptsize \cellcolor[HTML]{FCE4D6}89.8$_{\pm1.0}$ & \scriptsize \cellcolor[HTML]{FCE4D6}97.1$_{\pm0.7}$& \scriptsize \cellcolor[HTML]{FCE4D6}86.1$_{\pm10.3}$& \scriptsize \cellcolor[HTML]{FCE4D6}82.0$_{\pm13.8}$ \\ \hline
\scriptsize \SAM\&\IR           & \scriptsize \cellcolor[HTML]{F8CBAD}23.7$_{\pm4.1}$& \scriptsize \cellcolor[HTML]{F8CBAD}37.5$_{\pm2.3}$& \scriptsize \cellcolor[HTML]{F8CBAD}22.0$_{\pm2.8}$& \scriptsize \cellcolor[HTML]{F8CBAD}19.7$_{\pm1.5}$& \scriptsize \cellcolor[HTML]{F8CBAD}68.9$_{\pm8.3}$ & \scriptsize \cellcolor[HTML]{F8CBAD}91.1$_{\pm2.3}$& \scriptsize \cellcolor[HTML]{F4B084}73.5$_{\pm6.9}$ & \scriptsize \cellcolor[HTML]{F8CBAD}64.3$_{\pm7.5}$ & \scriptsize \cellcolor[HTML]{FCE4D6}94.6$_{\pm1.3}$ & \scriptsize \cellcolor[HTML]{FCE4D6}96.9$_{\pm0.5}$& \scriptsize \cellcolor[HTML]{F8CBAD}95.4$_{\pm1.2}$ & \scriptsize \cellcolor[HTML]{F4B084}94.1$_{\pm1.3}$  \\
\scriptsize \SAM\&\JR           & \scriptsize \cellcolor[HTML]{F4B084}32.7$_{\pm4.5}$& \scriptsize \cellcolor[HTML]{F4B084}47.3$_{\pm3.1}$& \scriptsize \cellcolor[HTML]{F4B084}27.1$_{\pm3.3}$& \scriptsize \cellcolor[HTML]{F4B084}24.9$_{\pm3.3}$& \scriptsize \cellcolor[HTML]{F4B084}74.3$_{\pm7.3}$ & \scriptsize \cellcolor[HTML]{F4B084}92.7$_{\pm1.5}$& \scriptsize \cellcolor[HTML]{F8CBAD}72.8$_{\pm7.0}$ & \scriptsize \cellcolor[HTML]{F4B084}65.3$_{\pm8.7}$ & \scriptsize \cellcolor[HTML]{F4B084}95.1$_{\pm3.1}$ & \scriptsize \cellcolor[HTML]{F4B084}98.4$_{\pm0.6}$& \scriptsize \cellcolor[HTML]{FCE4D6}95.0$_{\pm3.0}$ & \scriptsize \cellcolor[HTML]{FCE4D6}93.5$_{\pm2.7}$  \\ \hline
\end{tabular}
}
\label{tab:autoattack}
\end{table*}

\subsection{Proof of Transferability Lower Bound}
\vspace{-2mm}
Here we present the proof of \cref{theorem1}.
The following three lemmas are used in the proof.

\begin{lemma}\label{lemma1}
For arbitrary vector $\delta, x, y$, suppose $\|\delta\|_{2} \leq \epsilon, x$ and y are unit vectors, i.e., $\|x\|_{2}=$ $\|y\|_{2}=1$. Let $\cos \langle x, y\rangle=\frac{x \cdot y}{\|x\|_{2} \cdot\|y\|_{2}}$. Let $c$ denote any real number. Then

\begin{align}
\delta \cdot y<c-\epsilon \sqrt{2-2 \cos \langle x, y\rangle} \Rightarrow \delta \cdot x<c.
\end{align}

\end{lemma}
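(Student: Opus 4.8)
The plan is to prove the implication directly, by writing $\delta \cdot x$ as a perturbation of the quantity $\delta \cdot y$ that the hypothesis controls, and then bounding the discrepancy between the two inner products. First I would use the trivial algebraic decomposition $\delta \cdot x = \delta \cdot y + \delta \cdot (x - y)$, which separates the term constrained by assumption from a correction term $\delta \cdot (x-y)$ that I must show is not too large. The whole proof then reduces to giving a sharp upper bound on this correction term in terms of $\epsilon$ and the angle between $x$ and $y$.

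The key step is to estimate $\delta \cdot (x-y)$ via the Cauchy--Schwarz inequality: since $\|\delta\|_2 \le \epsilon$, we have $\delta \cdot (x-y) \le \|\delta\|_2 \,\|x-y\|_2 \le \epsilon\, \|x-y\|_2$. To turn this into the stated form, I would next evaluate $\|x-y\|_2$ explicitly. Expanding the squared norm gives $\|x-y\|_2^2 = \|x\|_2^2 - 2\,(x\cdot y) + \|y\|_2^2$, and invoking the unit-norm hypotheses $\|x\|_2 = \|y\|_2 = 1$ together with the definition $\cos\langle x,y\rangle = (x\cdot y)/(\|x\|_2\|y\|_2) = x\cdot y$ (the denominator being $1$), I obtain $\|x-y\|_2^2 = 2 - 2\cos\langle x,y\rangle$, hence $\|x-y\|_2 = \sqrt{2 - 2\cos\langle x,y\rangle}$.

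Combining the two estimates with the hypothesis finishes the argument in one line: $\delta \cdot x = \delta \cdot y + \delta \cdot (x-y) < \bigl(c - \epsilon\sqrt{2 - 2\cos\langle x,y\rangle}\,\bigr) + \epsilon\sqrt{2 - 2\cos\langle x,y\rangle} = c$, which is exactly the conclusion $\delta \cdot x < c$.

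Honestly, there is no substantive obstacle in this lemma; it is a one-line consequence of Cauchy--Schwarz plus the polarization identity for $\|x-y\|_2^2$. The only points meriting a moment of care are that the square root is well defined, which is immediate since $\cos\langle x,y\rangle \le 1$ forces $2 - 2\cos\langle x,y\rangle \ge 0$, and that I only need the \emph{one-sided} (upper) bound on the signed quantity $\delta \cdot (x-y)$ rather than a bound on its absolute value, which is precisely what Cauchy--Schwarz supplies. The real work lies not here but in the subsequent application: this lemma is the geometric engine that lets the main theorem convert an alignment (cosine-similarity) hypothesis between the surrogate and target gradient directions into a guarantee that a perturbation fooling one model also decreases the margin of the other.
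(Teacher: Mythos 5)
Your proof is correct and follows essentially the same route as the paper's: the identical decomposition $\delta \cdot x = \delta \cdot y + \delta \cdot (x-y)$, followed by bounding $\delta \cdot (x-y) \leq \|\delta\|_2 \|x-y\|_2 \leq \epsilon\sqrt{2-2\cos\langle x,y\rangle}$, where the paper compresses your Cauchy--Schwarz step and the polarization identity $\|x-y\|_2^2 = 2 - 2\cos\langle x,y\rangle$ into the phrase ``by law of cosines.'' Your write-up is in fact slightly more explicit than the paper's about where each hypothesis (unit norms, one-sidedness of the bound) is used.
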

\begin{proof}
$\delta \cdot x=\delta \cdot y+\delta \cdot(x-y)<c-\epsilon \sqrt{2-2 \cos \langle x, y\rangle}+\delta \cdot(x-y)$. By law of cosines, $\delta \cdot(x-y) \leq \|\delta\|_2 \cdot \|(x-y)\|_2 \leq \epsilon \sqrt{2-2 \cos \langle x, y\rangle}$. Hence, $\delta \cdot x<c$
% \vspace{-2mm}
\end{proof}

% \vspace{-1mm}
\begin{lemma}\label{lemma2}
For arbitrary events $A$ and $B$, we have $\operatorname{Pr}(\neg A\cap \neg B)\ge 1-\operatorname{Pr}(A)-\operatorname{Pr}(B)$, where $\operatorname{Pr}(\cdot)$ denotes the probability of a event.
% \vspace{-2mm}
\end{lemma}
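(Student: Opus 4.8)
The plan is to reduce the claim to two elementary facts from measure-theoretic probability: De Morgan's law and the subadditivity (union bound) of a probability measure. The key observation is that the event $\neg A \cap \neg B$ is exactly the complement of $A \cup B$, so computing its probability amounts to controlling $\operatorname{Pr}(A \cup B)$ from above.

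First I would rewrite the left-hand side using De Morgan's identity $\neg A \cap \neg B = \neg(A \cup B)$, which gives
\[
\operatorname{Pr}(\neg A \cap \neg B) = \operatorname{Pr}\bigl(\neg(A \cup B)\bigr) = 1 - \operatorname{Pr}(A \cup B).
\]
Next I would invoke the union bound, $\operatorname{Pr}(A \cup B) \le \operatorname{Pr}(A) + \operatorname{Pr}(B)$, which itself follows from inclusion–exclusion, $\operatorname{Pr}(A \cup B) = \operatorname{Pr}(A) + \operatorname{Pr}(B) - \operatorname{Pr}(A \cap B)$, together with the nonnegativity $\operatorname{Pr}(A \cap B) \ge 0$. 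Substituting this upper bound into the previous display yields
\[
\operatorname{Pr}(\neg A \cap \neg B) = 1 - \operatorname{Pr}(A \cup B) \ge 1 - \operatorname{Pr}(A) - \operatorname{Pr}(B),
\]
which is exactly the desired inequality.

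Since the statement holds for \emph{arbitrary} events $A$ and $B$ with no independence or disjointness assumptions, I would take care that each step uses only properties valid for a general probability measure — complementation ($\operatorname{Pr}(\neg E) = 1 - \operatorname{Pr}(E)$), finite additivity, and nonnegativity — and avoid any hidden reliance on independence. There is no real technical obstacle here; the only thing to watch is the direction of the inequality, ensuring the union bound is applied as an upper bound on $\operatorname{Pr}(A \cup B)$ so that it flips correctly to a lower bound on the complementary probability. This lemma then serves as the final union-bound ingredient for combining the failure probabilities of the surrogate and target models in the proof of \cref{theorem1}.
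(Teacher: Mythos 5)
Your proof is correct and matches the paper's argument in all essentials: both use De Morgan's law $\neg A \cap \neg B = \neg(A\cup B)$, the complementation identity, and the union bound $\operatorname{Pr}(A\cup B)\le \operatorname{Pr}(A)+\operatorname{Pr}(B)$, differing only in the order in which the algebra is arranged. Your version is, if anything, slightly cleaner for additionally noting that the union bound follows from inclusion--exclusion and nonnegativity of $\operatorname{Pr}(A\cap B)$.
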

\begin{proof}
For events $A$ and $B$, we have $\operatorname{Pr}(A\cup B) + \operatorname{Pr}(\neg(A\cup B))=1$. 
And it's true that $\operatorname{Pr}(A) + \operatorname{Pr}(B) \ge \operatorname{Pr}(A\cup B)$ and $\neg (A\cup B)= \neg A \cap \neg B$. Then we have $\operatorname{Pr}(A) + \operatorname{Pr}(B) + \operatorname{Pr}(\neg A \cap \neg B ) \ge \operatorname{Pr}(A\cup 
 B) + \operatorname{Pr}(\neg(A\cup B))=1$, thus $\operatorname{Pr}(\neg A\cap \neg B)\ge 1-\operatorname{Pr}(A)-\operatorname{Pr}(B)$.
\end{proof}

% \vspace{-3mm}
\noindent Note that this is a generalizable property. Given any number of events $A_1, A_2, ..., A_n$, we can also have $\operatorname{Pr}(A_1)+\operatorname{Pr}(A_2)+...+ \operatorname{Pr}(A_n) + \operatorname{Pr}(\neg A_1 \cap \neg A_2 ... \cap \neg A_n)\ge 1$, thus  $\operatorname{Pr}(\neg A_1\cap \neg A_2 ... \neg A_n )\ge 1-\operatorname{Pr}(A_1)-\operatorname{Pr}(A_2)-...-\operatorname{Pr}(A_n)$.\\

\begin{lemma}\label{lemma3}
Let $Z$ be a random variable with support contained in $(-\infty, \epsilon]$. 
Let $c$ be a real constant such that $c < \epsilon$. 
The probability of the event $Z < c$ is upper-bounded by the expectation of the linear function $h(Z)$ as follows:
\[
\operatorname{Pr}(Z < c) \leq \mathbb{E}[h(Z)]
\]
where $h(Z) = \frac{\epsilon - Z}{\epsilon - c}$.
\end{lemma}

\begin{proof}
The probability of an event can be expressed as the expectation of its indicator function. Thus,
\[
\operatorname{Pr}(Z < c) =  \mathbb{E}\big[\mathbb{I}[Z < c]\big]
\]
where $\mathbb{I}[Z < c]$ is the indicator function that has the value of 1 if $Z < c$ and 0 otherwise. We now show that  $\mathbb{I}[Z < c] \leq h(Z)$ always holds.   
Consider two cases:

\begin{enumerate}
    \item \textbf{Case 1: $Z < c$}\\
    In this case, $\mathbb{I}\{Z < c\} = 1$. Since $Z < c$ and $c < \epsilon$, we have $\epsilon - Z > \epsilon - c > 0$. Thus, we have:
    \[ h(Z) = \frac{\epsilon - Z}{\epsilon - c} > 1 \]
    Thus, $\mathbb{I}_{Z < c} \leq h(Z)$ holds.

    \item \textbf{Case 2: $Z \geq c$}\\
    In this case, $\mathbb{I}\{Z < c\} = 0$. Since $Z \in (-\infty, \epsilon]$, so we have  $\epsilon - Z \geq 0$. The $\epsilon - c$ is positive. Therefore:
    \[ h(Z) = \frac{\epsilon - Z}{\epsilon - c} \geq 0 \]
    Thus, $\mathbb{I}[Z < c] \leq h(Z)$ holds.
\end{enumerate}

Since $\mathbb{I}\{Z < c\} \leq h(Z)$ holds pointwise for all values in the support of $Z$, we can apply the monotonicity property of expectation:
\[
\mathbb{E}[\mathbb{I}\{Z < c\}] \leq \mathbb{E}[h(Z)]
\]
Substituting back the definition of probability, we have:
\[
\operatorname{Pr}(Z < c) \leq \mathbb{E}[h(Z)]
\]
\end{proof}

\noindent \textbf{Proof for \cref{theorem1}}

The proof builds upon the derivations in \nips21, with the primary modifications focusing on the definition of smoothness. We omit some intermediate steps to save space.

\begin{proof}
Define auxiliary functions $f$, $g : \mathcal{M} \times \mathcal{L} \rightarrow \mathbb{R}$ such that
\begin{align}
f(x, y) &= \frac{\min_{y' \in \mathcal{L} : y' \neq y} \ell_{\mathcal{F}}(x + \delta, y') - \ell_{\mathcal{F}}(x, y) - \overline{\sigma}_\mathcal{F} \epsilon^{2} / 2}{\left\|\nabla_{x} \ell_{\mathcal{F}}(x, y)\right\|_{2}}, \nonumber \\
g(x, y) &= \frac{\min_{y' \in \mathcal{L} : y' \neq y} \ell_{\mathcal{G}}(x + \delta, y') - \ell_{\mathcal{G}}(x, y) + \overline{\sigma}_\mathcal{G} \epsilon^{2} / 2}{\left\|\nabla_{x} \ell_{\mathcal{G}}(x, y)\right\|_{2}}. \label{eq:f&g-definition}
\end{align}
The functions $f$ and $g$ are orthogonal to the confidence score functions of models $\mathcal{F}$ and $\mathcal{G}$, respectively. We reuse the notation $f$ here in the Appendix.
Note that $c_{\mathcal{F}} = \min_{(x, y) \sim \mathcal{D}} f(x, y)$, $c_{\mathcal{G}} = \max_{(x, y) \sim \mathcal{D}} g(x, y)$.

Observing \cref{def:transferability}, the transferability concerns a successful transfer requiring four events, i.e., $\mathcal{F}(x) = y$, $\mathcal{G}(x) = y$, $\mathcal{F}(x + \delta) \neq y$, and $\mathcal{G}(x + \delta) \neq y$. 
In other words, both $\mathcal{F}$ and $\mathcal{G}$ give the correct prediction for $x$, and the wrong prediction for $x + \delta$. We have:
\begin{align}
&\operatorname{Pr}\left(T_{r}(\mathcal{F}, \mathcal{G}, x, y) = 1\right) \nonumber \\
&= \operatorname{Pr}\left(\mathcal{F}(x) = y \cap \mathcal{G}(x) = y \cap \mathcal{F}(x + \delta) \neq y \cap \mathcal{G}(x + \delta) \neq y\right) \nonumber \\
&\geq 1 - \operatorname{Pr}(\mathcal{F}(x) \neq y) - \operatorname{Pr}(\mathcal{G}(x) \neq y) - \operatorname{Pr}\left(\mathcal{F}(x + \delta) = y\right) \nonumber \\
&\quad - \operatorname{Pr}\left(\mathcal{G}(x + \delta) = y\right) \nonumber \\
&\geq 1 - \gamma_{\mathcal{F}} - \gamma_{\mathcal{G}} - \alpha - \operatorname{Pr}\left(\mathcal{G}(x + \delta) = y\right). \label{eq:transfer-bound}
\end{align}
We explain \cref{eq:transfer-bound} as follows:
\begin{itemize}
    \item In the first inequality of \cref{eq:transfer-bound},  we used the general version for four events of \cref{lemma2}.
Note that $\mathcal{F}(x) \neq y$, $\mathcal{G}(x) \neq y$, $\mathcal{F}(x + \delta) = y$, and $\mathcal{G}(x + \delta) = y$ are the corresponding complementary events of $\mathcal{F}(x) = y$, $\mathcal{G}(x) = y$, $\mathcal{F}(x + \delta) \neq y$, and $\mathcal{G}(x + \delta) \neq y$.

\item In the second inequality of \cref{eq:transfer-bound}, we use the definitions of the model risks $\gamma_{\mathcal{F}}$, $\gamma_{\mathcal{G}}$, and $\alpha$ ($\operatorname{Pr}\left(\mathcal{F}(x + \delta) = y\right) < \alpha$ holds, see \cref{theorem1}).

\end{itemize}

Now consider the probability of two events, $\mathcal{F}(x + \delta$, and $\mathcal{G}(x + \delta) = y $, $\operatorname{Pr}\left(\mathcal{F}(x + \delta) = y\right)$ and $\operatorname{Pr}\left(\mathcal{G}(x + \delta) = y\right)$. 
Given that a model predicts the label for which the training loss is minimized, we have
\begin{equation}
      \mathcal{F}(x + \delta) = y \Longleftrightarrow \ell_{\mathcal{F}}(x + \delta, y) < \min_{y' \in \mathcal{L} : y' \neq y} \ell_{\mathcal{F}}(x + \delta, y'). \label{eq:f_loss_def}
\end{equation}
Similarly, we also have  
\begin{equation}
\mathcal{G}(x + \delta) = y \Longleftrightarrow \ell_{\mathcal{G}}(x + \delta, y) < \min_{y' \in \mathcal{L} : y' \neq y} \ell_{\mathcal{G}}(x + \delta, y').
\label{eq:g_loss_def}
\end{equation}
Following Taylor's Theorem with Lagrange remainder, we have:
\begin{align}
\ell_{\mathcal{F}}(x + \delta, y) &= \ell_{\mathcal{F}}(x, y) + \delta \nabla_{x} \ell_{\mathcal{F}}(x, y) + \frac{1}{2} \xi^{\top} \mathbf{H}_{\mathcal{F}} \xi, \label{eq:Taylor1} \\
\ell_{\mathcal{G}}(x + \delta, y) &= \ell_{\mathcal{G}}(x, y) + \delta \nabla_{x} \ell_{\mathcal{G}}(x, y) + \frac{1}{2} \xi^{\top} \mathbf{H}_{\mathcal{G}} \xi. \label{eq:Taylor2}
\end{align}
In \cref{eq:Taylor1} and \cref{eq:Taylor2}, $\|\xi\| < \|\delta\|$. $\mathbf{H}_{\mathcal{F}}$ and $\mathbf{H}_{\mathcal{G}}$ are Hessian matrices of $\ell_{\mathcal{F}}$ and $\ell_{\mathcal{G}}$, respectively.

Since $\ell_{\mathcal{F}}(x + \delta, y)$ and $\ell_{\mathcal{G}}(x + \delta, y)$ have upper smoothness $\overline{\sigma}_\mathcal{F}$ and $\overline{\sigma}_\mathcal{G}$, the maximum eigenvalues of $\mathbf{H}_{\mathcal{F}}$ and $\mathbf{H}_{\mathcal{G}}$ are bounded. As a result, $\left|\xi^{\top} \mathbf{H}_{\mathcal{F}} \xi\right| \leq \overline{\sigma}_{\mathcal{F}} \cdot \|\xi\|_{2}^{2} \leq \overline{\sigma}_{\mathcal{F}} \epsilon^{2}$. 

\noindent Applying this to \cref{eq:Taylor1} and \cref{eq:Taylor2}, we thus have:
{\small
\begin{align}
\ell_{\mathcal{F}} + \delta \nabla_{x} \ell_{\mathcal{F}} - \frac{1}{2} \overline{\sigma}_\mathcal{F} \epsilon^{2} &\leq \ell_{\mathcal{F}}(x + \delta, y) \leq \ell_{\mathcal{F}} + \delta \nabla_{x} \ell_{\mathcal{F}} + \frac{1}{2} \overline{\sigma}_\mathcal{F} \epsilon^{2}, \label{eq:Taylor1-app} \\ 
\ell_{\mathcal{G}} + \delta \nabla_{x} \ell_{\mathcal{G}} - \frac{1}{2} \overline{\sigma}_\mathcal{G} \epsilon^{2} &\leq \ell_{\mathcal{G}}(x + \delta, y) \leq \ell_{\mathcal{G}} + \delta \nabla_{x} \ell_{\mathcal{G}} + \frac{1}{2} \overline{\sigma}_\mathcal{G} \epsilon^{2}. \label{eq:Taylor2-app}
\end{align}
}\normalsize
Here, we have abbreviated the notations $\nabla_{x} \ell_{\mathcal{G}}(x, y)$ and $\nabla_{x} \ell_{\mathcal{F}}(x, y)$ as $\nabla_{x} \ell_{\mathcal{G}}$ and $\nabla_{x} \ell_{\mathcal{F}}$, respectively, to save space, as often needed hereinafter.\\

Apply the right-hand side of \cref{eq:Taylor1-app}  and use the \cref{eq:f_loss_def}:
\begin{align}
&\operatorname{Pr}\left(\mathcal{F}(x + \delta) = y\right) \nonumber \\
&= \operatorname{Pr}\left(\ell_{\mathcal{F}}(x + \delta, y) < \min_{y' \in \mathcal{L} : y' \neq y} \ell_{\mathcal{F}}(x + \delta, y')\right) \nonumber \\
&\geq \operatorname{Pr}\left(\ell_{\mathcal{F}} + \delta \nabla_{x} \ell_{\mathcal{F}} + \frac{1}{2} \overline{\sigma}_\mathcal{F} \epsilon^{2} < \min_{y' \in \mathcal{L} : y' \neq y} \ell_{\mathcal{F}}(x + \delta, y')\right) \nonumber \\
&= \operatorname{Pr}\left(\delta \cdot \frac{\nabla_{x} \ell_{\mathcal{F}}}{\left\|\nabla_{x} \ell_{\mathcal{F}}\right\|_{2}} < f(x, y)\right), \nonumber \\
&\Rightarrow \operatorname{Pr}\left(\delta \cdot \frac{\nabla_{x} \ell_{\mathcal{F}}}{\left\|\nabla_{x} \ell_{\mathcal{F}}\right\|_{2}} < f(x, y)\right) \leq \alpha. \label{eq:F-leq-alpha}
\end{align}

Apply the left-hand side of \cref{eq:Taylor2-app} and use \cref{eq:g_loss_def}:
\begin{align}
&\operatorname{Pr}\left(\mathcal{G}(x + \delta) = y\right) \nonumber \\
&= \operatorname{Pr}\left(\ell_{\mathcal{G}}(x + \delta, y) < \min_{y' \in \mathcal{L} : y' \neq y} \ell_{\mathcal{G}}(x + \delta, y')\right) \nonumber \\
&\leq \operatorname{Pr}\left(\ell_{\mathcal{G}} + \delta \nabla_{x} \ell_{\mathcal{G}} - \frac{1}{2} \overline{\sigma}_\mathcal{G} \epsilon^{2} < \min_{y' \in \mathcal{L} : y' \neq y} \ell_{\mathcal{G}}(x + \delta, y')\right) \nonumber \\
&= \operatorname{Pr}\left(\delta \cdot \frac{\nabla_{x} \ell_{\mathcal{G}}}{\left\|\nabla_{x} \ell_{\mathcal{G}}\right\|_{2}} < g(x, y)\right) \nonumber \\
&\leq \operatorname{Pr}\left(\delta \cdot \frac{\nabla_{x} \ell_{\mathcal{G}}}{\left\|\nabla_{x} \ell_{\mathcal{G}}\right\|_{2}} < c_{\mathcal{G}}\right). \label{eq:G-upper}
\end{align}
\cref{eq:G-upper} used the definition of $g$  and the definition of $c_{\mathcal{G}}$.

Since $\|\delta\|_{2} \leq \epsilon$, from \cref{lemma1}, we can deduce:
\begin{align}
&\delta \cdot \frac{\nabla_{x} \ell_{\mathcal{G}}}{\left\|\nabla_{x} \ell_{\mathcal{G}}\right\|_{2}} < f(x, y) - \epsilon \sqrt{2 - 2 \underline{\mathcal{S}}_\mathcal{D}\left(\ell_{\mathcal{F}}, \ell_{\mathcal{G}}\right)}, \label{eq:S_F&G1} \\
&\Rightarrow \delta \cdot \frac{\nabla_{x} \ell_{\mathcal{G}}}{\left\|\nabla_{x} \ell_{\mathcal{G}}\right\|_{2}} < f(x, y) - \epsilon \sqrt{2 - 2 \cos \left\langle \nabla_{x} \ell_{\mathcal{F}}, \nabla_{x} \ell_{\mathcal{G}} \right\rangle}, \label{eq:S_F&G2} \\
&\Rightarrow \delta \cdot \frac{\nabla_{x} \ell_{\mathcal{F}}}{\left\|\nabla_{x} \ell_{\mathcal{F}}\right\|_{2}} < f(x, y). \label{eq:S_F&G3}
\end{align}
We explain the deduction as follows:
\begin{itemize}
    \item From \cref{eq:S_F&G1} to \cref{eq:S_F&G2}, we used the definition of $\underline{\mathcal{S}}_\mathcal{D}$, since it indicates that $\underline{\mathcal{S}}_\mathcal{D}\left(\ell_{\mathcal{F}}, \ell_{\mathcal{G}}\right) \leq \cos \left\langle \nabla_{x} \ell_{\mathcal{F}}(x, y), \nabla_{x} \ell_{\mathcal{G}}(x, y) \right\rangle$.
    \item From \cref{eq:S_F&G2} to \cref{eq:S_F&G3}, we applied \cref{lemma1}.
\end{itemize}

 As a result, using \cref{eq:F-leq-alpha}, we have:
\begin{align*}
&\operatorname{Pr}\left(\delta \cdot \frac{\nabla_{x} \ell_{\mathcal{G}}}{\left\|\nabla_{x} \ell_{\mathcal{G}}\right\|_{2}} < f(x, y) - \epsilon \sqrt{2 - 2 \underline{\mathcal{S}}_\mathcal{D}\left(\ell_{\mathcal{F}}, \ell_{\mathcal{G}}\right)}\right) \\
&\leq \operatorname{Pr}\left(\delta \cdot \frac{\nabla_{x} \ell_{\mathcal{F}}}{\left\|\nabla_{x} \ell_{\mathcal{F}}\right\|_{2}} < f(x, y)\right) \leq \alpha.
\end{align*}
Since $c_\mathcal{F} \leq f(x, y)$, we also have:
\begin{equation}
    \operatorname{Pr}\left(\delta \cdot \frac{\nabla_{x} \ell_{\mathcal{G}}}{\left\|\nabla_{x} \ell_{\mathcal{G}}\right\|_{2}} < c_{\mathcal{F}} - \epsilon \sqrt{2 - 2 \underline{\mathcal{S}}_\mathcal{D}\left(\ell_{\mathcal{F}}, \ell_{\mathcal{G}}\right)}\right) \leq \alpha. \label{eq:Z_lower}
\end{equation}
% \vspace{2mm}

We now denote $Z = \delta \cdot \frac{\nabla_{x} \ell_{\mathcal{G}}}{\left\|\nabla_{x} \ell_{\mathcal{G}}\right\|_{2}}$, so \cref{eq:Z_lower} can be re-written as 
\begin{equation}
    \operatorname{Pr}\left(Z < c_{\mathcal{F}} - \epsilon \sqrt{2 - 2 \underline{\mathcal{S}}_\mathcal{D}\left(\ell_{\mathcal{F}}, \ell_{\mathcal{G}}\right)}\right) \leq \alpha. \label{eq:Z_lower_new}
\end{equation}

Since $\|\delta\|_2 \leq \epsilon$, we have $Z \in [-\epsilon, \epsilon]$,  meaning $Z$ has a lower-bound of $-\epsilon$. 
Thus, using \cref{eq:Z_lower_new}, the expectation of $Z$ can be lower-bounded:
\begin{equation}
\mathbb{E}\left[Z\right] \geq -\epsilon \alpha + \left(c_{\mathcal{F}} - \epsilon \sqrt{2 - 2 \underline{\mathcal{S}}_\mathcal{D}\left(\ell_{\mathcal{F}}, \ell_{\mathcal{G}}\right)}\right)(1 - \alpha). \label{eq:expectation}
\end{equation}

On the other hand, since $Z \in [-\epsilon, \epsilon] \subset (-\infty, \epsilon]$ and $c_{\mathcal{G}} < \epsilon$, so we can apply \cref{lemma3} to obtain:
\[
\operatorname{Pr}(Z < c_{\mathcal{G}}) \leq \mathbb{E}[h(Z)] = \mathbb{E}\left[ \frac{\epsilon - Z}{\epsilon - c_{\mathcal{G}}} \right] = \frac{\epsilon - \mathbb{E}[Z]}{\epsilon - c_{\mathcal{G}}}.
\]
Using the lower bound on $\mathbb{E}[Z]$ from \eqref{eq:expectation},
we have:
{\small
\begin{align*}
\operatorname{Pr}(Z < c_{\mathcal{G}}) &\leq \frac{\epsilon - \left[ -\epsilon \alpha + \left( c_{\mathcal{F}} - \epsilon \sqrt{2 - 2 \underline{\mathcal{S}}_\mathcal{D}\left(\ell_{\mathcal{F}}, \ell_{\mathcal{G}}\right)} \right) (1 - \alpha) \right]}{\epsilon - c_{\mathcal{G}}} \\
&= \frac{\epsilon (1 + \alpha) - \left( c_{\mathcal{F}} - \epsilon \sqrt{2 - 2 \underline{\mathcal{S}}_\mathcal{D}\left(\ell_{\mathcal{F}}, \ell_{\mathcal{G}}\right)} \right) (1 - \alpha)}{\epsilon - c_{\mathcal{G}}}.
\end{align*}
}\normalsize
Thus, from \eqref{eq:G-upper}, we have:

\begin{equation}
\begin{aligned}
&\operatorname{Pr}\left(\mathcal{G}(x + \delta) = y\right) \\
&\leq \operatorname{Pr}(Z < c_{\mathcal{G}}) \\
&\leq \frac{\epsilon (1 + \alpha) - \left( c_{\mathcal{F}} - \epsilon \sqrt{2 - 2 \underline{\mathcal{S}}_\mathcal{D}\left(\ell_{\mathcal{F}}, \ell_{\mathcal{G}}\right)} \right) (1 - \alpha)}{\epsilon - c_{\mathcal{G}}}.
\end{aligned} \label{eq:bound-classification}
\end{equation}
We conclude our proof by plugging \eqref{eq:bound-classification} into \eqref{eq:transfer-bound}.
\end{proof}

\subsection{Proof of connection between \IR~and \JR}
% \vspace{-2mm}
\begin{proposition} \label{proposition1}
Let a neural network parameterized by $\theta$, and $f_\theta$ represents its logit network. Given a sample $(x, y)$, if $\|\nabla_x f_\theta(x)\|_F \rightarrow 0$, $\|\nabla_x \ell(f_\theta(x),y)\| \rightarrow 0$, where $\ell$ denotes the cross-entropy loss function.
\end{proposition}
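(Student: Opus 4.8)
The plan is to reduce the claim to the chain rule together with a uniform bound on the gradient of cross-entropy with respect to the logits. First I would view $\ell(f_\theta(x),y)$ as a composition: the inner map is $x\mapsto f_\theta(x)\in\mathbb{R}^m$ and the outer map is the softmax cross-entropy $f\mapsto\ell(f,y)$. By the chain rule, $\nabla_x\ell(f_\theta(x),y)=\big(\nabla_x f_\theta(x)\big)^{\top}\nabla_f\ell(f,y)$, where $\nabla_x f_\theta(x)$ is the input--output Jacobian (the same object whose Frobenius norm appears in \JR) and $\nabla_f\ell$ is the logit gradient.

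Next I would compute $\nabla_f\ell$ explicitly. Writing $p=\mathrm{softmax}(f)$, the standard cross-entropy calculation gives $\nabla_f\ell(f,y)=p-y$. The key observation is that this quantity is bounded by an absolute constant, independent of $x$ and $\theta$: both $p$ and $y$ lie in the probability simplex (this holds for soft labels as well), so each coordinate of $p-y$ lies in $[-1,1]$ and $\|p-y\|_2\le\sqrt{2}$ (indeed $\le\sqrt{m}$ trivially).

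Then I would apply submultiplicativity of the operator norm together with $\|A\|_{\mathrm{op}}\le\|A\|_F$. Taking $A=\nabla_x f_\theta(x)$ and $v=p-y$, the estimate $\|A^{\top}v\|_2\le\|A\|_{\mathrm{op}}\|v\|_2\le\|A\|_F\|v\|_2$ yields
\begin{equation*}
\|\nabla_x\ell(f_\theta(x),y)\|_2\le\|\nabla_x f_\theta(x)\|_F\,\|p-y\|_2\le\sqrt{2}\,\|\nabla_x f_\theta(x)\|_F.
\end{equation*}
Hence, as $\|\nabla_x f_\theta(x)\|_F\to0$, the left-hand side is squeezed to $0$, which is exactly the claim.

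The argument is essentially a one-line chain-rule estimate, so there is no serious obstacle; the only points requiring care are establishing that the logit gradient $\nabla_f\ell=p-y$ is bounded by an absolute constant (so that the vanishing Jacobian norm actually drives the bound) and being consistent about the Jacobian-transpose orientation in the chain rule. Since the bound holds for a fixed sample $(x,y)$ with a constant independent of the Jacobian, no extra uniformity in $(x,\theta)$ beyond the simplex bound is needed to conclude the limit.
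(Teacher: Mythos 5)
Your proof is correct, and it is in fact tighter than the paper's. Both arguments rest on the same underlying chain-rule computation: the paper differentiates the cross-entropy componentwise and obtains $\nabla_x \ell = -\nabla_x z_y + \sum_i p_i \nabla_x z_i$ with softmax weights $p_i = e^{z_i}/\sum_j e^{z_j}$, which is exactly your factorization $\big(\nabla_x f_\theta(x)\big)^{\top}(p-y)$ written out coordinate by coordinate. The bounding steps, however, differ. The paper applies the triangle inequality together with the fact that the softmax weights sum to one, arriving at $\|\nabla_x \ell\| < \sum_i \|\nabla_x z_i\| + \|\nabla_x z_y\|$, and then identifies $\sum_i \|\nabla_x z_i\|$ with $\|\nabla_x f_\theta(x)\|_F$; strictly speaking that quantity is the sum of row norms (an $L_{2,1}$-type norm), not the Frobenius norm, so the paper's conclusion implicitly relies on the equivalence of these norms in finite dimension — harmless for the limit statement, but left unsaid. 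Your route keeps the matrix--vector factorization intact, bounds $\|p-y\|_2 \le \sqrt{2}$ uniformly over the probability simplex, and invokes $\|J^{\top}v\|_2 \le \|J\|_{\mathrm{op}}\|v\|_2 \le \|J\|_F\|v\|_2$, which yields the clean quantitative inequality $\|\nabla_x \ell(f_\theta(x),y)\|_2 \le \sqrt{2}\,\|\nabla_x f_\theta(x)\|_F$ with the genuine Frobenius norm, and it extends verbatim to soft labels. What the paper's expansion buys is only that it stays fully elementary, never invoking an operator-norm inequality; what yours buys is an explicit constant, correct handling of the Frobenius norm, and immediate generality.
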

\begin{proof}
	Define the logit output of $f_\theta$ given input feature $x$ with respect to each class as $z_1, z_2,...,z_k$, including the according logit of target class $z_y$. Accordingly, $\nabla_x f_\theta(x) = (z_1, z_2,...,z_k)^T$. Thus, we have
% {\small 
\begin{align}
  \|\nabla_x f_\theta(x)\|_F &= \sum_i^k \|\nabla_x z_i\| \label{eq:F-norm}
\end{align} 
% }
The cross-entropy loss of $(x,y)$ is computed as follows:
% {\small 
\begin{align}
\ell(f_\theta(x),y)=-\log (\frac{e^{z_{y}}}{\sum_i^k e^{z_{i}}}) = -z_{y}+\log (\sum_i^k e^{z_{i}}).\nonumber
\end{align}
% }
As a result, the gradient \wrt $x$ $\nabla_x \ell(f_\theta(x),y)$ is
% {\small
\begin{align}
\nabla_x \ell(f_\theta(x),y) = - \nabla_x z_y + \frac{\sum_i^k e^{z_i} \nabla_x z_i }{\sum_i^k e^{z_i}} \nonumber
\end{align}
% }
Thus, 
% {\small
\begin{align}
  \|\nabla_x \ell(f_\theta(x),y)\| &=  \|\frac{1}{\sum_i^k e^{z_i}} \cdot \sum_i^k e^{z_i} \nabla_x z_i  - \nabla_x z_y\|  \nonumber \\
  								   & \leq \frac{1}{\sum_i^k e^{z_i}} \cdot \|\sum_i^k e^{z_i} \nabla_x z_i\| + \|\nabla_x z_y\|  \nonumber \\
  								   & \leq \frac{1}{\sum_i^k e^{z_i}} \cdot \sum_i^k e^{z_i} \|\nabla_x z_i\| + \|\nabla_x z_y\|  \nonumber \\
  								   & < \sum_i^k \|\nabla_x z_i\| + \|\nabla_x z_y\| \label{eq:cross-entropy-norm}
\end{align}
% } 
Note that if $\sum_i^k \|\nabla_x z_i\|  \rightarrow 0, \|\nabla_x z_y\|\rightarrow 0$.
Observing \cref{eq:F-norm,eq:cross-entropy-norm}, we can conclude that if $\|\nabla_x f_\theta(x)\|_F \rightarrow 0$, $\|\nabla_x \ell(f_\theta(x),y)\| \rightarrow 0$.
\end{proof}

\end{document}